\theoremstyle{plain}
\newtheorem{theorem}{Theorem}
\newtheorem{lemma}{Lemma}
\newtheorem*{theorem*}{Theorem}
\newtheorem{example}{Example}
\def\layersep{3.2cm}
\tikzset{
	>=stealth',
	punkt/.style={
		rectangle,
		rounded corners,
		draw=black, very thick,
		text width=6.5em,
		minimum height=2em,
		text centered},
	pil/.style={
		->,
		thick,
		shorten <=2pt,
		shorten >=2pt,}
	state/.style={circle,draw,minimum size=6ex},
	arrow/.style={-latex, shorten >=1ex, shorten <=1ex}
}
\newcommand{\rulesep}{\unskip\ \vrule\ }
\newcommand\pgfmathsinandcos[3]{%
	\pgfmathsetmacro#1{sin(#3)} 
	\pgfmathsetmacro#2{cos(#3)}}
\title{Reachable Sets of Classifiers and Regression Models: (Non-)Robustness Analysis and Robust Training}
\author{%
  Anna-Kathrin Kopetzki \\
  Department of Informatics \\
  Technical University of Munich \\
  Munich, Germany \\
  \texttt{kopetzki@in.tum.de} \\
  \And
  Stephan G\"unnemann \\
  Department of Informatics \\
  Technical University of Munich \\
  Munich, Germany \\
  \texttt{guennemann@in.tum.de} \\
}
\begin{document}

\maketitle

\begin{abstract}

Neural networks achieve outstanding accuracy in classification and regression tasks. 
However, understanding their behavior still remains an open challenge that requires questions to be addressed on the robustness, explainability and reliability of predictions. 
We answer these questions by computing \emph{reachable sets} of neural networks, i.e. sets of outputs resulting from continuous sets of inputs. 

We provide two efficient approaches that lead to over- and under-approxima\-tions of the reachable set. 
This principle is highly versatile, as we show. 
First, we use it to analyze and enhance the robustness properties of both classifiers and regression models. This is in contrast to existing works, which are mainly focused on classification. 
Specifically, we verify (non-)robustness, propose a robust training procedure, and show that our approach outperforms adversarial attacks as well as state-of-the-art methods of verifying classifiers for non-norm bound perturbations.
Second, we provide techniques to distinguish between reliable and non-reliable predictions for unlabeled inputs, to quantify the influence of each feature on a prediction, and compute a feature ranking.

\end{abstract}

\section{Introduction}
\label{sec:intro}

Neural networks are widely used in classification and regression tasks. However, understanding their behavior remains an open challenge and raises questions concerning the robustness, reliability and explainability of their predictions. We address these issues by studying the principle of reachable sets of neural networks: Given a \textit{set} of inputs, what is the \textit{set} of outputs of the neural network. 

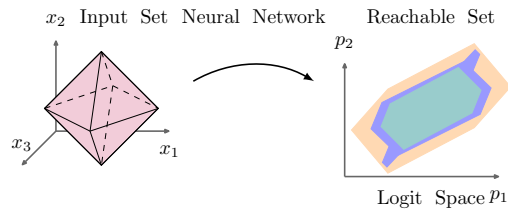
\begin{wrapfigure}{r}{0.5\textwidth}
	\centering
	\resizebox{0.5\textwidth}{!}{
		\begin{tikzpicture}[>={Stealth[inset=0pt,length=3.5pt,round]}, scale=0.4]

\coordinate (o1) at (0.5,-1.0);
\coordinate (x1) at (5.5,-1.0);
\coordinate (y1) at (0.5,3.0);
\coordinate (z1) at (-1.0, -2.5);

\coordinate (c1) at (3.5,2.0);
\coordinate (c1label) at (3.5,5.0);

\coordinate (gin1) at (0.6,1.52);
\coordinate (gin2) at (1.2,0.1);
\coordinate (gin3) at (0.8,-0.8);

\draw [->, thick, black!60] (o1) -- (x1) node[anchor=north] {$ $};
\draw [->, thick, black!60] (o1) -- (y1) node[anchor=north] {$ $};
\draw [->, thick, black!60] (o1) -- (z1) node[anchor=north] {$ $};

\coordinate (A1) at (0,0);
\coordinate (A2) at (3.0,1.0);
\coordinate (A3) at (5,0);
\coordinate (A4) at (2,-1);
\coordinate (B1) at (2.5,2.5);
\coordinate (B2) at (2.5,-2.5);

\filldraw[dashed, fill=purple!20] (A1) -- (A2) -- (A3);
\filldraw[fill=purple!20] (A1) -- (A4) -- (A3);
\filldraw[dashed, fill=purple!20] (B1) -- (A2) -- (B2);
\filldraw[fill=purple!20] (B1) -- (A4) -- (B2);
\filldraw[fill=purple!20] (B1) -- (A1) -- (B2) -- (A3) --cycle;
\draw[dashed] (A1) -- (A2) -- (A3);
\draw (A1) -- (A4) -- (A3);
\draw[dashed] (B1) -- (A2) -- (B2);
\draw (B1) -- (A4) -- (B2);
\draw (B1) -- (A1) -- (B2) -- (A3) --cycle;

\node[label=below:$\Huge{\mathrm{Input~Set}}$] (cnode) at (c1label) {}; %

\node[label=below:$\Huge{x_1}$] (cnode) at (x1) {}; %
\node[label=above:$\Huge{x_2}$] (cnode) at (y1) {}; %
\node[label=above:$\Huge{x_3}$] (cnode) at (z1) {}; %

\coordinate (c2) at (9.2,1.0);
\coordinate (c2label) at (9.2,5.0);

\coordinate (n1) at (-2.0,-2.0);
\coordinate (n2) at (-2.0,2.0);
\coordinate (n3) at (2.0,1.0);
\coordinate (n4) at (2.0,-1.0);

\coordinate (n5) at (3.0,0.0);

\draw [arrow, thick, bend left, bend angle=45]  ($(c2) - (n5)$) to ($(c2) + (n5)$);

\node[label=below:$\Huge{\mathrm{Neural~Network}}$] (cnode) at (c2label) {}; %

\coordinate (o3) at (13.2,-3.0);
\coordinate (o3label) at (17.0,-3.0);
\coordinate (x3) at (20.0,-3.0);
\coordinate (y3) at (13.2,2.0);

\coordinate (c3) at (17.0,0.0);
\coordinate (c3label) at (17.0,5.0);

\coordinate (go1) at (0.8,0.9);
\coordinate (go2) at (1.9,1.0);
\coordinate (go3) at (0.8,-0.95);

\coordinate (gu1) at (0.2,0.5);
\coordinate (gu2) at (1.5,0.8);
\coordinate (gu3) at (0.5,-0.5);

\coordinate (g1) at (0.24,0.58);
\coordinate (g2) at (1.8,0.9);
\coordinate (g3) at (0.6,-0.58);

\coordinate (h1) at (0.24,0.58);
\coordinate (h2) at (1.8,1.9);
\coordinate (h3) at (0.1,-0.1);

\filldraw [draw=orange!30,fill=orange!30]
($(c3) + (go1) + (go2) + (go3)$) --
($(c3) + (go1) + (go2) - (go3)$) --
($(c3) + (go1) - (go2) - (go3)$) --
($(c3) - (go1) - (go2) - (go3)$) --
($(c3) - (go1) - (go2) + (go3)$) --
($(c3) - (go1) + (go2) + (go3)$) --
cycle;
\filldraw [draw=blue!40,fill=blue!40]
($(c3) + (h1) + (h2) + (h3)$) --
($(c3) + (h1) + (h2) - (h3)$) --
($(c3) + (h1) - (h2) - (h3)$) --
($(c3) - (h1) - (h2) - (h3)$) --
($(c3) - (h1) - (h2) + (h3)$) --
($(c3) - (h1) + (h2) + (h3)$) --
cycle;
\filldraw [draw=blue!40,fill=blue!40]
($(c3) + (g1) + (g2) + (g3)$) --
($(c3) + (g1) + (g2) - (g3)$) --
($(c3) + (g1) - (g2) - (g3)$) --
($(c3) - (g1) - (g2) - (g3)$) --
($(c3) - (g1) - (g2) + (g3)$) --
($(c3) - (g1) + (g2) + (g3)$) --
cycle;
\filldraw [draw=teal!40,fill=teal!40]
($(c3) + (gu1) + (gu2) + (gu3)$) --
($(c3) + (gu1) + (gu2) - (gu3)$) --
($(c3) + (gu1) - (gu2) - (gu3)$) --
($(c3) - (gu1) - (gu2) - (gu3)$) --
($(c3) - (gu1) - (gu2) + (gu3)$) --
($(c3) - (gu1) + (gu2) + (gu3)$) --
cycle;

\node[label=below:$\Huge{\mathrm{Reachable~Set}}$] (cnode) at (c3label) {}; %

\draw [->, thick, black!60] (o3) -- (x3) node[anchor=north] {$ $};
\draw [->, thick, black!60] (o3) -- (y3) node[anchor=north] {$ $};
\node[label=below:$\Huge{p_1}$] (cnode) at (x3) {}; %
\node[label=above:$\Huge{p_2}$] (cnode) at (y3) {}; %
\node[label=below:$\Huge{\mathrm{Logit~Space}}$] (cnode) at (o3label) {}; %

\end{tikzpicture}
	}
	\caption{
		\textcolor{orange}{Over}-/\textcolor{teal}{Under}-approximation of the \textcolor{blue}{reachable set}.
	}
	\label{fig:z_reachset}
\end{wrapfigure}

Methods that compute an exact reachable set \cite{exactreach} are not feasible, even for tiny neural networks \cite{julia_toolbox}. 
In this study, we aim to approximate the reachable set such that it can be computed for neural networks used on standard data sets. More specifically, we investigate this problem in the context of ReLU neural networks, which constitute the most widely used class of networks.
To allow flexibility regarding inputs, we propagate a set of points defined by a zonotope through the neural network. As the ReLU operation can result in non-convex sets, we derive under-approximated or over-approximated output sets (see Figure \ref{fig:z_reachset}). The resulting sets are used to analyze and enhance neural network properties (see Section~\ref{sec:app_res}).

Overall, our main contributions are: 
(i) Two efficient approaches RsO and RsU (Reachable set Over- and Under-approximation) of approximating the reachable set of a neural network; 
(ii) Classification: Techniques of applying RsU and RsO to (non-)robustness verification, robust training, comparison with attacks, and state-of-the-art verification methods. 
(iii) Regression: an approach for analyzing and enhancing the robustness properties of regression models. 
(iv) Explainability/Reliability: a method of distinguishing between reliable and non-reliable predictions as well as a technique of quantifying and ranking features w.r.t.\ their influence on a prediction.

\section{Related Work}
\label{sec:relatedwork}

\textbf{Reachable Sets.} 
Computing the exact reachable set of a neural network as \cite{exactreach} is not applicable even with tiny networks, as shown in \cite{julia_toolbox}. 
Some techniques that approximate reachable sets, such as~\cite{ruan2018}, cannot handle the common robustness definition.
Most approaches that deal with the reachable sets of a neural network emerged from robustness verification.
The study that is the most closely related to our over-approximation approach RsO is~\cite{gehr2018}. 
Further developments of this technique include bounds \cite{deepzono, deeppoly, refinezono, singh2019krelu}. In addition, set-based approaches are used for robust training~\cite{mirman2018, gowal2018}. 
Our work goes beyond the existing approaches. 
First, our over-approximations are (by construction) subsets of the ones computed in~\cite{gehr2018} and thus tighter. 
Second, in comparison to the improvements presented in \cite{deepzono, deeppoly, refinezono}, our approaches do not require bounds on the layer input. %
Third, in addition to over-approximations, we provide an approach to under-approximate the reachable~set. 

\textbf{(Non-)Robustness Verification.} 
Reachable sets are applicable to (non-) robustness verification (see Section~\ref{sec:theo_intro}). 
Other expensive robustness verification methods are based on SMT solvers \cite{katz2017, ehlers2017, bunel2018}, mixed integer linear programming \cite{tjeng2019} or Lipschitz optimization \cite{ruan2018}. 
One family of verification approaches search for adversarial examples by solving the constrained optimization problem of finding a sample that is close to the input, but labeled differently. The search space, i.e. an over-approximation of the reachable set of the neural network is defined by the constraints. The distance of the samples to an input point is usually bound by a norm that the optimization problem can deal with, such as $L_{\infty}$-norm \cite{wong2017, raghunathan2018, bastani2016, katz2017, steinhardt2017} or $L_2$-norm \cite{hein2017}. 
One drawback of these approaches is the strong norm-based restriction on the inputs. 
Our approaches can handle input sets equivalent to norms as well as input sets that couple features and thus allow complex perturbations such as different brightness of pictures. 

The complement of robustness verification are adversarial attacks, i.e. points close to an input that are assigned a different label. Adversarial attacks compute a single point within the reachable set, without explicitly computing the reachable set. 
There are various ways of designing attacks, one of the strongest being the projected gradient descent attack (PGD) \cite{pgd_attack}. %
In contrast to attacks, our RsU approach aims to find an entire set of predictions corresponding to an input set. 

It should be noted that, all the above principles are designed for classification tasks. In contrast, our approach is naturally suited for regression as well. To further highlight the versatility of our method, we show how to apply it to explaining predictions and to distinguishing between reliable and non-reliable predictions.

\section{Reachable Sets of Neural Networks}
\label{sec:theo_intro}

The reachable set $O$ w.r.t.\ an input set $I$ of a neural network~$f$ is its output set, i.e. $O=\{f(x)\mid x\in I\}$. Computing the exact reachable set of a neural network is challenging, as proving simple properties of a neural network is already an NP-complete problem \cite{katz2017}. 
Under-approximations $\hat{O}_u \subseteq O$ produce a set of points that can definitely be reached with respect to the input, while over-approximations cover all points that might possibly be reached $O \subseteq \hat{O}_o$ (see Figure~\ref{fig:z_reachset}).

We propose approximating the reachable set by propagating the input set \textit{layer-by-layer} through the neural network.
In each layer, the input set is first subjected to the linear transformation defined by weights and biases. 
This linear transformation is computed \textit{exactly and efficiently} for the zonotope-based set representations we exploit. 
Then, the ReLU activation function is applied. 
Since applying ReLU onto a convex set can result in a non-convex set, we approximate convex subsets. Specifically, we propose an analytical solution for the over-approximations and an efficient linear optimization problem formulation for the under-approximations.

\textbf{Definition of Input Sets.}
Our approaches operate directly on sets and require an efficient and flexible set representation. 
For this, we use zonotopes, as they are closed under linear transformation and their G-representation provides a compact representation in high-dimensional spaces. 
Furthermore, they allow complex and realistic perturbations to be defined that couple input features such as different light conditions on pictures (in short: we go beyond simple and unrealistic norm constraints). 

The G-representation of a $d$-dimensional zonotope~$\hat{Z}$ with~$n$ generators is defined by a row-vector, the center~$\hat{c} \in \mathbb{R}^D$ and a matrix~$\hat{G} \in \mathbb{R}^{n \times D}$. The rows of this matrix contain the generators $\hat{g}_{i}$. The set of all points within~$\hat{Z}$ is:\vspace*{-2mm}
\begin{equation}
\begin{aligned}
	\hat{Z} = (\hat{c} \mid \hat{G}) := \left\{ \hat{c} + \sum_{i=1}^{n} \hat{\beta}_i \hat{g}_i \mid  \hat{\beta}_i \in [-1,1] \right\}\subset \mathbb{R}^D. \label{eq:zonotope}
\end{aligned}
\end{equation}

\textbf{Propagating Sets through ReLU Networks.}
In this paper we focus on ReLU neural networks, as they are not only widely used but also powerful.
A neural network consists of a series of functional transformations, in which each layer~$l$ (of $n_l$ neurons) receives the input~$x \in \mathbb{R}^{n_{l-1}}$ and produces the output by first subjecting the input to a linear transformation defined by the weights~$W^l$ and bias~$b^l$, and then %
applying $\mathrm{ReLU}$. %
In the final layer, no activation function is applied, and the output stays in the logit-space.
Thus, starting with the \emph{input set}~$\hat{Z}_0$ a series of alternating operations is obtained: 
$
	\hat{Z}^0 \overset{W^1,b^1}{\rightarrow} Z^1
	\overset{\mathrm{ReLU}}{\rightarrow} \hat{Z}^1
	\overset{W^2,b^2}{\rightarrow} Z^2
	\overset{\mathrm{ReLU}}{\rightarrow} \dots 
	\overset{W^L,b^L}{\rightarrow} Z^L,
$
where $Z^l$ denotes the set after the linear transformation, $\hat{Z}^l$ denotes the set after the ReLU, and $Z^L$ is the reachable set (output layer). 
Since zonotopes are closed under linear transformations, applying weights and bias of layer~$l$ to zonotope~$\hat{Z}^{l-1} = (\hat{c}^{l-1} \mid \hat{G}^{l-1})$ results in
\begin{equation}
\begin{aligned}
	Z^{l} = \left( c^{l} \mid G^{l} \right) = \left( W^l \hat{c}^{l-1} + b^l \mid \hat{G}^{l-1} W^{l T} \right). 
\end{aligned}
\label{eq:lintrans}
\end{equation}

Obtaining $\mathrm{ReLU} (Z^l)$ is challenging, as it may be a non-convex set, as illustrated in Figure~\ref{fig:relu_z}. 
It is inefficient to further propagate the non-convex set~$\mathrm{ReLU}(Z^l)$ through the neural network. 
Therefore, our core idea is to approximate~$\mathrm{ReLU} (Z^l)$ and use this as the input to the next layer. 
More precisely, we propose two methods: RsO (reachable set over-approximation) and RsU (reachable set under-approximation). 
RsO obtains a superset of ~$\mathrm{ReLU} (Z^l)$ in each layer~$l$, while RsU returns a subset. %
Using RsO within each layer ensures that no points are missed and that the output set captures all reachable points. 
Equivalently, applying RsU within each layer results in an output set that is a subset of the exact reachable set, i.e. contains the points that will definitely be reached. Pseudocode for RsO, RsU and propagating a zonotope through the neural network is provided in the appendix (see Section~\ref{sec:pseudocode}).

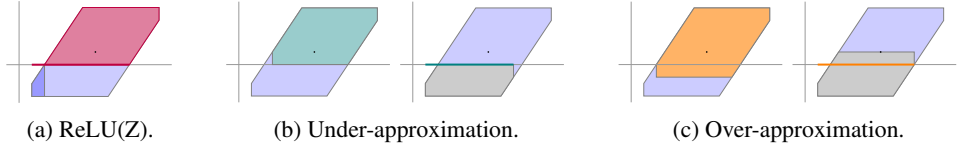
\begin{figure*}[t]
	\centering
	\begin{subfigure}{0.16087\textwidth}
		\resizebox{\textwidth}{!}{\begin{tikzpicture}[>={Stealth[inset=0pt,length=3.5pt,round]}, scale=0.76]

\coordinate (c) at (0,0);
\coordinate (g1) at (3.0,0.0);
\coordinate (g2) at (2.0,3.0);
\coordinate (g3) at (0.0,0.5);

\coordinate (g2p) at (1.0, 1.5);

\coordinate (e1) at ($(c) + (g1) + (g2) - (g3)$);
\coordinate (e2) at ($(c) + (g1) + (g2) + (g3)$);
\coordinate (e3) at ($(c) - (g1) + (g2) + (g3)$);
\coordinate (e4) at (-12/3, -1);
\coordinate (e5) at (8/3, -1);
\coordinate (e6) at (-5,-1);

\coordinate (x1) at(-7,-1);
\coordinate (x2) at(6,-1);
\coordinate (y1) at(-6,-4);
\coordinate (y2) at(-6,4);

\filldraw [draw=black!50,fill=blue!20]
($(c) + (g3) + (g2) + (g1)$) --
($(c) + (g3) + (g2) - (g1)$) --
($(c) + (g3) - (g2) - (g1)$) --
($(c) - (g3) - (g2) - (g1)$) --
($(c) - (g3) - (g2) + (g1)$) --
($(c) - (g3) + (g2) + (g1)$) --
cycle;

\draw [thick, black!40] (x1) -- (x2) node[anchor=north] {$ $};
\draw [thick, black!40] (y1) -- (y2) node[anchor=north] {$ $};

\coordinate (f1) at (-4.0, -3.5);
\coordinate (f2) at (0.0, -3.5);
\filldraw [draw=black!50,fill=blue!40]
(e4) --
(f1) --
($(c) - (g3) - (g2) - (g1)$) --
($(c) + (g3) - (g2) - (g1)$) --
cycle;

\filldraw [draw=purple!100, fill=purple!50]
(e1) --
(e2) --
(e3) --
(e4) --
(e5) --
cycle;

\draw [-, line width=1.2mm, purple!100] (e5) -- (e6);

\node[inner sep=1pt, circle, fill=black, label=below:$ $] (cnode) at (c) {}; %

\end{tikzpicture}}
		\caption{ReLU(Z).}
		\label{fig:relu_z}
	\end{subfigure}
	\hspace{0.5cm}
	\begin{subfigure}{0.325\textwidth}
		\resizebox{0.49\textwidth}{!}{\begin{tikzpicture}[>={Stealth[inset=0pt,length=3.5pt,round]}, scale=0.76]

\coordinate (c) at (0,0);
\coordinate (g1) at (3.0,0.0);
\coordinate (g2) at (2.0,3.0);
\coordinate (g3) at (0.0,0.5);

\coordinate (g2p) at (1.0, 1.5);

\coordinate (x1) at(-7,-1);
\coordinate (x2) at(6,-1);
\coordinate (y1) at(-6,-4);
\coordinate (y2) at(-6,4);

\coordinate (e1) at (-5.0, -3.5);
\coordinate (e2) at (-5.0, 3.5);
\coordinate (e3) at (5.0, 3.5);
\coordinate (e4) at (5.0, -3.5);

\coordinate (f1) at (-5.0, -1.0);
\coordinate (f2) at (-5.0, 3.5);
\coordinate (f3) at (5.0, 3.5);
\coordinate (f4) at (5.0, -1.0);

\filldraw [draw=black!50,fill=blue!20]
($(c) + (g3) + (g2) + (g1)$) --
($(c) + (g3) + (g2) - (g1)$) --
($(c) + (g3) - (g2) - (g1)$) --
($(c) - (g3) - (g2) - (g1)$) --
($(c) - (g3) - (g2) + (g1)$) --
($(c) - (g3) + (g2) + (g1)$) --
cycle;

\draw [thick, black!40] (x1) -- (x2) node[anchor=north] {$ $};
\draw [thick, black!40] (y1) -- (y2) node[anchor=north] {$ $};

\coordinate (cone) at (0.8+1/30, 1.25);
\coordinate (g1one) at (3.0,0.0);
\coordinate (g2one) at (1.1+2/30,1.75);
\coordinate (g3one) at (0.0,0.5);
\filldraw[draw=black!50,fill=teal!40]
($(cone) + (g3one) + (g2one) + (g1one)$) --
($(cone) + (g3one) + (g2one) - (g1one)$) --
($(cone) + (g3one) - (g2one) - (g1one)$) --
($(cone) - (g3one) - (g2one) - (g1one)$) --
($(cone) - (g3one) - (g2one) + (g1one)$) --
($(cone) - (g3one) + (g2one) + (g1one)$) --
cycle;

\node[inner sep=1pt, circle, fill=black, label=below:$ $] (cnode) at (c) {}; %

\coordinate (e5) at (8/3, -1);
\coordinate (e6) at (-5,-1);

\end{tikzpicture}}
		\resizebox{0.49\textwidth}{!}{\begin{tikzpicture}[>={Stealth[inset=0pt,length=3.5pt,round]}, scale=0.76]

\coordinate (c) at (0,0);
\coordinate (g1) at (3.0,0.0);
\coordinate (g2) at (2.0,3.0);
\coordinate (g3) at (0.0,0.5);

\coordinate (g2p) at (1.0, 1.5);

\coordinate (x1) at(-7,-1);
\coordinate (x2) at(6,-1);
\coordinate (y1) at(-6,-4);
\coordinate (y2) at(-6,4);

\coordinate (e1) at (-5.0, -3.5);
\coordinate (e2) at (-5.0, 3.5);
\coordinate (e3) at (5.0, 3.5);
\coordinate (e4) at (5.0, -3.5);

\coordinate (f1) at (-5.0, -1.0);
\coordinate (f2) at (-5.0, 3.5);
\coordinate (f3) at (5.0, 3.5);
\coordinate (f4) at (5.0, -1.0);

\filldraw [draw=black!50,fill=blue!20]
($(c) + (g3) + (g2) + (g1)$) --
($(c) + (g3) + (g2) - (g1)$) --
($(c) + (g3) - (g2) - (g1)$) --
($(c) - (g3) - (g2) - (g1)$) --
($(c) - (g3) - (g2) + (g1)$) --
($(c) - (g3) + (g2) + (g1)$) --
cycle;

\draw [thick, black!40] (x1) -- (x2) node[anchor=north] {$ $};
\draw [thick, black!40] (y1) -- (y2) node[anchor=north] {$ $};

\node[inner sep=1pt, circle, fill=black, label=below:$ $] (cnode) at (c) {}; %

\coordinate (ch) at (-1.5, -2.25);
\coordinate (h1) at (3.0, 0.0);
\coordinate (h2) at (0.5, 0.75);
\coordinate (h3) at (0.0, 0.5);
\filldraw [draw=black!50,fill=black!20]
($(ch) + (h3) + (h2) + (h1)$) --
($(ch) + (h3) + (h2) - (h1)$) --
($(ch) + (h3) - (h2) - (h1)$) --
($(ch) - (h3) - (h2) - (h1)$) --
($(ch) - (h3) - (h2) + (h1)$) --
($(ch) - (h3) + (h2) + (h1)$) --
cycle;

\coordinate (e5) at (2, -1);
\coordinate (e6) at (-5,-1);
\draw [-, line width=1.2mm, teal!100] (e5) -- (e6);

\end{tikzpicture}}
		\caption{Under-approximation.}
		\label{fig:under_z}
	\end{subfigure}
	\hspace{0.5cm}
	\begin{subfigure}{0.325\textwidth}
		\resizebox{0.49\textwidth}{!}{\begin{tikzpicture}[>={Stealth[inset=0pt,length=3.5pt,round]}, scale=0.76]

\coordinate (c) at (0,0);
\coordinate (g1) at (3.0,0.0);
\coordinate (g2) at (2.0,3.0);
\coordinate (g3) at (0.0,0.5);

\coordinate (g2p) at (1.0, 1.5);

\coordinate (x1) at(-7,-1);
\coordinate (x2) at(6,-1);
\coordinate (y1) at(-6,-4);
\coordinate (y2) at(-6,4);

\coordinate (e1) at (-5.0, -3.5);
\coordinate (e2) at (-5.0, 3.5);
\coordinate (e3) at (5.0, 3.5);
\coordinate (e4) at (5.0, -3.5);

\coordinate (f1) at (-5.0, -1.0);
\coordinate (f2) at (-5.0, 3.5);
\coordinate (f3) at (5.0, 3.5);
\coordinate (f4) at (5.0, -1.0);

\filldraw [draw=black!50,fill=blue!20]
($(c) + (g3) + (g2) + (g1)$) --
($(c) + (g3) + (g2) - (g1)$) --
($(c) + (g3) - (g2) - (g1)$) --
($(c) - (g3) - (g2) - (g1)$) --
($(c) - (g3) - (g2) + (g1)$) --
($(c) - (g3) + (g2) + (g1)$) --
cycle;

\coordinate (ch) at (0.5, 0.75);
\coordinate (h1) at (3.0,0.0);
\coordinate (h2) at (1.5,2.25);
\coordinate (h3) at (0.0,0.5);
\filldraw[draw=black!50,fill=orange!60]
($(ch) + (h3) + (h2) + (h1)$) --
($(ch) + (h3) + (h2) - (h1)$) --
($(ch) + (h3) - (h2) - (h1)$) --
($(ch) - (h3) - (h2) - (h1)$) --
($(ch) - (h3) - (h2) + (h1)$) --
($(ch) - (h3) + (h2) + (h1)$) --
cycle;

\draw [thick, black!40] (x1) -- (x2) node[anchor=north] {$ $};
\draw [thick, black!40] (y1) -- (y2) node[anchor=north] {$ $};

\node[inner sep=1pt, circle, fill=black, label=below:$ $] (cnode) at (c) {}; %

\end{tikzpicture}}
		\resizebox{0.49\textwidth}{!}{\begin{tikzpicture}[>={Stealth[inset=0pt,length=3.5pt,round]}, scale=0.76]

\coordinate (c) at (0,0);
\coordinate (g1) at (3.0,0.0);
\coordinate (g2) at (2.0,3.0);
\coordinate (g3) at (0.0,0.5);

\coordinate (g2p) at (1.0, 1.5);

\coordinate (x1) at(-7,-1);
\coordinate (x2) at(6,-1);
\coordinate (y1) at(-6,-4);
\coordinate (y2) at(-6,4);

\coordinate (e1) at (-5.0, -3.5);
\coordinate (e2) at (-5.0, 3.5);
\coordinate (e3) at (5.0, 3.5);
\coordinate (e4) at (5.0, -3.5);

\coordinate (f1) at (-5.0, -1.0);
\coordinate (f2) at (-5.0, 3.5);
\coordinate (f3) at (5.0, 3.5);
\coordinate (f4) at (5.0, -1.0);

\filldraw [draw=black!50,fill=blue!20]
($(c) + (g3) + (g2) + (g1)$) --
($(c) + (g3) + (g2) - (g1)$) --
($(c) + (g3) - (g2) - (g1)$) --
($(c) - (g3) - (g2) - (g1)$) --
($(c) - (g3) - (g2) + (g1)$) --
($(c) - (g3) + (g2) + (g1)$) --
cycle;

\draw [thick, black!40] (x1) -- (x2) node[anchor=north] {$ $};
\draw [thick, black!40] (y1) -- (y2) node[anchor=north] {$ $};

\coordinate (ch) at (-1.1666999999999996, -1.75);
\coordinate (h1) at (3.0, 0.0);
\coordinate (h2) at (0.83333333, 1.25);
\coordinate (h3) at (0.0, 0.5);
\filldraw [draw=black!50,fill=black!20]
($(ch) + (h3) + (h2) + (h1)$) --
($(ch) + (h3) + (h2) - (h1)$) --
($(ch) + (h3) - (h2) - (h1)$) --
($(ch) - (h3) - (h2) - (h1)$) --
($(ch) - (h3) - (h2) + (h1)$) --
($(ch) - (h3) + (h2) + (h1)$) --
cycle;

\node[inner sep=1pt, circle, fill=black, label=below:$ $] (cnode) at (c) {}; %

\coordinate (e5) at (8/3, -1);
\coordinate (e6) at (-5,-1);
\draw [-, line width=1.2mm, orange!100] (e5) -- (e6);

\end{tikzpicture}}
		\caption{Over-approximation.}
		\label{fig:over_z}
	\end{subfigure}
	\caption{Application of ReLU to a zonotope (blue) can result in a non-convex set (red). We approximate each subset located in each quadrant separately (here: two quadrants) and subject it to ReLU. The obtained set of sets under-approximates (green sets) or over-approximates (orange sets) $\mathrm{ReLU}(Z)$.}
	\label{fig:theory_approx}
\end{figure*}

\textbf{Approximation of ReLU(Z).}
In the following, we describe how to approximate ReLU($Z$) based on zonotope $Z$. To unclutter the notiation, we omit layer index~$l$. 
The ReLU function maps points dimension-wise onto the maximum of themselves and zero. %
Consideration of dimension~$d$ results in three possible cases:
	Case~$1$: $\forall p \in Z: p_d < 0$, where the points are mapped to zero, 
	Case~$2$: $\forall p \in Z: p_d \geq 0$, where the points are mapped onto themselves and 
	Case~$3$: $\exists p, q \in Z: p_d < 0 \wedge q_d > 0$, where the points are mapped to zero or themselves. 

Case~3 causes the non-convexity of ReLU (see Figure~\ref{fig:relu_z}, $2^{\mathrm{nd}}$ dimension). 
We consider the three cases separately to approximate each maximum convex subset of $\mathrm{ReLU} (Z)$ by one zonotope. 
The three cases are distinguished by computing an index set for each case:  
\begin{equation}
\begin{aligned}
   R_{\mathrm{n}} &= \{d \mid \forall p \in Z: p_d < 0 \}, \mathrm{~~~~}
&& R_{\mathrm{p}}  = \{d \mid \forall p \in Z: p_d \geq 0 \}, \\
   R              &= \{d \mid \exists p, q \in Z: p_d < 0, q_d > 0 \}
   \label{eq:index_sets}
\end{aligned}
\end{equation}
These index sets can be efficiently obtained through the interval hull of~$Z$ ~\cite{kuehn1998}, where $\left|.\right|$ is the element-wise absolute value:
$	\mathrm{IH}\left(Z\right) := \left[c - \delta g, c + \delta g \right]$ where $\delta g = \sum_{i} |g_i|$.
$R_{\mathrm{n}}$ contains the dimensions~$d$ such that $(c - \delta g)_d \leq  0$, $R_{\mathrm{p}}$ contains the dimensions where $(c + \delta g)_d \geq 0$, and $R$ contains the remaining dimensions. %

\textbf{Projection of a Zonotope.}
Regarding the dimensions in~$R_{\mathrm{n}}$, ReLU maps each point of the zonotope to zero.
Thus, we can safely project the whole zonotope~$Z = (c \mid G )$ to zero within these dimensions.

\begin{theorem}
	Let $Z$ be an arbitrary zonotope and $Z' =\mathrm{Proj}_{R_{\mathrm{n}}} (Z)$, then  $\mathrm{ReLU} (Z)=\mathrm{ReLU} (Z')$. 
	$\mathrm{Proj}_M \left(Z\right) = Z' = \left(c' \mid G'\right)$ is defined by 
	$c'_d = 0~\mathrm{if~} d \in M, \mathrm{~else~} c'_d = c_d$ and 
	$g'_{i,d} = 0~\mathrm{if~} d \in M, \mathrm{~else~} g'_{i,d} = g_{i,d}$.
	\label{theo:proj_whole_d}
\end{theorem}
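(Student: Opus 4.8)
The plan is to exploit two structural facts: $\mathrm{ReLU}$ acts coordinate-wise, and $Z$ and $Z'$ are parametrized over the \emph{same} coefficient space $\beta \in [-1,1]^n$ via \eqref{eq:zonotope}. So instead of reasoning about the sets directly, I would fix an arbitrary $\beta \in [-1,1]^n$, let $z = c + \sum_i \beta_i g_i \in Z$ and $z' = c' + \sum_i \beta_i g'_i \in Z'$ be the two corresponding points, and show $\mathrm{ReLU}(z) = \mathrm{ReLU}(z')$ coordinate by coordinate. Since $\beta \mapsto z$ and $\beta \mapsto z'$ are surjective onto $Z$ and $Z'$ respectively, this single identity, quantified over all $\beta$, immediately yields both inclusions $\mathrm{ReLU}(Z) \subseteq \mathrm{ReLU}(Z')$ and $\mathrm{ReLU}(Z') \subseteq \mathrm{ReLU}(Z)$, hence equality.

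For the coordinate-wise comparison I would split on whether $d \in R_{\mathrm{n}}$. If $d \notin R_{\mathrm{n}}$, then by the definition of $\mathrm{Proj}$ we have $c'_d = c_d$ and $g'_{i,d} = g_{i,d}$ for all $i$, so $z'_d = z_d$ and therefore $\mathrm{ReLU}(z)_d = \max(0, z_d) = \max(0, z'_d) = \mathrm{ReLU}(z')_d$. If $d \in R_{\mathrm{n}}$, then on one hand $z \in Z$ together with the defining property of $R_{\mathrm{n}}$ gives $z_d \le 0$, hence $\mathrm{ReLU}(z)_d = 0$; on the other hand $c'_d = 0$ and $g'_{i,d} = 0$ for all $i$, so $z'_d = 0$ and $\mathrm{ReLU}(z')_d = 0$. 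In either case the $d$-th coordinates agree, so $\mathrm{ReLU}(z) = \mathrm{ReLU}(z')$, and the reduction above finishes the proof.

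There is no deep obstacle here; the statement holds essentially by unfolding definitions. The one point that needs care — and the reason $\mathrm{Proj}$ zeros \emph{both} the center and all generator entries, not just the center — is the step $z'_d = 0$ for $d \in R_{\mathrm{n}}$: if some generator kept a nonzero $d$-component, $Z'$ could contain a point with $z'_d > 0$, whose ReLU would not vanish in that coordinate and might fail to lie in $\mathrm{ReLU}(Z)$. So the argument hinges on the observation that for dimensions in $R_{\mathrm{n}}$ the \emph{entire} zonotope, not merely its center, lies in the closed halfspace $\{x_d \le 0\}$, which is exactly how $R_{\mathrm{n}}$ was defined (equivalently, read off from $\mathrm{IH}(Z)$ via $(c - \delta g)_d \le 0$). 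I would also remark in passing that the same surjectivity argument shows $Z$ and $Z'$ need not coincide as sets — only their ReLU images do — which is precisely the content of the reduction.
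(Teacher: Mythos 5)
Your proof is correct and follows essentially the same route as the paper's: both establish a pointwise correspondence between $Z$ and $Z' = \mathrm{Proj}_{R_{\mathrm{n}}}(Z)$ (you via the shared coefficient vector $\beta$, the paper by writing the projected set directly in terms of points $p \in Z$) and then argue coordinate-wise, using that $p_d < 0$ for all $p \in Z$ when $d \in R_{\mathrm{n}}$, so ReLU sends both the original and the zeroed coordinate to $0$, while coordinates with $d \notin R_{\mathrm{n}}$ are untouched. Your explicit surjectivity remark and the note that only the ReLU images, not the sets themselves, coincide are harmless additions that do not change the argument.
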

\begin{proof}
	Applying ReLU and the projection operator to~$Z$ results in the sets: 
	\begin{equation*}
	\begin{aligned}
	\mathrm{ReLU} \left(Z\right)                    &= \left\{ a \mid a_d = \max\{0, b_d\}, b \in Z \right\} \\
	\mathrm{Proj}_{R_{\mathrm{n}}} \left(Z\right) &= \left\{ q \mid q_d = \left\{ \begin{array}{ll}
	0~\mathrm{if}~d \in R_{\mathrm{n}} \\
	p_d~\mathrm{else} \\
	\end{array} \right., p \in Z
	\right\} \\
	&= \left\{ q \mid q_d = \left\{ \begin{array}{ll}
	0~\mathrm{if}~\forall p \in Z: p_d < 0 \\
	p_d~\mathrm{else} \\
	\end{array} \right., p \in Z
	\right\}
	\end{aligned}
	\end{equation*}
	Applying ReLU on the projection results in $\mathrm{ReLU}(Z)$: 
	\begin{equation*}
	\begin{aligned}
	\mathrm{ReLU} \left( \mathrm{Proj}_{R_{\mathrm{n}}} \left(Z\right) \right) 
	&= \left\{ a \mid a_d = \max\{0, b_d\}, b \in \mathrm{Proj}_{R_{\mathrm{n}}} \left(Z\right) \right\} \\
	&= \left\{ a \mid a_d = \max\{0, q_d\}, \right.
	\left. q_d = \left\{ \begin{array}{ll}
	0~\mathrm{if}~\forall r \in Z: r_d \leq 0 \\
	p_d~\mathrm{else} \\
	\end{array}, \right.
	p \in Z
	\right\}  \\
	&= \left\{ a \mid a_d = \max\{0, p_d\}, p \in Z \right\}  
	= \mathrm{ReLU} \left(Z\right) 
	\end{aligned}
	\end{equation*} 
	\qed
\end{proof}
Projecting~$Z$ results in the more compact~$Z'$ with no change to the output set. We overload notation, and $Z$ denotes the projected zonotope in the following.

\textbf{Computation of Quadrants that Contain a Subset~S$_k$ of Z.}
Next, we subdivide the projected zonotope~$Z$ into subsets located in one quadrant. 
Quadrants that contain points of~$Z$ are determined by an index set $R_k$, where $R_k$ is an element of the power set $\mathcal{P} (R)$ of $R$. 
Each index set~$R_k$ corresponds to a set $S_k = \{p \mid p \in Z \wedge p_d \leq 0 \forall d \in R_k \wedge p_d \geq 0 \forall \notin R_k \}$. 
Clearly, all~$S_k$ are convex and disjoint, the union over~$S_k$ results in~$Z$. 
It is important to highlight that we \textit{never materialize the subsets $S_k$}, as they are unfavorable to compute. 
Our core idea is to approximate each~$S_k$ by zonotope~$\hat{Z}_k$. Subsequently, we project $\hat{Z}_k$ in all dimensions of the corresponding~$R_k$ (see case~1), resulting in $\mathrm{Proj}_{R_k} (\hat{Z}_k)$. The obtained set of zonotopes is an approximatiuon of~$\mathrm{ReLU} (Z)$ and is the input for the next layer. 
The computation of $R$, $R_k$ and corresponding subsets~$S_k$ is illustrated in the following example. 
\begin{example}
	Consider zonotope~$Z = (c \mid G)$ (Figure~\ref{fig:theory_approx}), where $c=(6,1)$ and generators are $g_1 = (3, 0)$, $g_2 = (2, 3)$ and $g_3 = (0, 0.5)$. 
	The lower bounds are $(1, -2.5)$, the upper bounds are $(11, 4.5)$. 
	As all upper bounds are positive, we do not project any dimension. The index set considering case~3 is $R = \{2\}$. 
	We need to approximate all subsets~$S_k$ corresponding to $R_k \in \mathcal{P} (R)$. 
	The empty set corresponds to the positive quadrant. 
\end{example}

We capture each~$S_k$ individually to keep the approximations as tight as possible.
Theoretically, we could decrease the number of zonotopes by over-approximat\-ing several~$S_k$ by one zonotope or by not considering small subsets~$S_k$ in the case of under-approximation. 
We discuss such an extension that restricts the maximum number of zonotopes at the end of this section. This extension enables a balance between tightness of approximations and run-time, which is useful for larger neural networks. 
The approximation of~$S_k$ can be either an over-approximation (RsO) or an under-approximation (RsU).

\textbf{Over-approximation of ReLU(Z).}
Given $S_k \subseteq Z$, we aim to over-approxi\-mate $S_k$ by $\hat{Z}_k = (\hat{c} \mid \hat{G})$ (to unclutter the notation, we omit the index~$k$ w.r.t.\ center and generators).
Our core idea is that if $\hat{Z}_k$ is a tight over-approximation of~$S_k$, the shape of $\hat{Z}_k$ should resemble the shape of~$Z$ (see  Figure~\ref{fig:over_z}). 
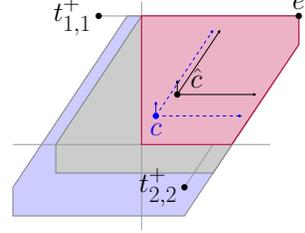
\begin{wrapfigure}{r}{0.29\textwidth}
	\centering
	\resizebox{0.29\textwidth}{!}{
		\begin{tikzpicture}[>={Stealth[inset=0pt,length=3.5pt,round]}, scale=0.9]

\coordinate (c) at (0,0);
\coordinate (g1) at (3.0,0.0);
\coordinate (g2) at (2.0,3.0);
\coordinate (g3) at (0.0,0.5);

\coordinate (g2p) at (1.0, 1.5);

\coordinate (x1) at(-5,-1);
\coordinate (x2) at(4,-1);
\coordinate (y1) at(-0.5,-4);
\coordinate (y2) at(-0.5,4);

\coordinate (e1) at (-5.0, -3.5);
\coordinate (e2) at (-5.0, 3.5);
\coordinate (e3) at (5.0, 3.5);
\coordinate (e4) at (5.0, -3.5);

\coordinate (f1) at (-5.0, -1.0);
\coordinate (f2) at (-5.0, 3.5);
\coordinate (f3) at (5.0, 3.5);
\coordinate (f4) at (5.0, -1.0);

\coordinate (t1) at (5.0, 3.5);
\coordinate (t2) at (1.0, -2.5);
\coordinate (t3) at (-2.0, 3.5);

\coordinate (s1) at (-0.5, 3.5);
\coordinate (s2) at (-0.5, -1.0);
\coordinate (s3) at (2.65, -1.0);

\filldraw [draw=black!50,fill=blue!20]
($(c) + (g3) + (g2) + (g1)$) --
($(c) + (g3) + (g2) - (g1)$) --
($(c) + (g3) - (g2) - (g1)$) --
($(c) - (g3) - (g2) - (g1)$) --
($(c) - (g3) - (g2) + (g1)$) --
($(c) - (g3) + (g2) + (g1)$) --
cycle;

\coordinate (cl) at (0.75, 0.75);
\coordinate (l1) at (2.75,0.0);
\coordinate (l2) at (1.5,2.25);
\coordinate (l3) at (0.0,0.5);
\filldraw[draw=black!50,fill=gray!40]
($(cl) + (l3) + (l2) + (l1)$) --
($(cl) + (l3) + (l2) - (l1)$) --
($(cl) + (l3) - (l2) - (l1)$) --
($(cl) - (l3) - (l2) - (l1)$) --
($(cl) - (l3) - (l2) + (l1)$) --
($(cl) - (l3) + (l2) + (l1)$) --
cycle;

\draw [thick, black!40] (x1) -- (x2) node[anchor=north] {$ $};
\draw [thick, black!40] (y1) -- (y2) node[anchor=north] {$ $};

\draw [->, thick, black!40] (t1) -- (t2) node[anchor=north] {\Huge \textcolor{teal}{$ $}};
\draw [->, thick, black!40] (t1) -- (t3) node[anchor=north] {\Huge \textcolor{teal}{$ $}};

\filldraw[draw=purple!100,fill=purple!30]
(s1) --
(s2) --
(s3) --
($(c) - (g3) + (g2) + (g1)$) --
($(c) + (g3) + (g2) + (g1)$) --
cycle;

\draw [->, thick, dashed, blue!100] (c) -- ($(c) + (g2)$) node[anchor=north] {\Huge $ $};
\draw [->, thick, dashed, blue!100] (c) -- ($(c) + (g1)$) node[anchor=north] {\Huge $ $};
\draw [->, thick, blue!100] (c) -- ($(c) + (g3)$) node[anchor=north] {\Huge $ $};
\draw [->, thick, black!100] (cl) -- ($(cl) + (l2)$) node[anchor=north] {\Huge $ $};
\draw [->, thick, black!100] (cl) -- ($(cl) + (l1)$) node[anchor=north] {\Huge $ $};
\draw [->, thick, black!100] (cl) -- ($(cl) + (l3)$) node[anchor=north] {\Huge $ $};

\node[inner sep=2pt, circle, fill=blue, label=below:\Huge \textcolor{blue}{$c$}] (cnode) at (c) {}; %
\node[inner sep=2pt, circle, fill=black, label={[xshift=0.6cm]\Huge \textcolor{black}{$\hat{c}$}}] (cnode) at (cl) {}; %

\node[inner sep=2pt, circle, fill=black, label=above:\Huge \textcolor{black}{$e$}] (cnode) at (t1) {}; %
\node[inner sep=2pt, circle, fill=black, label=left:\Huge \textcolor{black}{$t^{+}_{2,2}$}] (cnode) at (t2) {}; %
\node[inner sep=2pt, circle, fill=black, label=left:\Huge \textcolor{black}{$t^{+}_{1,1}$}] (cnode) at (t3) {}; %

\end{tikzpicture}
	}
	\caption{Overapproximation of \textcolor{magenta}{$S_0$} $\subseteq$ \textcolor{blue}{$Z$} by \textcolor{gray}{$\hat{Z}$}.} %
	\label{fig:over_extreme}
\end{wrapfigure}
As the shapes of two zonotopes are similar if their generators point in similar directions, we derive~$\hat{Z}_k$ from the generators of~$Z$. More precisely, the generators of $\hat{Z}_k$ are obtained by scaling each generator~$g_j$ of~$Z$ with a factor~$\alpha_j\in [0,1]$ and computing the shift of the center such that $S_k \subseteq \hat{Z}_k \subseteq Z$. Clearly, $\alpha_j = 1$ fulfills this property, but results in $\hat{g}_j = g$ and a loose over-approximation.
Thus, we aim to minimize over $\alpha_j$.
Each scaling factor~$\alpha_j$ for generator~$g_j$ is computed analytically (see Figure~\ref{fig:over_extreme}) 
by first computing an extreme point~$e$ of the zonotope. We start in~$e$ and test if the generator~$g_j$ allows a point~$t_{j,d}$ to be reached outside the quadrant under consideration. 
If this is the case, $g_j$ can be scaled down and~$\hat{Z}$ still over-approximates~$S_k$. 
We compute the extreme points and scaling factors for each dimension~$d$, resulting in $\alpha_{j,d}$. 

Regarding dimension~$d$, $g_j$ can be scaled by the factor $\alpha_{j,d}$. If we scale~$g_j$ by a larger factor, we leave the quadrant corresponding to~$S_k$ with respect to dimension~$d$. A larger scaling factor is not necessary in order to over-approximate~$S_k$. Thus, we minimize over~$\alpha_{j,d}$ to obtain the smallest~$\alpha_j$ and the tightest over-approximation. 
Formally, ~$\hat{g}_j$ and ~$\hat{c}$ of the over-approximating zonotope~$\hat{Z}_k$ are:
\begin{equation}
\begin{aligned}
	& \hat{g}_j = \alpha_j g_j, \mathrm{~~} \hat{c}   = c + \sum_j s_j (1-\alpha_j) o_j g_j \mathrm{~~with~the~following~definitions:}\\
	&\alpha_j                = \min_d \alpha_{j,d}, \mathrm{~~} d^* = \arg \min_d \alpha_{j,d}, \mathrm{~~} %
	 o_j                     = \frac{g_{j,d^*}}{\left| g_{j,d^*} \right|}, \mathrm{~~} s_j = 1 \mathrm{~if~} d^* \notin R_k, -1 \mathrm{~else} \\
	&\forall d \notin R_k:  t^{+}_{j,d} = c_d - 2 \left| g_{j,d} \right| + \sum_i \left| g_{i,d} \right|,	 
	 \alpha_{j,d} = 1 - \frac{|t^{+}_{j,d}|}{2 |g_{j,d}|}~~ \mathrm{if~}t^{+}_{j,d} < 0,~1 \mathrm{~else} \\
	&\forall d \in R_k:  t^{-}_{j,d} = c_d + 2 \left| g_{j,d} \right| - \sum_i \left| g_{i,d} \right|, 
	 \alpha_{j,d} = 1 - \frac{|t^{-}_{j,d}|}{2 |g_{j,d}|}~~ \mathrm{if~}t^{-}_{j,d} > 0,~1 \mathrm{~else} %
\label{eq:over_zono}
\end{aligned}
\end{equation}
Although the generators of~$Z$ are scaled down, the obtained zonotope $\hat{Z}$ is an over-approximation of~$S_k$ for the respective quadrant (which we never computed explicitly). This is shown in Theorem~\ref{theo:over_proof} by using Lemma~\ref{theo:over_const_1} and~\ref{theo:over_const_2}. 
\begin{theorem}
	Let~$Z = (c \mid G)$, $S_k = \{p \mid p \in Z \wedge p_d \geq 0 ~\forall d \notin R_k \wedge p_d \leq 0 ~\forall d \in R_k\}$ and~$\hat{Z}_k =(\hat{c} \mid \hat{G})$ with the center and generators as defined in Equation~\ref{eq:over_zono}. Then $S_k \subseteq \hat{Z}_k$. 
	\label{theo:over_proof}
\end{theorem}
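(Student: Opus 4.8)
\textbf{Proof plan for Theorem~\ref{theo:over_proof}.}
The strategy is to prove the set containment by an explicit change of coordinates between the two G-representations. Fix an arbitrary point $p \in S_k$ and write it in $Z$'s representation as $p = c + \sum_i \beta_i g_i$ with $\beta_i \in [-1,1]$. Since $\hat{g}_i = \alpha_i g_i$, showing $p \in \hat{Z}_k$ amounts to producing coefficients $\hat\beta_i \in [-1,1]$ with $\hat{c} + \sum_i \hat\beta_i \alpha_i g_i = p$. Substituting the definition $\hat{c} = c + \sum_j s_j(1-\alpha_j)o_j g_j$, it suffices to match generator-by-generator, i.e.\ to set
\[
\hat\beta_i \;=\; \frac{\beta_i - s_i(1-\alpha_i)\,o_i}{\alpha_i}
\]
whenever $\alpha_i > 0$ (and $\hat\beta_i = 0$ when $\alpha_i = 0$, a degenerate case in which one checks separately that the lemmas below force $\beta_i = s_i o_i$, so the center shift already absorbs that term). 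With this choice, the identity $\hat{c} + \sum_i \hat\beta_i \hat{g}_i = p$ is a one-line telescoping computation: the $\sum_j s_j(1-\alpha_j)o_j g_j$ added in $\hat c$ cancels the $\sum_i s_i(1-\alpha_i)o_i g_i$ subtracted in $\sum_i\hat\beta_i\alpha_i g_i$, leaving $c+\sum_i\beta_i g_i$. Hence the whole theorem reduces to the single claim: for every $p \in S_k$, this $\hat\beta_i$ lies in $[-1,1]$.

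The heart of the argument is two one-sided bounds on the coefficients $\beta_i$ of points of $S_k$, namely Lemmas~\ref{theo:over_const_1} and~\ref{theo:over_const_2}. Informally: because $p \in S_k$ has $p_d \ge 0$ for every $d \notin R_k$, one can bound $p_d = c_d + \sum_l \beta_l g_{l,d}$ from above by pushing every coefficient $\beta_l$ with $l \ne i$ to its extreme $\mathrm{sign}(g_{l,d})$; demanding that this upper bound be nonnegative yields $o_{i,d}\,\beta_i \ge 1 - 2\alpha_{i,d}$ whenever $t^{+}_{i,d} < 0$, where $o_{i,d} = g_{i,d}/|g_{i,d}|$. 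Symmetrically, $p_d \le 0$ for $d \in R_k$ gives $o_{i,d}\,\beta_i \le 2\alpha_{i,d} - 1$ whenever $t^{-}_{i,d} > 0$; if $t^{+}_{i,d}\ge 0$ (resp.\ $t^{-}_{i,d}\le 0$) then $\alpha_{i,d}=1$ and the bound is vacuous. Establishing these two lemmas is essentially a one-variable linear-programming argument — maximize/minimize $p_d$ over the box of admissible $\beta$'s with $\beta_i$ held fixed — and is precisely where the definitions of $t^{\pm}_{i,d}$ and $\alpha_{i,d}$ in Equation~\ref{eq:over_zono} get used.

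To finish, apply the relevant lemma in the critical dimension $d^{*} = \argmin_d \alpha_{i,d}$, so that $\alpha_{i,d^{*}} = \alpha_i$, and recall $o_i = o_{i,d^{*}}$ and $s_i = +1$ if $d^{*}\notin R_k$, $-1$ otherwise. The target $\hat\beta_i \in [-1,1]$ is equivalent to $s_i o_i(1-\alpha_i) - \alpha_i \le \beta_i \le s_i o_i(1-\alpha_i) + \alpha_i$, and this is checked in the four cases indexed by whether $d^{*}\in R_k$ and by the value $s_i o_i \in \{-1,+1\}$: in each case one of the trivial inequalities $\beta_i \le 1$ or $\beta_i \ge -1$ handles one side, and the appropriate lemma ($o_{i,d^{*}}\beta_i \ge 1 - 2\alpha_i$ or $o_{i,d^{*}}\beta_i \le 2\alpha_i - 1$) handles the other. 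The main obstacle is not any single computation but keeping the sign bookkeeping straight — reconciling $o_i$, $s_i$, and the quadrant index $R_k$ — together with proving the two LP-flavored lemmas cleanly; once those are in place the containment $S_k \subseteq \hat{Z}_k$ follows immediately from the coordinate map above.
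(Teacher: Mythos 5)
Your plan is correct and follows essentially the same route as the paper's proof: the same reparametrization $p=\hat c+\sum_i\hat\beta_i\hat g_i$ with $\hat\beta_i=\frac{\beta_i-s_i(1-\alpha_i)o_i}{\alpha_i}$, the same two one-sided bounds (Lemmas~\ref{theo:over_const_1} and~\ref{theo:over_const_2}, proved by pushing the other coefficients to their extremes in the critical dimension $d^*$), and the same sign/case bookkeeping to conclude $\hat\beta_i\in[-1,1]$. Your explicit treatment of the degenerate case $\alpha_i=0$ is a small addition the paper omits, but otherwise the arguments coincide.
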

\begin{proof}
	Let $p \in S_k$. Since  $p\in Z$ it exists $\beta_j \in [-1, 1]$ such that:
	\begin{equation}
	\begin{aligned}
	p &= c + \sum_j \beta_j g_j 
	= \hat{c} - \sum_j s_j (1-\alpha_j) o_j g_j + \sum_j \beta'_j o_j g_j \\
	&= \hat{c} + \sum_j (\beta'_j - s_j (1-\alpha_j)) o_j g_j 
	= \hat{c} + \sum_j \frac{\beta'_j - s_j (1-\alpha_j)}{\alpha_j} o_j \hat{g}_j \\
	\end{aligned}
	\end{equation}
	where we use that $o_{j} = \frac{g_{j,d^*}}{|g_{j,d^*}|} \in \{-1, 1\}$, $\beta'_j = o_j \beta_j$, $c = \hat{c} - \sum_j s_j (1-\alpha_j) o_j g_j$ and $g_j = \frac{1}{\alpha_i} \hat{g}_j$. 
	If we can show that $\frac{\beta'_j - s_j (1-\alpha_j)}{\alpha_j} o_j \in [-1, 1]$ then $p \in Z_k$. 
	To this end, we distinguish how $\alpha_j$ is obtained: If $\alpha_j$ is computed based on~$d^* \notin R_k$ then $s_j=1$ and it holds that $2(1-\alpha_j) -1 \leq \beta'_j$ (see Lemma~\ref{theo:over_const_1}). 
	If $\alpha_j$ is computed based on~$d^* \in R_k$ then $s_j=-1$ and $\beta'_j \leq 1 - 2 (1-\alpha_j)$ (see Lemma~\ref{theo:over_const_2}). 
	With these constraints and $\beta'_j \in [-1,1]$ (from the definition of zonotopes) we obtain $\frac{\beta'_j - s_j (1-\alpha_j)}{\alpha_j} \in [-1,1]$.
	Using~$o_j \in \{-1, 1\}$, we define $\hat{\beta}_j = \frac{\beta'_j - s_j (1-\alpha_j)}{\alpha_j} o_j$ and obtain $p = \hat{c} + \sum_j \hat{\beta}_j \hat{g}_j \in \hat{Z}_k$ $\Rightarrow \forall p \in S_k: p \in \hat{Z}_k \mathrm{~~and~~} S_k \subseteq \hat{Z}_k$. \qed
\end{proof}

\begin{lemma}
	Consider zonotope~$Z = (c \mid G)$. Let~$S_k = \{p \mid p \in Z \wedge p_d \geq 0 ~\forall d \notin R_k \wedge p_d \leq 0 ~\forall d \in R_k\}$ and let~$\hat{Z}_k$ be a zonotope with the center and generators defined in Equation~\ref{eq:over_zono}. Consider the definitions used in Theorem~\ref{theo:over_proof}. Then $2 (1-\alpha_j) - 1 \leq \beta'_j$ if $\alpha_j$ corresponds to a $d \notin R_k$.
	\label{theo:over_const_1}
\end{lemma}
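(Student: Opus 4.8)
The plan is to fix an arbitrary point $p \in S_k$, write it in generator coordinates $p = c + \sum_i \beta_i g_i$ with all $\beta_i \in [-1,1]$, and squeeze the claimed lower bound on $\beta'_j = o_j \beta_j$ out of the single sign constraint that membership in $S_k$ imposes in coordinate $d^*$, i.e.\ the dimension that realizes $\alpha_j = \alpha_{j,d^*}$ and that is assumed to lie outside $R_k$.

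First I would dispose of the trivial case $\alpha_j = 1$: there $2(1-\alpha_j)-1 = -1$, and $\beta'_j = o_j\beta_j \in [-1,1]$ because $o_j \in \{-1,1\}$ and $\beta_j \in [-1,1]$, so the inequality holds. Hence I may assume $\alpha_j < 1$, which by Equation~\ref{eq:over_zono} forces $g_{j,d^*} \neq 0$ and $t^{+}_{j,d^*} < 0$, with $\alpha_j = 1 - \frac{|t^{+}_{j,d^*}|}{2|g_{j,d^*}|}$ and $t^{+}_{j,d^*} = c_{d^*} - 2|g_{j,d^*}| + \sum_i |g_{i,d^*}|$.

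Next I use $d^* \notin R_k$: by definition of $S_k$ this gives $p_{d^*} \geq 0$, i.e.\ $c_{d^*} + \sum_i \beta_i g_{i,d^*} \geq 0$. Isolating the $j$-th summand, bounding the rest by $\sum_{i\neq j}\beta_i g_{i,d^*} \leq \sum_{i\neq j}|g_{i,d^*}|$, and splitting $\sum_{i\neq j}|g_{i,d^*}| = \sum_i |g_{i,d^*}| - |g_{j,d^*}|$ yields $\beta_j g_{j,d^*} \geq -c_{d^*} - \sum_i|g_{i,d^*}| + |g_{j,d^*}|$. Dividing by $|g_{j,d^*}| > 0$ and using $o_j = g_{j,d^*}/|g_{j,d^*}|$ (so that $o_j\beta_j = \beta_j g_{j,d^*}/|g_{j,d^*}|$) leaves $\beta'_j \geq 1 - \frac{c_{d^*} + \sum_i|g_{i,d^*}|}{|g_{j,d^*}|}$. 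Finally I rewrite the target: since $t^{+}_{j,d^*} < 0$, $2(1-\alpha_j) = \frac{-t^{+}_{j,d^*}}{|g_{j,d^*}|} = 2 - \frac{c_{d^*} + \sum_i|g_{i,d^*}|}{|g_{j,d^*}|}$, hence $2(1-\alpha_j)-1 = 1 - \frac{c_{d^*} + \sum_i|g_{i,d^*}|}{|g_{j,d^*}|}$, which is exactly the bound just derived, so $\beta'_j \geq 2(1-\alpha_j)-1$.

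The only place where care is needed — the ``main obstacle'' such as it is — is the sign bookkeeping: correctly peeling off the $j$-th term, recognizing $\sum_{i\neq j}|g_{i,d^*}| = \sum_i|g_{i,d^*}| - |g_{j,d^*}|$, tracking that the factor $o_j$ converts $\beta_j g_{j,d^*}$ into $|g_{j,d^*}|\,\beta'_j$ with the right sign, and verifying that under $\alpha_j < 1$ we are genuinely in the branch $t^{+}_{j,d^*} < 0$ with $g_{j,d^*}\neq 0$ so that the division is legitimate. Once the coordinate-$d^*$ inequality $p_{d^*}\ge 0$ is written out, the remainder is a one-line substitution of the definition of $\alpha_j$.
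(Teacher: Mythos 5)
Your proof is correct and follows essentially the same route as the paper's: starting from the coordinate constraint $p_{d^*}\geq 0$, bounding the cross terms $\sum_{i\neq j}\beta_i g_{i,d^*}$ by $\sum_{i\neq j}|g_{i,d^*}|$ (the paper's observation that $\sum_{i\neq j}(o_{i,d}-\beta_i)g_{i,d}\geq 0$), and then substituting the definitions of $t^{+}_{j,d^*}$ and $\alpha_j$. Your explicit handling of the trivial branch $\alpha_j=1$ and of the legitimacy of dividing by $|g_{j,d^*}|$ is a small but welcome addition of rigor over the paper's version, not a different argument.
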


\begin{proof}
	We use that for a point $p \in Z: p_d \geq 0$ in case $d \notin R_k$ and $t^{+}_{j,d} < 0$. 
	\begin{align}
	0                    &\leq p_d \\
	\Leftrightarrow     t^{+}_{j,d}  + 2 \frac{1}{2} \frac{|t^{+}_{j,d} |}{|g_{j,d}|} |g_{j,d}| 
	&\leq c_d + \sum_i \beta_i g_{i,d} \label{eq:over_c1_01}\\
	\Leftrightarrow     t^{+}_{j,d}  + 2 (1-\alpha_j) |g_{j,d}| 
	&\leq c_d + \sum_i \beta_i g_{i,d} \label{eq:over_c1_02}\\ 
	\Leftrightarrow     c_d + \sum_i \left|g_{i,d}\right| -2 \left|g_{j,d} \right| + 2 (1-\alpha_j) |g_{j,d}|
	&\leq c_d + \sum_i \beta_i g_{i,d} \label{eq:over_c1_03}\\
	\Leftrightarrow     \sum_{i, i \neq j} o_{i,d} g_{i,d} - |g^d_j| + 2 (1-\alpha_j) |g_{j,d}|
	&\leq \sum_{i, i \neq j} \beta_i g_{i,d} + \beta_j g_{j,d} \label{eq:over_c1_04}\\
	\Leftrightarrow     \sum_{i, i \neq j} (o_i^d - \beta_i) g_{i,d} + (2 (1-\alpha_j) - 1) |g_{j,d}|
	&\leq \beta'_j |g_{j,d}| \label{eq:over_c1_05}\\
	\Leftrightarrow     \underbrace{\frac{1}{|g_{j,d}|} \sum_{i, i \neq j} (o_{i,d} - \beta_i) g_{i,d}}_{\geq 0} + (2 (1-\alpha_j) - 1)
	&\leq \beta'_j \label{eq:over_c1_06}\\
	\Rightarrow          2 (1-\alpha_j) - 1   &\leq \beta'_j 
	\end{align}
	We use that $p_d = c_d + \sum_i \beta_i g_{i,d}$ (\ref{eq:over_c1_01}), from the definitions of $t^{+}_{j,d}$ 
	$1 - \alpha_{j,d} = \frac{|t^{+}_{j,d}|}{2 |g_{j,d}|} $ if $t^{+}_{j,d} < 0$ and $0$ else (\ref{eq:over_c1_02}), the definition of $t^{+}_{j,d}$ (\ref{eq:over_c1_03}), 
	$o_{j,d} g_{j,d} = \left|g_{j,d}\right|$ (\ref{eq:over_c1_04}) and $\beta_j g_{j,d} = \beta'_j |g_{j,d}|$ (\ref{eq:over_c1_05}). Inequality $\frac{1}{|g_{j,d}|} \sum_{i, i \neq j} (o_{i,d} - \beta_i) g_{i,d} \geq 0$ (\ref{eq:over_c1_06}) holds because
	$o_{i,d} g_{i,d} = |g_{i,d} |$ and 
	$\beta_i g_{i,d} = \pm \beta_i |g_{i,d}| \leq |g_{i,d} |$ because $\beta_i \in [-1,1]$
	$\Rightarrow~  (o_{i,d} - \beta_i) g_{i,d} \geq 0$ and thus,
	$\sum_{i, i \neq j} (o_{i,d} - \beta_i) g_{i,d} \geq 0$. \qed
\end{proof}

\begin{lemma}
	Consider zonotope~$Z = (c \mid G)$. Let~$S_k = \{p \mid p \in Z \wedge p_d \geq 0 ~\forall d \notin R_k \wedge p_d \leq 0 ~\forall d \in R_k\}$ and let~$\hat{Z}_k$ be a zonotope with the center and generators defined in Equation~\ref{eq:over_zono}. Consider the definitions used in Theorem~\ref{theo:over_proof}. Then $\beta'_j \leq 1 - 2 (1-\alpha_j)$ if $\alpha_j$ corresponds to a $d \in R_k$. 
	\label{theo:over_const_2}
\end{lemma}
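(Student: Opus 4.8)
The plan is to mirror the proof of Lemma~\ref{theo:over_const_1} almost verbatim, swapping the roles of the two half-spaces. Fix the generator index $j$ and let $d = d^*$ be the dimension attaining $\alpha_j = \min_d \alpha_{j,d}$. By hypothesis $d \in R_k$, so the relevant branch of Equation~\ref{eq:over_zono} is the one with $t^{-}_{j,d} = c_d + 2|g_{j,d}| - \sum_i |g_{i,d}|$; moreover, since here $\alpha_{j,d} \neq 1$ the active case must be $t^{-}_{j,d} > 0$ with $\alpha_{j,d} = 1 - |t^{-}_{j,d}| / (2|g_{j,d}|)$, which in particular forces $g_{j,d} \neq 0$. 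Recall also $o_j = o_{j,d} = g_{j,d}/|g_{j,d}|$ and $\beta'_j = o_j \beta_j$ for the representation of the point under consideration.

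First I would take an arbitrary $p \in S_k$; because $d \in R_k$, the defining constraint of $S_k$ gives $p_d \le 0$, and writing $p \in Z$ as $p = c + \sum_i \beta_i g_i$ with $\beta_i \in [-1,1]$ gives $p_d = c_d + \sum_i \beta_i g_{i,d}$. The key observation, dual to the one in Lemma~\ref{theo:over_const_1}, is that $t^{-}_{j,d} > 0$ implies $|t^{-}_{j,d}| = t^{-}_{j,d}$, hence $2(1-\alpha_j)|g_{j,d}| = |t^{-}_{j,d}| = t^{-}_{j,d}$ and therefore $t^{-}_{j,d} - 2(1-\alpha_j)|g_{j,d}| = 0$. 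So $p_d \le 0$ is exactly $c_d + \sum_i \beta_i g_{i,d} \le t^{-}_{j,d} - 2(1-\alpha_j)|g_{j,d}|$. Substituting the definition of $t^{-}_{j,d}$, cancelling $c_d$, isolating the $i=j$ terms, and using $|g_{i,d}| = o_{i,d} g_{i,d}$ together with $\beta_j g_{j,d} = \beta'_j |g_{j,d}|$, this rearranges to
\[
	\beta_j g_{j,d} \;\le\; \bigl(1 - 2(1-\alpha_j)\bigr)\,|g_{j,d}| \;-\; \sum_{i \neq j} (o_{i,d} + \beta_i)\, g_{i,d}.
\]
Since $\beta_i \in [-1,1]$ we have $(o_{i,d} + \beta_i) g_{i,d} = |g_{i,d}| + \beta_i g_{i,d} \ge |g_{i,d}| - |g_{i,d}| = 0$, so the subtracted sum is non-negative and can be dropped; dividing by $|g_{j,d}| > 0$ then yields $\beta'_j \le 1 - 2(1-\alpha_j)$, as claimed.

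The whole argument is sign bookkeeping, and the one place that genuinely differs from Lemma~\ref{theo:over_const_1} is that the chain of inequalities now flows in the opposite direction: that is why the useful non-negativity is that of $(o_{i,d} + \beta_i) g_{i,d}$ (a plus) rather than $(o_{i,d} - \beta_i) g_{i,d}$ (a minus), and why it is the positivity of $t^{-}_{j,d}$ — not the negativity of $t^{+}_{j,d}$ — that makes the shift term vanish. The only extra thing to verify is that $g_{j,d} \neq 0$ whenever $d = d^*$ is the active dimension in this case, which holds because otherwise $\alpha_{j,d}$ would be forced to $1$ and $d$ could not attain the minimum; this is handled implicitly in Lemma~\ref{theo:over_const_1} as well.
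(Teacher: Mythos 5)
Your proof is correct and follows exactly the route the paper indicates: it mirrors the chain of inequalities in Lemma~\ref{theo:over_const_1}, starting from $p_d \leq 0$ for $d = d^* \in R_k$, using $t^{-}_{j,d} > 0$ so that $t^{-}_{j,d} - 2(1-\alpha_j)|g_{j,d}| = 0$, and replacing the non-negativity of $(o_{i,d}-\beta_i)g_{i,d}$ by that of $(o_{i,d}+\beta_i)g_{i,d}$. The paper only sketches this symmetry argument, and your write-up supplies the omitted details (including the harmless edge case $\alpha_{j,d}=1$, which makes the claim trivial) without deviating from the intended approach.
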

The proof of Lemma~\ref{theo:over_const_2} is similar to the one of Lemma~\ref{theo:over_const_1} and not shown in detail. The differences to the previous proof are that for a point $p \in Z: p_d \leq 0$ in case $d \in R_k$ and $t^{-}_{j,d} > 0$, we start with $0 \geq p_d $, we use the  $t^{-}_{j,d}$ instead of $ t^{+}_{j,d}$ and signs of the terms are different.

The subset~$S_k$ is located in one quadrant and the corresponding~$R_k$ contains dimensions that are mapped to zero by ReLU (case~1 on $S_k$). 
Thus, we project the over-approximation~$\hat{Z}_k$ in dimensions $d \in R_k$ as described above. This projection is exact:
$\mathrm{ReLU} (S_k) = \mathrm{Proj}_{R_k} (S_k) \subseteq \mathrm{Proj}_{R_k} (\hat{Z}_k)$.

\textbf{Under-approximation of ReLU(Z).}
Finding a tight under-approximation of $S_k$ turns out to be more challenging. We propose to tackle this by solving a constrained optimization problem, 
in which we aim to find a zonotope $\hat{Z}_k$ of maximum volume 
subject to the constraint $\hat{Z}_k \subseteq S_k$:
\begin{equation}
\begin{aligned}
	\hat{Z}_k = \arg \max_{\hat{Z}} V(\hat{Z}) ~~\mathrm{subject~to~~} & \hat{Z} \subseteq S_k
\end{aligned}
\label{eq:exact_under}
\end{equation}
How can we instantiate Equation~\ref{eq:exact_under} to under-approximate~$S_k$ tightly and keep computations efficient? 
We derive an efficient linear program by considering the same search domain as before. 
More precisely, we constrain the search space to zonotopes that are derived from the original zonotope~$Z$, by scaling its generators~$g_i$ by factors~$\alpha_i \in \left[0, 1\right]$, i.e. $\hat{g}_i = \alpha_i g_i $. 
As motivated before, this assumption is reasonable, since~$\hat{Z}_k$ and~$Z$ have similar shapes.

Importantly, to ensure that we under-approximate a part of~$Z$ located in one quadrant, we add a constraint that forces the lower bound of the interval hull of~$\hat{Z}$ to be non-negative if $d \notin R_k$: $\hat{c}_d - \sum_i \left| \hat{g}_{i,d} \right| \geq 0 ~~ \forall d \notin R_k$ and one that forces the upper bound of the interval hull of~$\hat{Z}$ to be negative if $d \in R_k$: $\hat{c}_d + \sum_i \left| \hat{g}_{i,d} \right| \leq 0 ~~ \forall d \in R_k$. 
Since the volume of the zonotope grows with~$\alpha_i$, we instantiate the objective function by $\sum_i \alpha_i$.
Combining all of these considerations results in the following linear optimization problem: 
\begin{equation}
\begin{aligned}
	\alpha^*,\delta^* = \arg \max_{\alpha,\delta} \sum_{i} \alpha_{i} \mathrm{~~}  &\mathrm{subject~to~~} 
	\hat{g}_i = \alpha_i g_i , ~~~
	\alpha_i \in \left[0, 1\right] , ~~~ \\             
	&\hat{c} = c + \sum_i \delta_i g_i , ~~~
	\left|\delta_i\right| \leq 1 - \alpha_i \\
	 & \hat{c}_d - \sum_i \left| \hat{g}_{i,d} \right| \geq 0 ~\forall d \notin R_k , ~~~
	\hat{c}_d + \sum_i \left| \hat{g}_{i,d} \right| \leq 0  ~\forall d \in R_k \nonumber
\end{aligned}
\label{eq:under_opt}
\end{equation}

\begin{theorem}
	Let~$\hat{Z}_k$ be computed from zonotope~$Z$ based on $\alpha^*,\delta^*$,  %
	 then~$\hat{Z}_k \subseteq S_k$. 
	\label{theo:under}
\end{theorem}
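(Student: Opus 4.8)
The plan is to take an arbitrary point $p\in\hat{Z}_k$ and verify directly that it meets both defining conditions of $S_k$: that $p\in Z$, and that $p$ lies in the correct quadrant, i.e. $p_d\geq 0$ for all $d\notin R_k$ and $p_d\leq 0$ for all $d\in R_k$. Writing $p=\hat{c}+\sum_i\hat{\beta}_i\hat{g}_i$ with $\hat{\beta}_i\in[-1,1]$ and substituting $\hat{g}_i=\alpha_i^* g_i$ together with $\hat{c}=c+\sum_i\delta_i^* g_i$, I would re-express $p=c+\sum_i(\delta_i^*+\hat{\beta}_i\alpha_i^*)g_i$, which reduces membership in $Z$ to bounding each coefficient.

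For the $Z$-membership step, it then suffices to show $\delta_i^*+\hat{\beta}_i\alpha_i^*\in[-1,1]$, and this follows from the triangle inequality and the two linear-program constraints $\alpha_i^*\in[0,1]$ and $|\delta_i^*|\leq 1-\alpha_i^*$: indeed $|\delta_i^*+\hat{\beta}_i\alpha_i^*|\leq|\delta_i^*|+|\hat{\beta}_i|\,\alpha_i^*\leq(1-\alpha_i^*)+\alpha_i^*=1$, hence $p\in Z$. For the quadrant step, which is where the interval-hull constraints of the program are used, I would note that for $d\notin R_k$ we have $p_d=\hat{c}_d+\sum_i\hat{\beta}_i\hat{g}_{i,d}\geq\hat{c}_d-\sum_i|\hat{g}_{i,d}|\geq 0$, the last inequality being exactly the constraint $\hat{c}_d-\sum_i|\hat{g}_{i,d}|\geq 0$; symmetrically, for $d\in R_k$, $p_d\leq\hat{c}_d+\sum_i|\hat{g}_{i,d}|\leq 0$. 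Combining the two steps shows every $p\in\hat{Z}_k$ satisfies all conditions defining $S_k$, so $\hat{Z}_k\subseteq S_k$.

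Since the argument is a direct verification, there is no genuine obstacle; the only point requiring care is recognizing that the program was constructed so that its feasibility constraints are precisely the statements ``$\hat{Z}_k$ is a scaled-and-shifted sub-zonotope of $Z$'' (ensured by $|\delta_i^*|\leq 1-\alpha_i^*$ and $\alpha_i^*\leq 1$) and ``the interval hull of $\hat{Z}_k$ stays in the orthant indexed by $R_k$'' (the lower/upper-bound constraints). In particular the objective $\sum_i\alpha_i$ is irrelevant to the containment and only selects which feasible zonotope is returned, so the conclusion holds for any feasible point, not merely the optimizer. This parallels the structure of Theorem~\ref{theo:over_proof} but is markedly simpler, as it needs only an interval-hull estimate rather than the extreme-point scaling analysis underlying Lemma~\ref{theo:over_const_1} and Lemma~\ref{theo:over_const_2}.
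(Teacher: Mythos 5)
Your proof is correct and follows essentially the same route as the paper's: express $p\in\hat{Z}_k$ in terms of the original generators, bound the resulting coefficients by $1$ using $|\delta_i^*|\leq 1-\alpha_i^*$ and $\alpha_i^*\in[0,1]$, and obtain the quadrant condition directly from the interval-hull constraints. The only cosmetic difference is that you apply the triangle inequality to $\delta_i^*+\hat{\beta}_i\alpha_i^*$ directly, whereas the paper reparametrizes via $\gamma_i=\delta_i/(1-\alpha_i)$; your version even avoids the degenerate case $\alpha_i=1$ of that substitution, and your remark that containment holds for every feasible point (not just the optimizer) is accurate.
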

\begin{proof}
	Let $\gamma_i = \frac{\delta_i}{1-\alpha_i}$. Since $\left|\delta_{i}\right| \leq 1-\alpha_i$ it holds that $\gamma_i \in \left[-1, 1\right]$. Since~$p \in \hat{Z}_k$ $\hat{\beta}_i \in \left[-1,1\right]$ exists:
	\begin{align*}
	p &= \hat{c} + \sum_i \hat{\beta_i} \hat{g_i} 
	= c + \sum_i \delta_i g_i + \sum_i \hat{\beta_i} \alpha_i g_i \\
	&= c + \sum_i \gamma_i \left(1-\alpha_i\right) g_i + \sum_i \hat{\beta_i} \alpha_i g_i 
	= c + \sum_i \left( \left( 1 - \alpha_i \right) \gamma_i + \alpha_i \hat{\beta}_i \right) g_i 
	\end{align*}
	To prove that $p \in Z$ we need to show that $\left( 1 - \alpha_i \right) \gamma_i + \alpha_i \hat{\beta}_i \in [-1, 1]$. 
	Considering $\gamma_i \in [-1, 1]$, $\alpha_i \in [0,1]$ and $\hat{\beta_i} \in [-1,1]$ we obtain: 
	\begin{align*}
	\forall i: \left( 1 - \alpha_i \right) \gamma_i + \alpha_i \hat{\beta}_i 
	&\geq - \left( 1 - \alpha_i \right)  - \alpha_i 
	= - 1 \\
	\forall i: \left( 1 - \alpha_i \right) \gamma_i + \alpha_i \hat{\beta}_i 
	&\leq \left( 1 - \alpha_i \right) + \alpha_i
	= 1 
	\end{align*}
	Thus, we define $\beta_i = \left( 1 - \alpha_i \right) \gamma_i + \alpha_i \hat{\beta}_i$ and obtain $p = c + \sum_i \beta_i g_i$ and thus,~$\hat{Z}_k \subseteq Z$.
	The constraints~$\hat{c}_d - \sum_i \left| \hat{g}_{i,d}  \right| \geq 0 \forall d \notin R_k$ and $\hat{c}_d + \sum_i \left| \hat{g}_{i,d} \right| \leq 0 \forall d \in R_k$ ensure that $\hat{Z}$ is located in the desired quadrant: 
	\begin{align*}
	p_d &= \hat{c}_d + \sum_i \hat{\beta_i} \hat{g_{i,d}} 
	\geq \hat{c}_d - \sum_i \left| \hat{g}_{i,d} \right| 
	\geq 0 \mathrm{~~if~~} d \notin R_k \\
	p_d &= \hat{c}_d + \sum_i \hat{\beta_i} \hat{g_{i,d}} 
	\leq \hat{c}_d + \sum_i \left| \hat{g}_{i,d} \right| 
	\leq 0 \mathrm{~~if~~} d \in R_k
	\end{align*}
	Thus, $\hat{Z}_k \subseteq S_k$. \qed
\end{proof}

If the quadrant under consideration is empty (which can happen for many quadrants) the optimization problem is not solvable and we can safely ignore this quadrant. 
Since all points in $\hat{Z}_k$ are negative w.r.t. the dimensions~$d \in R_k$ (case~1), we compute $\mathrm{Proj}_{R_k} (\hat{Z}_k)$ and obtain an under-approximation of $\mathrm{ReLU} (S_k)$.  See Figure~\ref{fig:under_z} for illustration.

\hypertarget{balancing}{\textbf{Balancing approximation tightness and run-time.}}\label{sec:balancing}
For large input sets, the number of convex subsets that define the reachable set of a neural network scales exponentially with the number of neurons. 
Let us consider zonotope~$Z = (c \mid G), ~ c \in \mathbb{R}^D, ~ G \in \mathbb{R}^{n \times D}$. In the worst case, $Z$ consists of points that are spread over~$2^D$ quadrants. RsO and RsU approximate each subset~$S_k$ located in one quadrant by a separate zonotope~$Z_k$. 

To balance approximation tightness and run-time, we extend RsO and RsU, such that the number of zonotopes can be restricted by the user. The overall number of zonotopes (w.r.t.\ the whole neural network) is restricted by~$B$ and the amplification is restricted by~$A$. The amplification is the maximum number of zonotopes $Z_k$ used to approximate~$\mathrm{ReLU}(Z)$ w.r.t.\ one layer and one input zonotope~$Z$. It is defined by the number of quadrants~$q$ that contain points of~$Z$ and can be computed as follows. First, we compute the interval hull of~$Z$. 
With respect to dimension~$d$, all points within~$Z$ are in the interval $\left[ l_{\mathrm{low}}^d, l_{\mathrm{upp}}^d \right] = \left[c_d - \delta g_d, c_d + \delta g_d \right]$, where~$\delta g = \sum_i |g_i|$. Second, we count the number~$R_n$ of dimensions~$d$ where $l_{\mathrm{low}}^d < 0$ and $l_{\mathrm{upp}}^d > 0$.  The number of quadrants is $q = 2^{R_n}$. 

Over-approximation: If $q > A$, we compute the interval hull of~$Z$ and restrict the intervals to their positive portion. Thus, we over-approximate $\mathrm{ReLU} (Z)$ by one zonotope instead of~$q$ zonotopes. 
If the overall number of zonotopes is larger than~$B$, we estimate the size of each zonotope~$Z = (c \mid G)$ by 
$\mathrm{size}\left(Z\right) = \sum_d \log (\delta g)_d$, where $\delta g = \sum_{i} |g_i|$.
The largest~$B-1$ zonotopes are kept while the smaller ones are merged (i.e. we compute an over-approximation of their union by minimizing/maximizing over the lower/upper limits of their interval hulls). The resulting interval is transformed into the G-representation of a zonotope:
$l_{\mathrm{low}} = \min_{Z_k} (c_k - (\delta g)_k)$, 
$l_{\mathrm{upp}} = \max_{Z_k} (c_k + (\delta g)_k)$,
$Z_{\mathrm{uni}} = ( l_{\mathrm{low}} + g_{\mathrm{ext}} \mid \mathrm{diag} (g_{\mathrm{ext}}))$,
$g_{\mathrm{ext}} = 0.5 (l_{\mathrm{upp}} - l_{\mathrm{low}} )$
where $\mathrm{diag} (g_{\mathrm{ext}})$ is a diagonal matrix. 

Under-approximation: In this case, we simply drop the smallest zonotopes if $q > A$ or the overall number of zonotopes is larger than~$B$.

\section{Applications and Experiments}
\label{sec:app_res}

We highlight the versatility of our RsO and RsU approach by describing several applications in classification and regression tasks. 
More specifically, we discuss (non-)robustness verification, robust training, quantification of feature importance and the distinction between reliable and non-reliable predictions. 
Furthermore, we analyze reachable sets of an autoencoder. 
Our input zonotopes capture three different shapes: cube (equivalent to $L_{\infty}$-norm), box (with a different perturbation on each feature) and free (with coupling of features). 
We train feed-forward ReLU networks on standard data sets.

\textbf{Experimental Setup.}
Our approaches are implemented in Python/Pytorch. We train feed-forward ReLU networks using stochastic gradient descent with cross-entropy loss (classifiers), Huber loss (regression models), mean-square-error loss (autoencoder models) or robust loss functions (see following sections) and early stopping. Experiments are carried out on the following popular data sets and neural network architectures (accuracy denotes worst accuracy obtained for this data set by one of the specified neural network architectures): 
\textbf{\underline{Classifiers}}: 
\textbf{Iris} \cite{iris, ucirepo}: $3$ classes, $4$ features, $1-5$ hidden layers of $4$ neurons each. %
\textbf{Wine} \cite{wine, ucirepo}: $3$ classes (cultivars), $13$ features, $1-5$ hidden layers of $6$ neurons each. %
\textbf{Tissue} \cite{tissue, ucirepo}: breast tissue probes, $6$ classes, $9$ features, $1-3$ hidden layers of~$8$ neurons each. %
Breast \textbf{cancer} Wisconsin (diagnostic) \cite{cancer, ucirepo}: $2$ classes, $30$ features, $1-2$ hidden layers of $10$ neurons each. %
\textbf{MNIST} \cite{mnist}: $28 \times 28$ gray-scale images, $10$ classes, $1$ hidden layer of~$15$ neurons. %
\textbf{Fashion-MNIST} \cite{fashionmnist}: $28 \times 28$ gray-scale images, $10$ classes, $1$ hidden layer of~$15$ neurons or $5$ hidden layers of~$30$ neurons. %
\textbf{\underline{Regression Models}}: 
\textbf{Abalone} \cite{abalone}: $8$ features, $1$ output, $1-3$ hidden layers of~$6$ neurons each. 
\textbf{Housing} \cite{housing}: $13$ features, $1$ output, $1$ hidden layer of~$13$ neurons.
\textbf{Airfoil} \cite{airfoil}: $5$ features, $1$ output, $1-4$ hidden layers of~$5$ neurons each.
\textbf{\underline{Autoencoder}}: 
\textbf{MNIST} \cite{mnist}: $28 \times 28$ gray scale images, $28 \times 28$ output, $3$ hidden layers of~$30 \times 60 \times 30$ neurons. 
\textbf{Fashion-MNIST} \cite{fashionmnist}: $28 \times 28$ gray scale images, $28 \times 28$ output, $3$ hidden layers of~$30 \times 60 \times 30$ neurons. 

For classification, all data sets are balanced by sub-sampling training- and test-sets such that evaluation experiments are done on the same amount of points for each class. The input size of MNIST and Fashion-MNIST is reduced from $28 \times 28$ to~$30$ by using principle component analysis (PCA). 
In the evaluation experiments, we use $30$ input points of the iris data set, wine data set and tissue data set, $86$ points of the cancer data set, $200$ point of the MNIST data set and $100$ points of the Fashion-MNIST data set, which are not part of the training set.

Experiments are conducted in Python (version 3.6) on a machine with 10 Intel Xeon CPU cores with 2.2 GHz, 4 GEFORCE GTX 1080 Ti and 256 GB of RAM running Ubuntu (version 16.04.6).

\textbf{Definition of Input Sets.}
Using zonotopes as input sets has the advantage that we are able to verify different kinds of perturbations. Here, the input set~$\hat{Z} = (\hat{c} \mid \hat{G})$ is defined by using an input data point~$x$ as center~$\hat{c}$ and the following perturbations specified by the generator matrix~$\hat{G}$.  
\textbf{Cube}: $\hat{Z}_{\mathrm{cube}}$ is a hyper-cube whose shape is equivalent to the unit ball of the~$L_{\infty}$-norm. As the allowed perturbation on each input feature is the same, the generator matrix is $\varepsilon I_d$ for different~$\varepsilon$. 
\textbf{Box}: $\hat{Z}_{\mathrm{box}}$ is a so called axis-aligned parallelotope ($n$-dimensional box). This shape allows different disturbances on each input feature, but it does not couple features.  For this, we first compute a zonotope by using the eigen-vectors that correspond to the~$d$ largest eigenvalues of the data set as generators. 
$Z_{\mathrm{box}}$ is obtained by computing the interval hull of this zonotope and scaling its volume such that it is equivalent to the volume of $\hat{Z}_{\mathrm{cube}}$ for a given~$\varepsilon$.
\textbf{Free}: $\hat{Z}_{\mathrm{free}}$ is an arbitrary zonotope that enables disturbances to be coupled between input features which cannot be captured by norms or intervals. This input zonotope is obtained by increasing/decreasing all feature values simultaneously by at most~$\varepsilon$ and additionally allowing a small, fixed perturbation~$\delta \ll \varepsilon$ on each feature. If the input is an image, this perturbation would brighten/darken all pixel values simultaneously: $\hat{G} = [\delta I_d, \varepsilon \vec{1}]$.

For feature rankings, the following setting is used: 
to quantify the influence of feature~$f_1$ on the prediction of~$x$, we define a box-shaped input set~$\hat{Z}_{f_1} = (x | G)$ around~$x$ that allows a perturbation~$\delta$ on~$f_1$ and a minimal perturbation $\varepsilon$  (here: $\varepsilon=0.01$) on all other features. More formally, we use a diagonal input matrix~$G$, where~$G_{1,1} = \delta$ and~$G_{i,i} = \varepsilon~ \forall i \neq 1$.

\textbf{Classification: (Non-)Robustness Verification.}
First, we evaluate the potential of reachable sets by using them for robustness/non-robustness verification, i.e. for studying how predictions of a classifier change when perturbing input instances. 
More precisely, we aim to analyze if predictions based on an input set map to the same class or if they vary.
Formally, the set of predictions (classes) is $P = \{\arg \max_c f(x)_c | x \in I\}$, given input set $I$.

For verification, we compute a robustness score against each class. 
Let~$a$ be the predicted class and $b \neq a$ the class against which we quantify robustness.\footnote{Please note that reachable sets capture all classes jointly. More precisely, the method does not require any label/class information at all. Thus, it is directly applicable to other tasks such as regression.} 
The least robust point~$p$ within the reachable set (output/logit space) is the one where its coordinate $p_b$ is close to or larger than~$p_a$. Based on these considerations, we define the robustness score against class~$b$ of reachable set $R_S$: 
\begin{equation}
\begin{aligned}
\mathrm{s}_b &= \min_{p \in R_S} \left(p_a - p_b \right) %
= \min_{Z = (c \mid G) \in R_S} \left( c_a - c_b - \sum_i \left| g_i^a - g_i^b \right| \right) 
\end{aligned}
\label{eq:scores}
\end{equation}
where $Z\in R_S$ denotes the computed zonotopes, and we use that $p_a = c_a + \sum_i \beta_i g_i^a$, $p_b = c_b + \sum_i \beta_i g_i^b$ and $\sum_i \beta_i (g_i^a - g_i^b)$ is minimal if~$\beta_i \in \{-1, 1\}$ depending on the sign of $g_i^a - g_i^b$. 

Robustness certificates are obtained by computing the scores against {all} classes $b\neq a$ on the \textit{over}-approximated reachable set $R_{SO}$. If \textit{all} scores are \textit{positive}, the robustness certificate holds, and all points from the input set are classified as class~$a$. Non-robustness certificates are obtained by checking if there is a class~$b$, such that $s_b$ on the \textit{under}-approximated reachable set $R_{SU}$ is \textit{negative}. If this is the case, at least one point from the input set is categorized as class~$b$.

There are three benefits to these scores.  
First, computing scores is efficient (see Equation~\ref{eq:scores}). What is more, the scores are fully differentiable w.r.t.\ the network parameters, enabling immediate robust training (see later experiment). 
Second, the scores are applicable to \textit{class-specific verification} (i.e. robust against class $b_1$, non-robust against~$b_2$). 
And thirdly, the scores allow relative quantification of (non-)robustness. A reachable set with a high score is more robust than one with a low score.

We compare the performance of \textsc{RsO} on robustness verification using the state-of-the-art methods, \textsc{wk} (wong-kolter)~\cite{wong2017},  dz (deepzono)~\cite{deepzono}, dp (deeppoly)~\cite{deeppoly}, dr (refinezono)~\cite{refinezono} and es (exact approach)~\cite{exactreach}, which computes the exact reachable set (implementation~\cite{julia_toolbox}).
RsU is compared with the success rate of FGSM attacks~\cite{goodfellow2014, szegedy2014} and PGD attacks~\cite{pgd_attack}. 
To handle the box setting, FGSM attacks are scaled, such that the perturbed input is contained within the input zonotope. The PGD attack is projected onto the input zonotope in each step, i.e.\  extended to handle arbitrary input zonotopes. 
Figure~\ref{fig:res_comp} and~\ref{fig:res_additional} illustrate (non-)robustness verification on the cancer data set, MNIST, iris data set and FashionMNIST for cube-, box- and free-shaped input zonotopes. 
\begin{figure}[ht]
	\centering
	\includegraphics[width=0.32\textwidth]{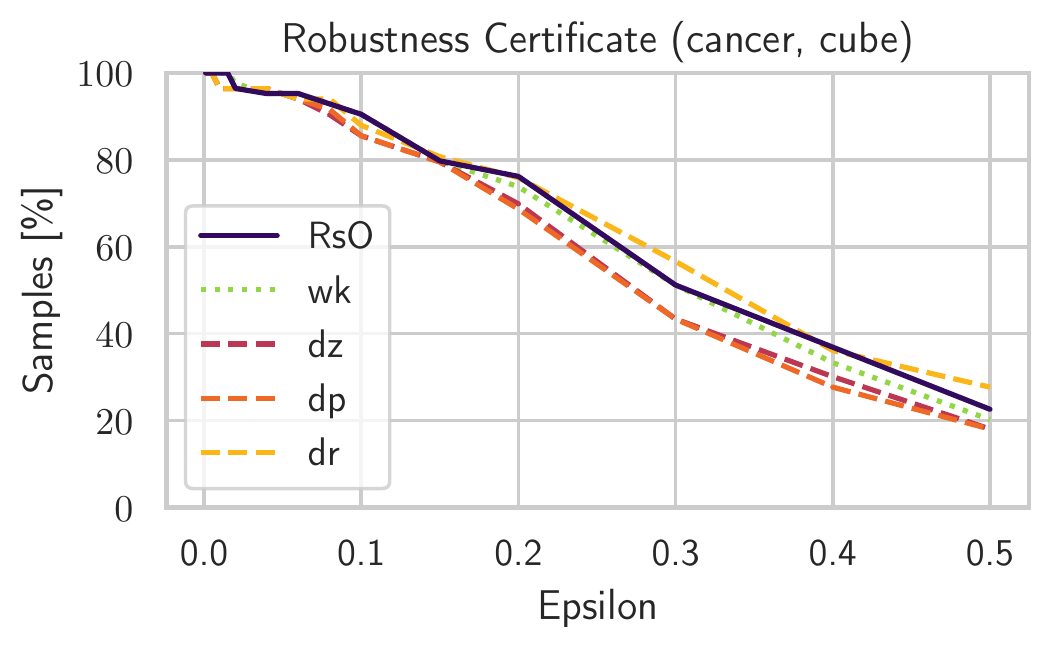}
	\includegraphics[width=0.32\textwidth]{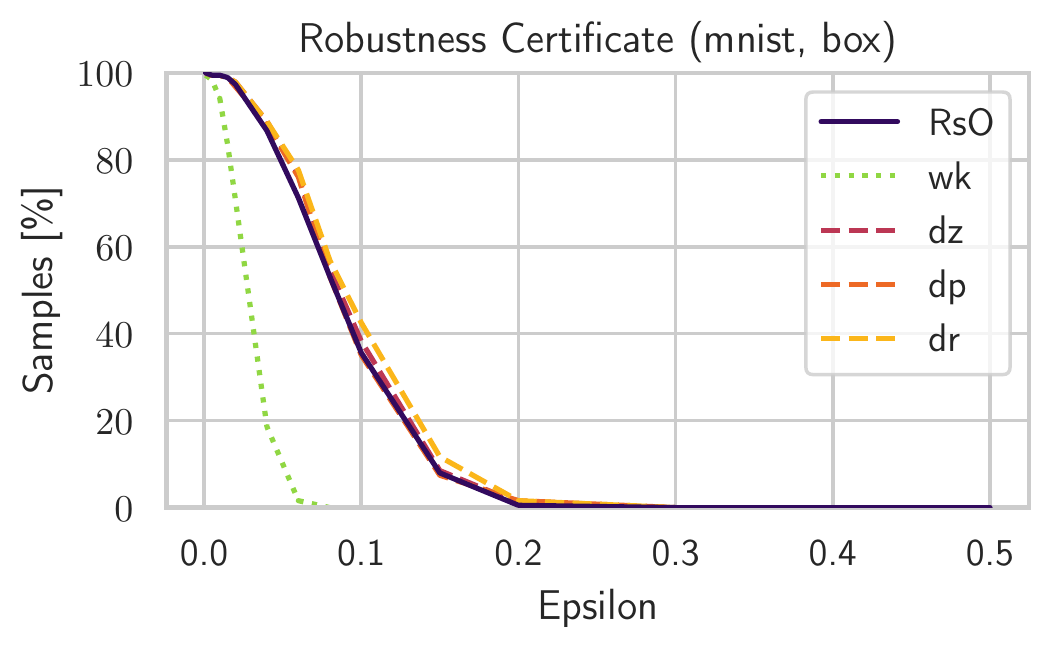}
	\includegraphics[width=0.32\textwidth]{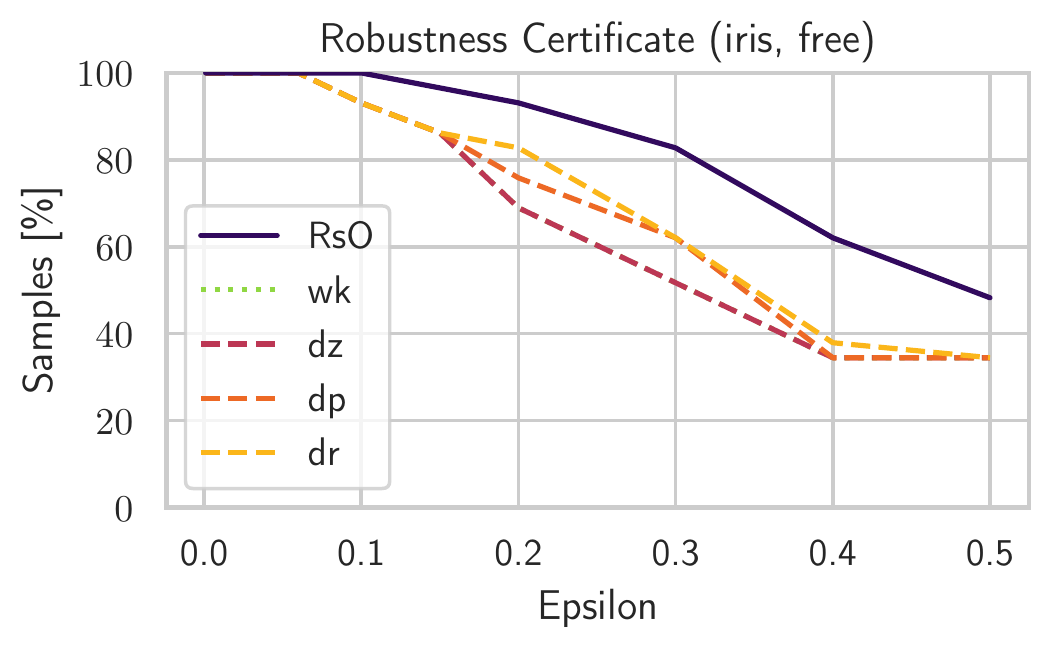}
	\includegraphics[width=0.32\textwidth]{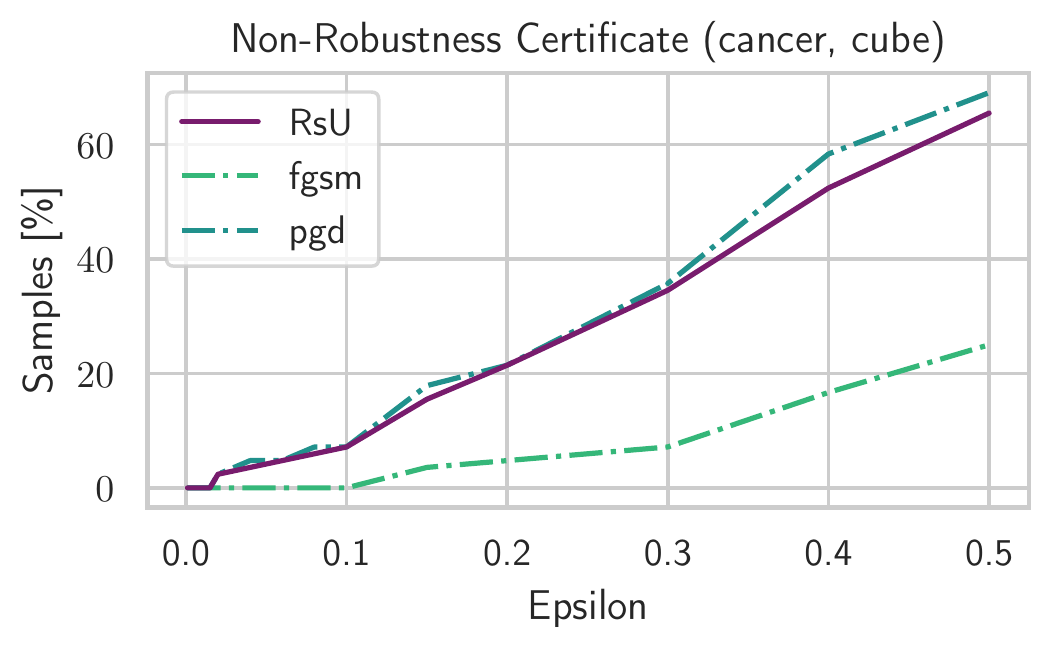}
	\includegraphics[width=0.32\textwidth]{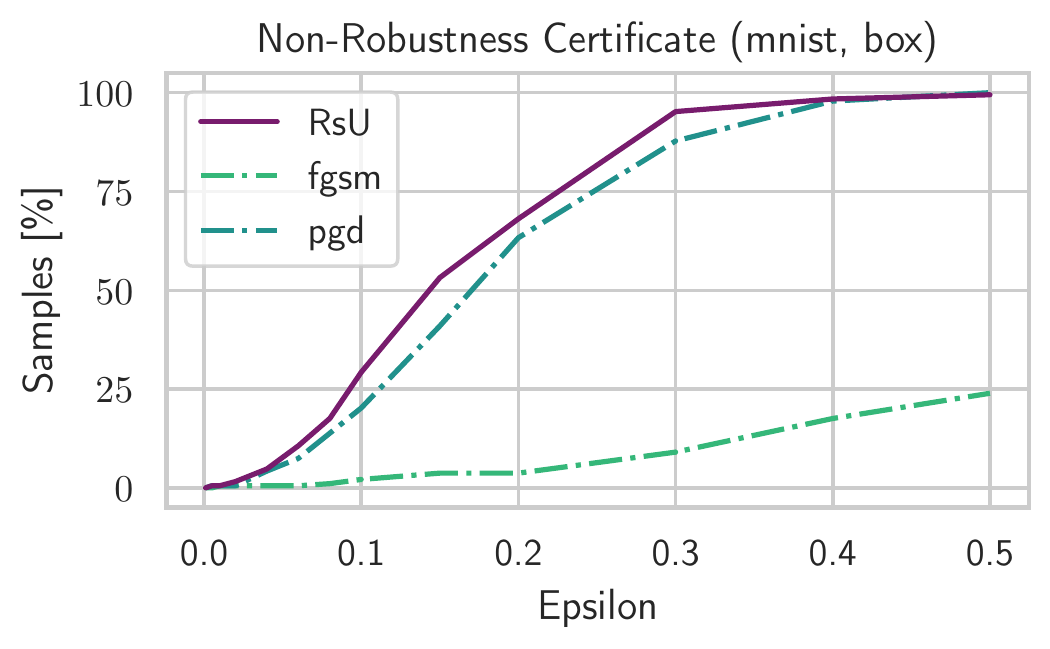}
	\includegraphics[width=0.32\textwidth]{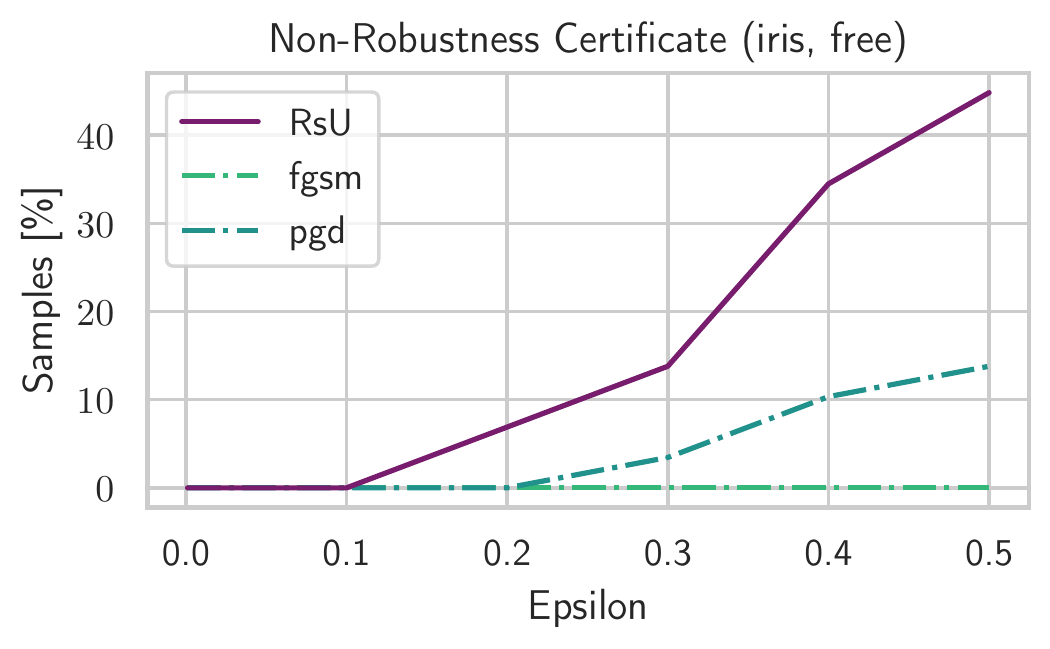}
	\caption{Performance evaluation of RsO and RsU in (non-)robustness verification on the cancer data-set (left, 2 hidden layers, acc. $97\%$), MNIST (middle, 1 hidden layer, acc. $94\%$) and iris (right, 5 hidden layers, acc. $97\%$).}
	\label{fig:res_comp}
\end{figure}
For robustness verification, we measure the number of samples for which the scores against all non-target classes are positive. 
For non-robustness verification, we count the number of samples in which a negative score exists against a class. 
In the cube and box settings, RsO perform similar way to dr, dz and dp, while RsU is similar (cube) or slightly better (box) than PGD attacks. Based on arbitrary input zonotopes (free setting), RsO and RsU outperform both state-of-the-art robustness verification approaches and PGD attacks. 
\begin{figure}[ht!]
	\centering
	\includegraphics[width=0.32\textwidth]{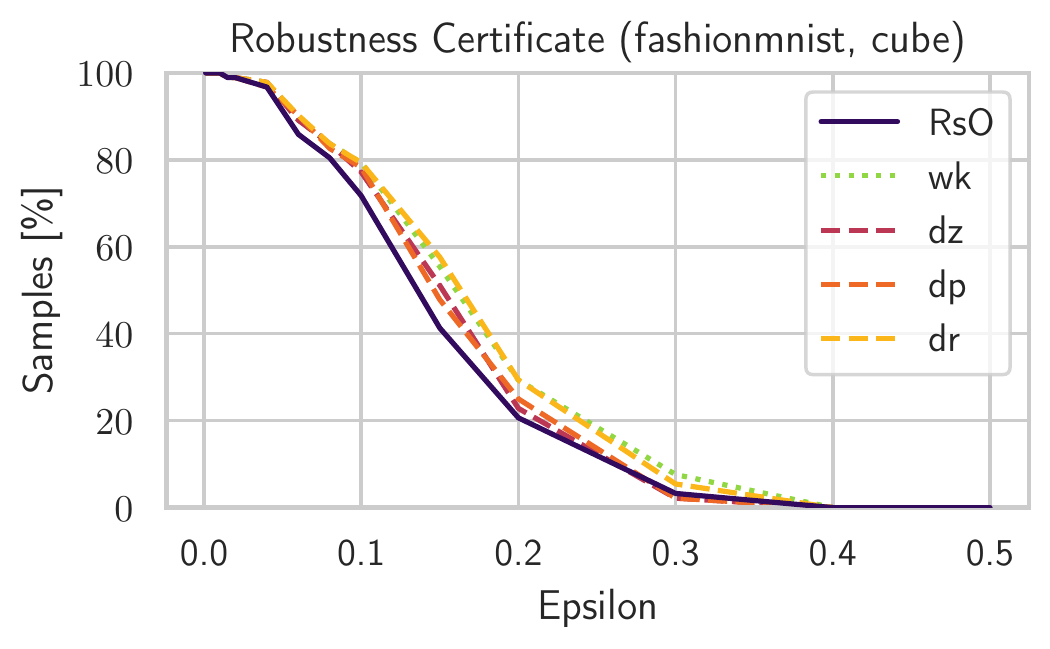}
	\includegraphics[width=0.32\textwidth]{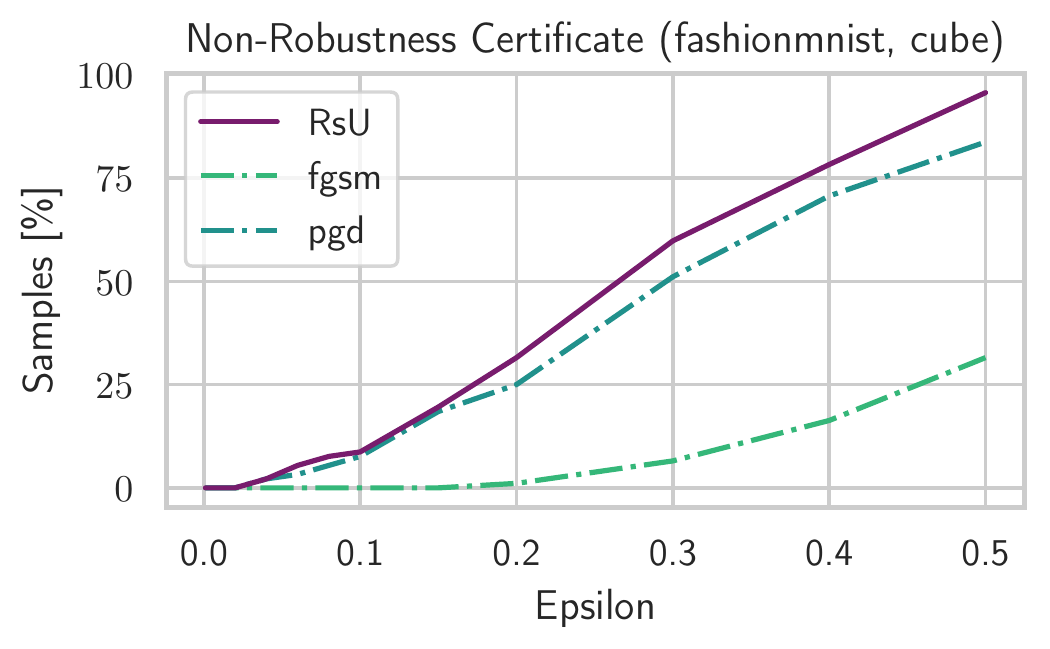}
	\includegraphics[width=0.32\textwidth]{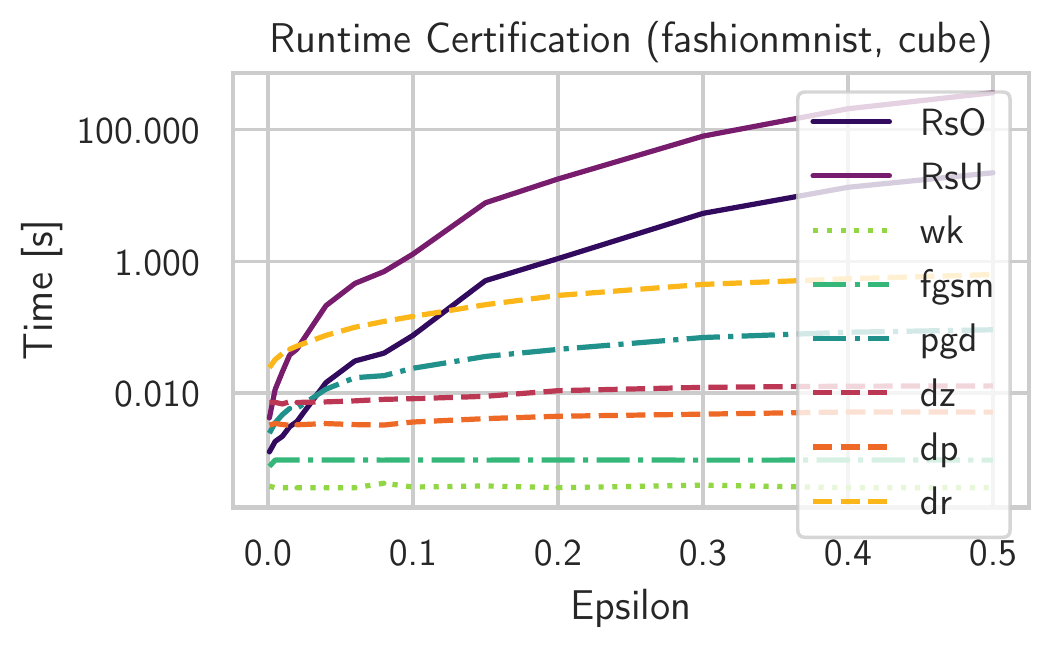}
	\includegraphics[width=0.32\textwidth]{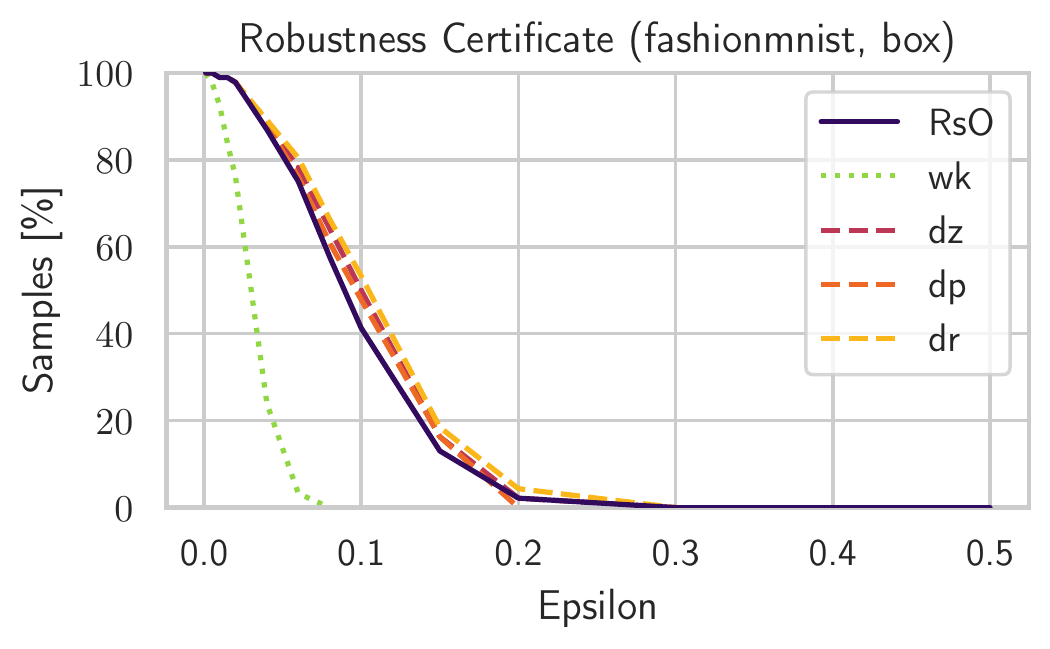}
	\includegraphics[width=0.32\textwidth]{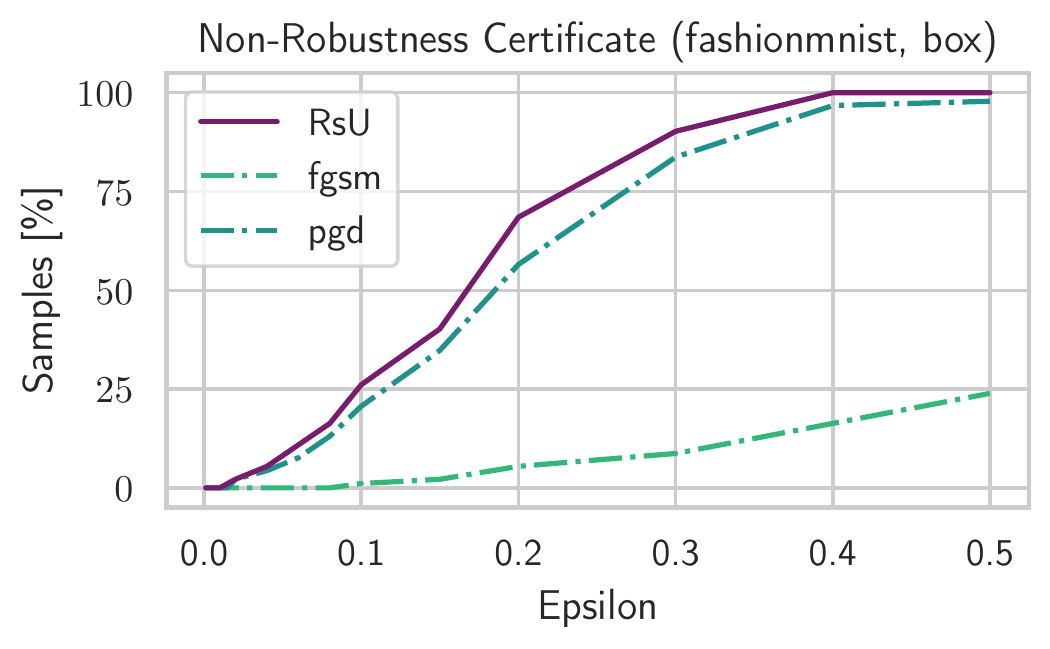}
	\includegraphics[width=0.32\textwidth]{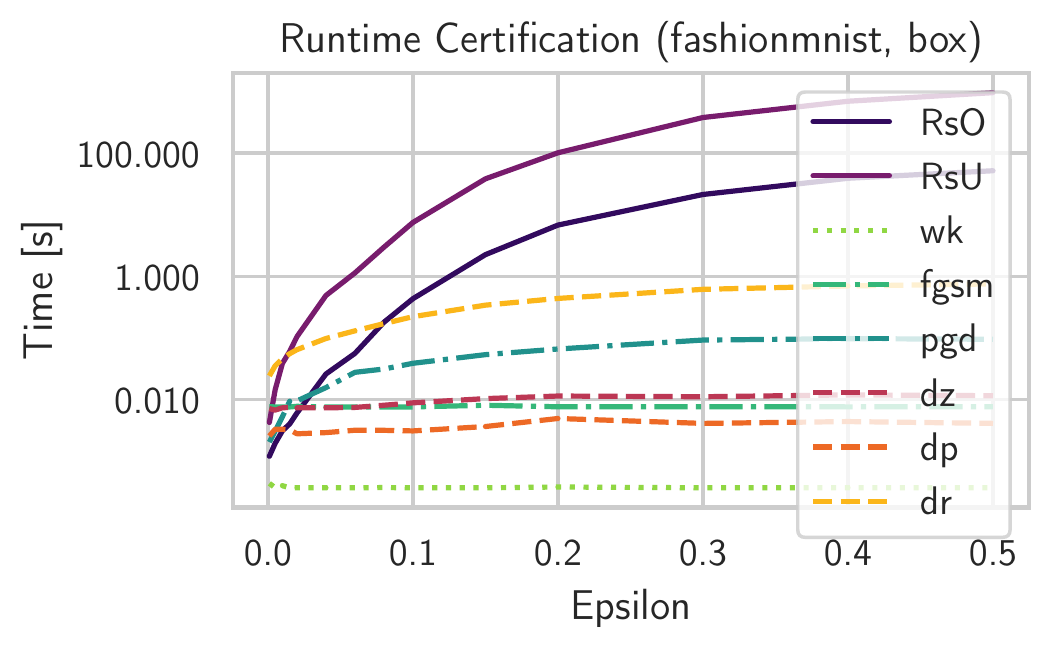}
	\includegraphics[width=0.32\textwidth]{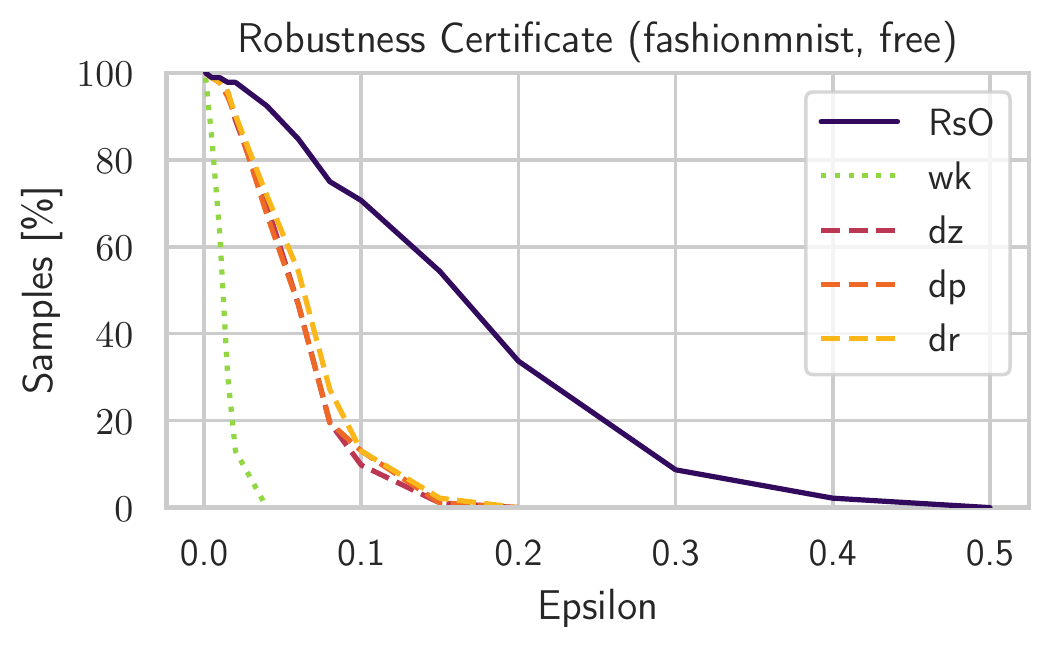}
	\includegraphics[width=0.32\textwidth]{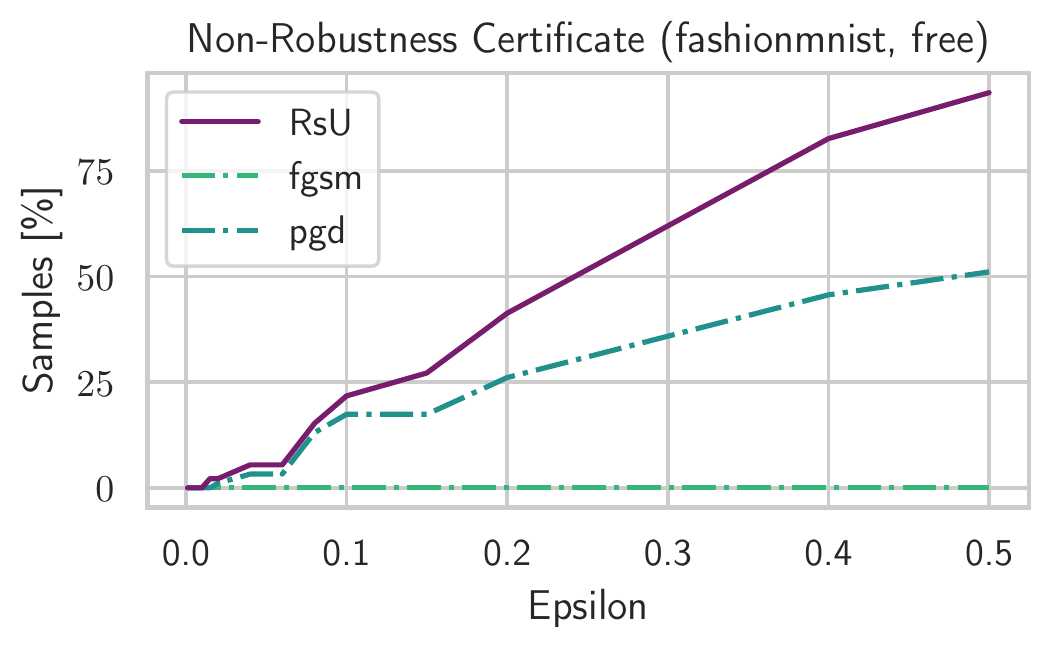}
	\includegraphics[width=0.32\textwidth]{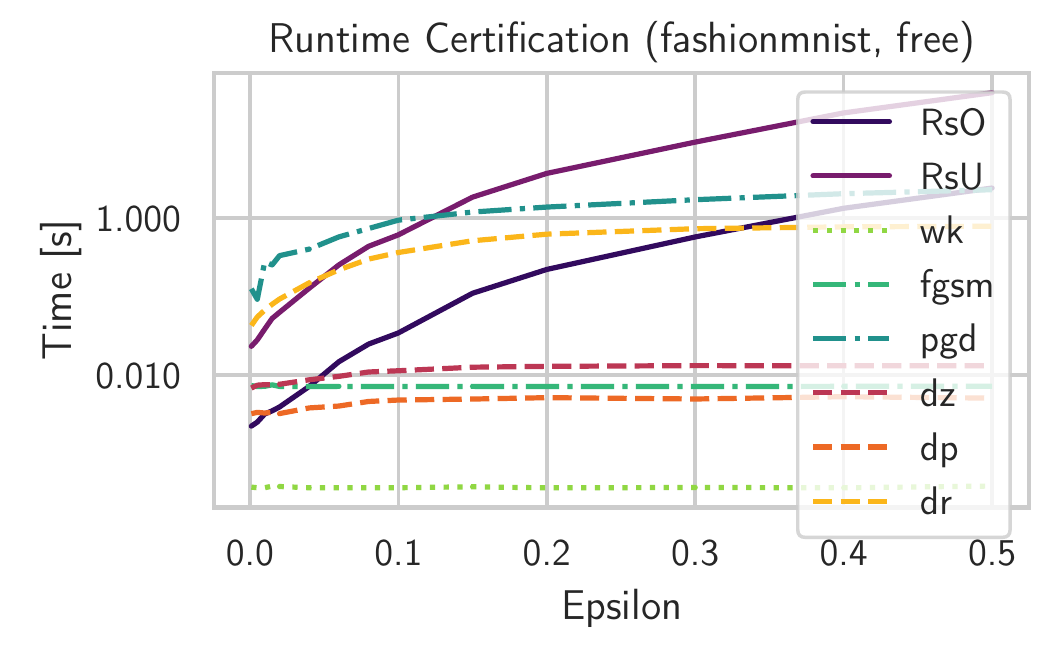}
	\caption{Performance evaluation and run-time of RsO and RsU in (non-)robustness verification on Fashion-MNIST (acc. $92\%$) using cube-shaped (top), box-shaped (middle) and freely-shaped input sets (bottom).}
	\label{fig:res_additional}
\end{figure}
The run-time of RsO and RsU increases with the number of input features, the number of neurons in the neural network and the perturbation~$\varepsilon$. 
The dependency on~$\varepsilon$ is due to the fact that huge sets usually decompose into more convex subsets than smaller sets when they are subject to ReLU, and so, run-time increases with the size of the input set. 
Note that we compute the full reachable set of the neural network, which provides much more information than a binary (non-)robustness-certificate. The other techniques, dz, dp, dr are designed for robustness verification/attacks and do not return any further information. A run-time comparison is thus biased. Still, for smaller $\varepsilon$ and also for the free-shaped input, the absolute run-time of our methods is competitive.

Since es \cite{exactreach, julia_toolbox}, which computes the exact reachable set, requires too much time even with the smallest neural network architecture, it was not possible to conduct a meaningful comparison. 
The exact approach es only ran on the smallest neural network (iris data set, neural network with 1 hidden layer of 4 neurons) for the smallest perturbations $\varepsilon \in \{0.001, 0.005, 0.01\}$ (see Table~\ref{tab:res_julia})\footnote{Note that we used a version of \cite{julia_toolbox} in which a previously existing bug in an underlying library has been fixed. This fix is crucial for correctness, but results in longer run-times than originally reported in \cite{julia_toolbox, exactreach}.}. 
Note that the exact approach es certifies~$28$ of the~$29$ samples as robust and $0$ as non-robust and rejects one sample for which it was not able to solve an underlying optimization problem. 
\begin{table*}[ht]
	\begin{center}
		\begin{tabular}{cccccccc}
			\toprule
			$\varepsilon$ & \multicolumn{2}{c}{RsO} & \multicolumn{2}{c}{RsU}     & \multicolumn{3}{c}{es} \\
			\cmidrule(lr){2-3} \cmidrule(lr){4-5} \cmidrule(lr){6-8}
			& No. rob. & Time [ms]    & No. non-r. & Time [ms]  & No. rob.    & No. non-r. & Time [ms] \\
			\midrule
			$0.001$       & 29         & 0.47       & 0              & 0.46    & 28         & 0              & 14.68  \\
			$0.005$       & 29         & 0.46       & 0              & 0.47    & 28         & 0              & 14.56  \\
			$0.01$        & 29         & 0.47       & 0              & 0.46    & 28         & 0              & 14.61  \\
			$0.02$        & 29         & 0.58       & 0              & 2.40    & -          & -              &  $>$ 3d   \\
			\bottomrule
		\end{tabular}
	\caption{Comparison of RsO and RsU with the exact reachable set computation (es) \cite{exactreach, julia_toolbox} on 29 correctly classified samples of the iris data set (neural network with 1 hidden layer of 4 neurons, acc. $97\%$, cube setting).}
	\label{tab:res_julia}
	\end{center}
\end{table*}
When performing the exact method es on a cube-shape input with perturbation~$\varepsilon = 0.02$, it did not finish even after more than three days. This might be explained by the fact that es uses half-spaces to describe the reachable set. Applying ReLU on sets described by half-spaces requires exponential time, and thus, es is not feasible even for small neural networks.
Consequently, the reachable set needs to be over-/under-approximated as in our approach.

\textbf{Classification: Class-Specific Verification.} 
Robustness scores allow class-specific (non-)robustness verification 
in cases where distinguishing between classes is not equally important, e.g. in the tissue data set.  
The authors of the data set are of the opinion that distinguishing between the class~3, 4 and~5 (fibro-adenoma, mastopathy and glandular) is of minor importance, while it is crucial to distinguish these classes from class~$1$ (carcinoma). 
This is illustrated in Figure~\ref{fig:classspecific}, left part, where classes 3, 4 and 5 are not robust against each other, while class~1 is robust against all other classes (plot: percentage of instances which are evaluated as (non-)robust; x-axis: ground truth class, y-axis: class we test against). 
Thus, class-specific analysis allows classifiers to be evaluated more specifically and focus on crucial robustness properties.

\begin{figure*}[ht!]
	\centering
	\includegraphics[width=0.2425\textwidth]{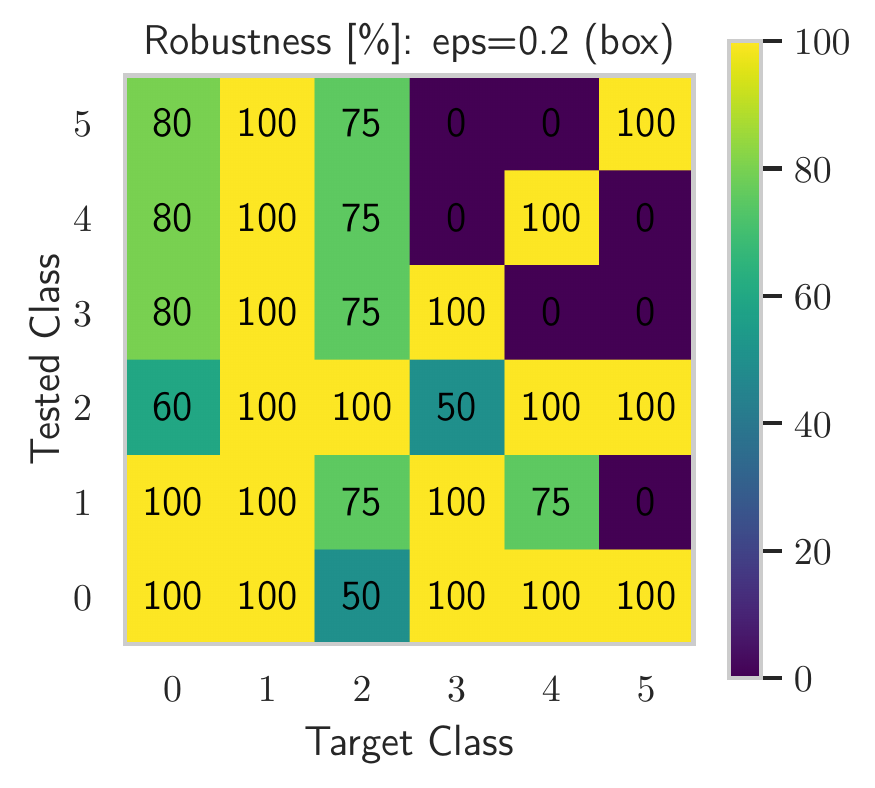}
	\includegraphics[width=0.2425\textwidth]{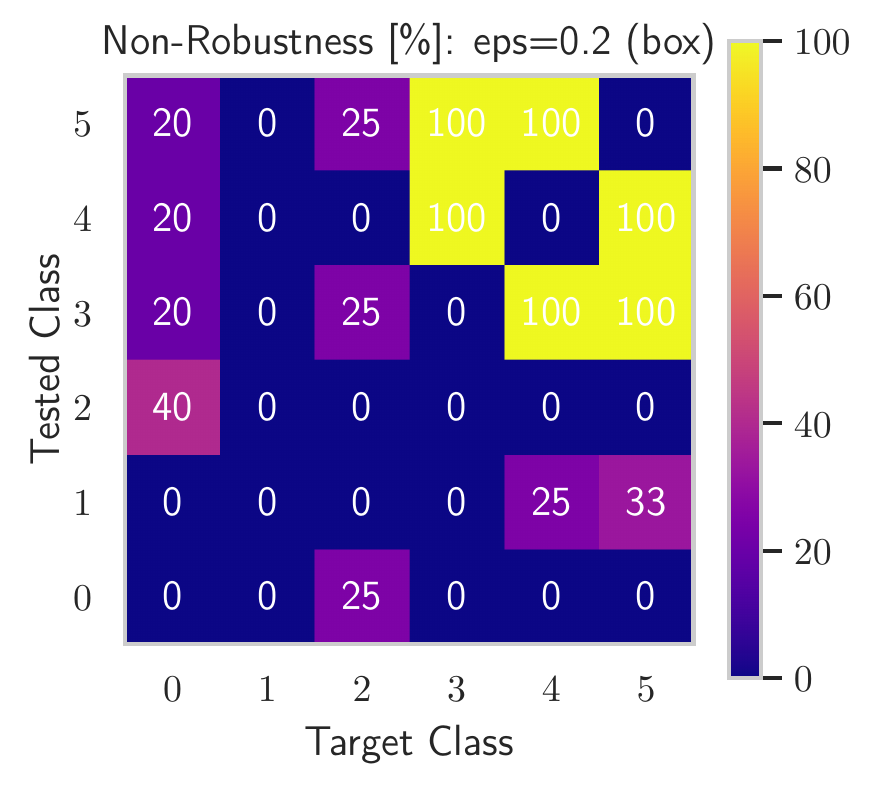}
	\rulesep
	\includegraphics[width=0.2425\textwidth]{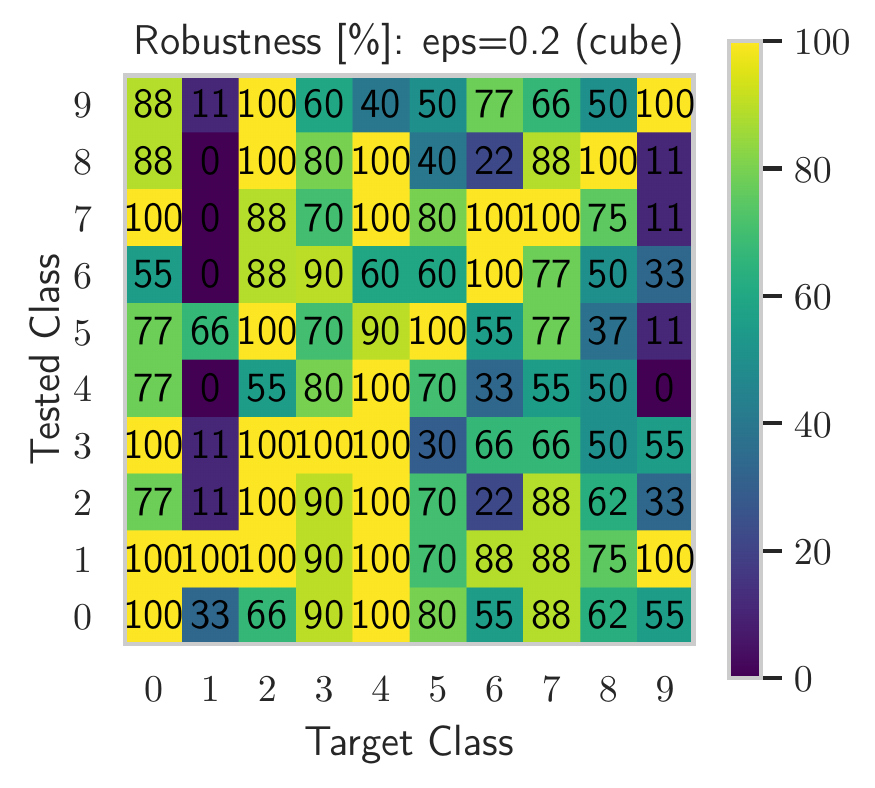}
	\includegraphics[width=0.2425\textwidth]{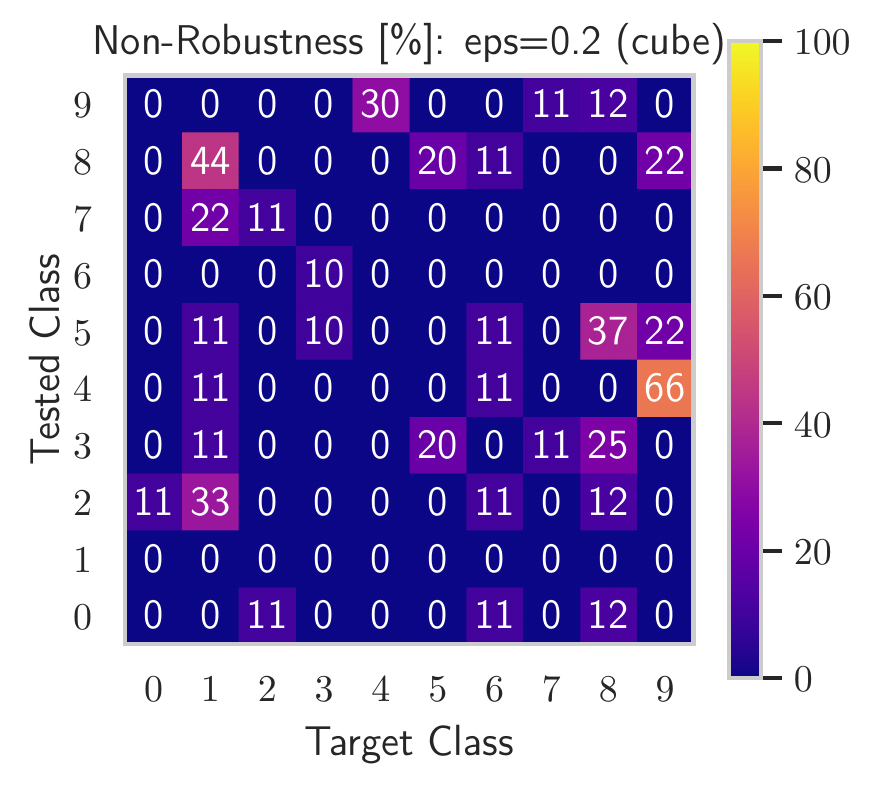}
	\caption{Class specific verification on the breasttissue data-set (3 hidden layer, acc. $97\%$, box setting, left) and Fashion-MNIST (1 hidden layer, acc. $92\%$, cube setting, right).}
	\label{fig:classspecific}
\end{figure*}

Furthermore, it allows us to draw conclusions about the concepts a neural network has learned (see Figure~\ref{fig:classspecific}, right part, Fashion-MNIST with classes: 0 top, 1 trousers, 2 pullover, 3 dress, 4 coat, 5 sandal, 6 shirt 7 sneaker, 8 bag, 9 boot). It is striking that class 2 pullover is less robust against classes of items of a similar shape (0 top, 4 coat) but robust against classes of items of different shapes  (1 trousers, 3 dress, 5 sandal, 8 bag, 9 boot). This indicates that the neural network has extracted the shape and learned its importance for a classification decision.

\textbf{Classification: Reliability of Predictions.} 
Distinguishing between reliable (label 0) and non-reliable predictions (label 1) can be seen as a binary classification problem. 
Although a wrong prediction (w.r.t.\ ground truth) can theoretically have a high robustness score, we observe that the robustness scores corresponding to wrongly predicted inputs are mostly negative or close to zero.
Thus, we consider a prediction as reliable if the corresponding robustness scores (w.r.t.\ the predicted class) is larger than a positive threshold~$\theta$.
This threshold~$\theta$ is chosen such that it maximizing the number of correctly identified reliable/ non-reliable samples on the validation set. 
Table~\ref{tab:res_reliablity} compares the performance of RsO with our proposed baseline approach that uses softmax scores to distinguish between reliable and non-reliable predictions. 

\begin{table*}[ht]
	\begin{center}
		\begin{tiny}
			\begin{tabular}{cccc}
			\toprule
				           & TPR [\%] & TNR  [\%]  & reliablity acc. [\%]  \\
			\midrule
			RsO            & $90.5$  &  $75.0$  & $89.5$ \\
			softmax scores & $92.0$  &  $71.4$  & $90.7$ \\
			\bottomrule
		\end{tabular}
	\end{tiny}
	\caption{Distinguishing between reliable and non-reliable predictions: comparison of RsO and softmax scores (fashionmnist, classification acc. $96~\%$, $\varepsilon=0.005$).}
	\label{tab:res_reliablity}
\end{center}
\end{table*}
Our comparison shows that, while softmax scores result in a slightly higher true-positive-rate and overall accuracy, 
RsO provides a significantly higher true-negative-rate. 
Thus, RsO identifies more non-reliable predictions than softmax scores. Furthermore, RsO provides a robustness certificate as well as an indicator for reliability.

\textbf{Classification: Robust Training.}
The robustness scores as defined in Equation~\ref{eq:scores} are directly used in robust training by incorporating them into the loss function, e.g. as follows: 
\begin{equation}
\begin{aligned}
L_{\mathrm{rob}} &= L_{\mathrm{pred}} + \mathbb{I}[{\mathrm{pred=target}}] \cdot \max_b \mathrm{ReLU}\left( - s_b \right)
\end{aligned}
\label{eq:loss_classification}
\end{equation}
where $L_{\mathrm{pred}}$ is the cross-entropy loss and $\mathbb{I}[{\mathrm{pred=target}}] =1$ for correctly classified inputs, otherwise~$0$. Note that the loss is fully differentiable w.r.t.\ the neural network weights (i.e. we can backpropagate through the zonotope construction) which makes it possible to train a model with enhanced robustness against any perturbation that can be described by any (input) zonotope. 
Figure~\ref{fig:res_class_robtrain} compares robustness of models obtained by robust training ($L_{{rob}}$), retraining (warm-start with a normally trained model, further training with $L_{\mathrm{rob}}$), normal training, and mixup (a robust training technique based on a convex combination of samples, see \cite{zhang2018mixup}). 
\begin{figure}[htb]
	\centering
	\includegraphics[width=0.32\textwidth]{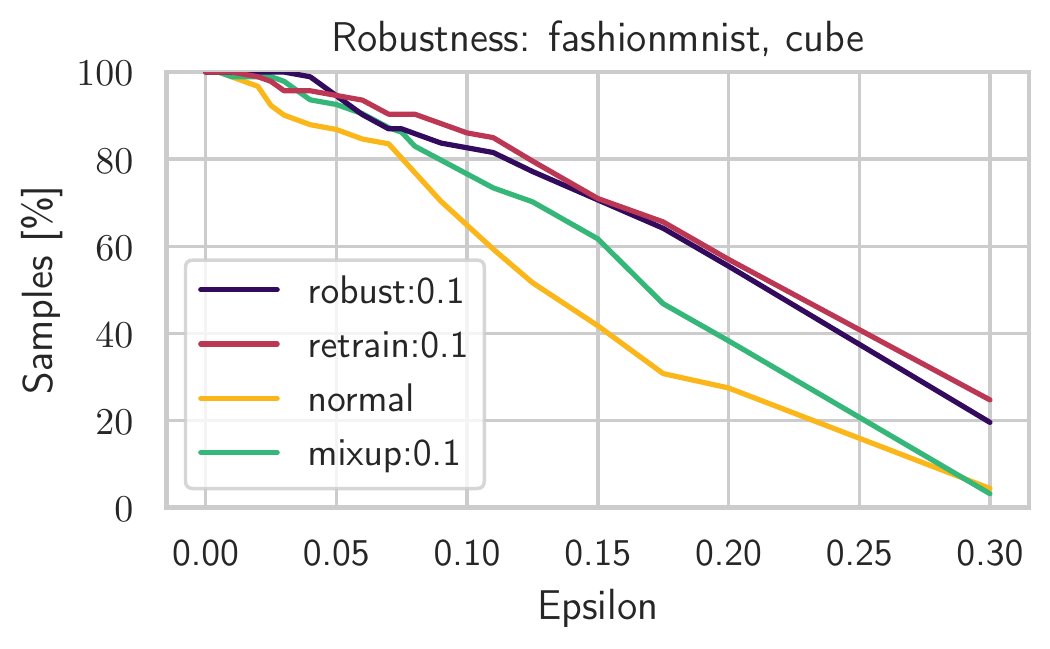} %
	\includegraphics[width=0.32\textwidth]{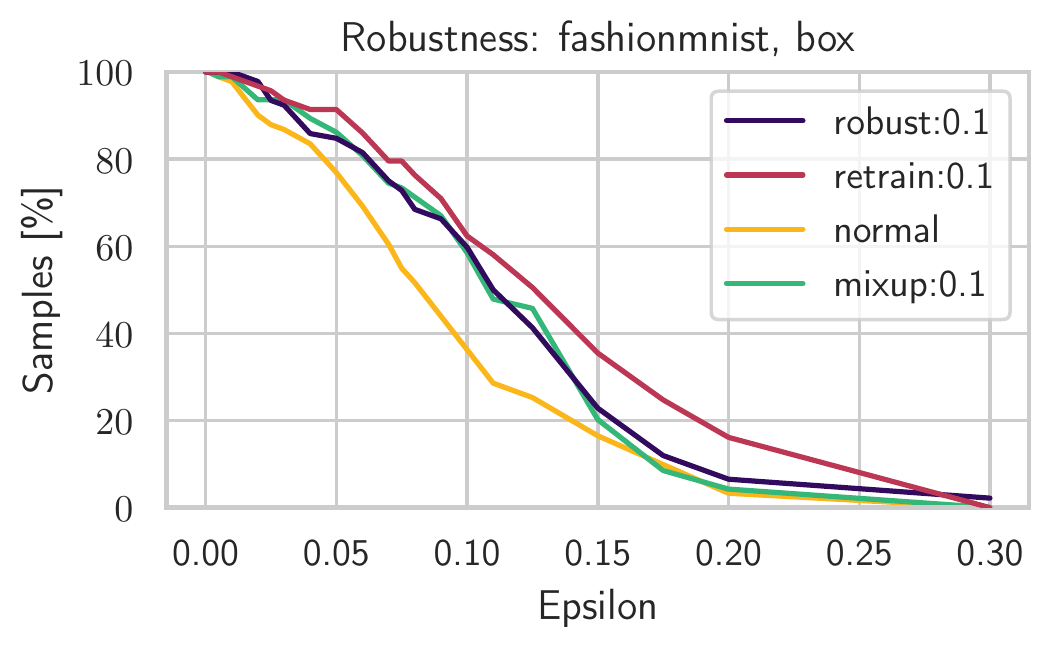} %
	\includegraphics[width=0.32\textwidth]{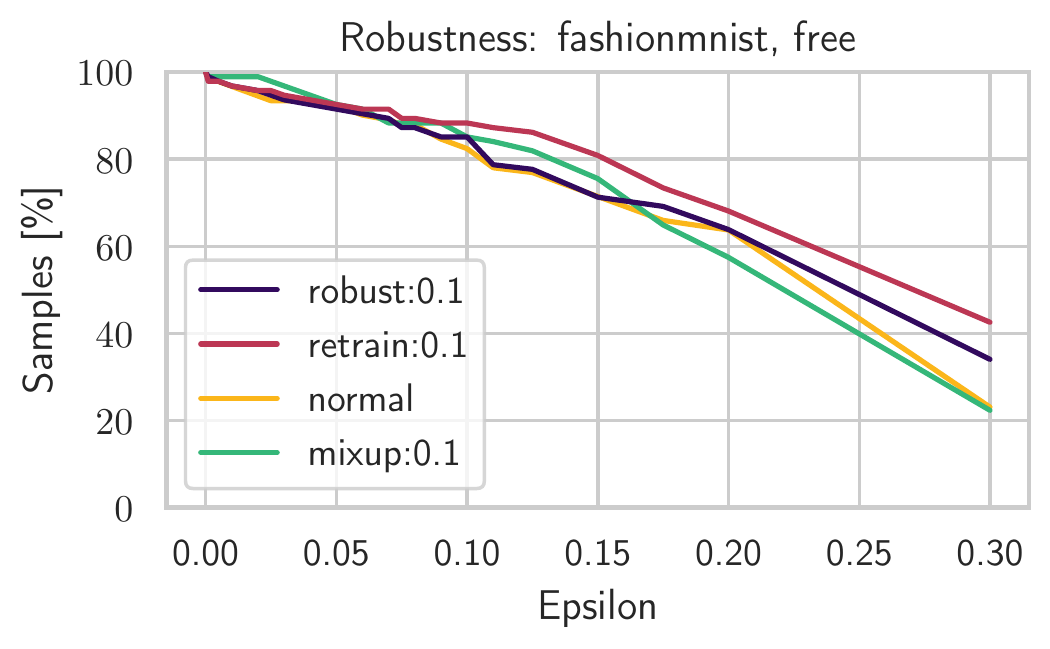} %
	\caption{Evaluation of gg against box-shaped perturbations with $\varepsilon=0.1$ on FashionMNIST (Acc.: normal $91\%$, mixup $94\%$, retrain $93\%$, robust $92\%$).}
	\label{fig:res_class_robtrain}
\end{figure}
Robust training, retraining and mixup enhance the robustness of the neural network on cube-, box- and free-shaped perturbations as well as the accuracy of the neural network. While the performance of mixup and robust training are comparable on box- and free-shaped perturbations, retraining outperforms mixup on all three perturbation shapes.

\textbf{Regression: (Non-)Robustness Analysis and Robust Training.}
Obtaining robust neural networks is desirable in any task but has mainly been studied for the purpose of classification. 
Classifiers are robust if an input~$x$ and all points in its neighborhood are assigned to the same label. 
In regression tasks, there is no equivalent robustness definition, because outputs are continuous and not categorical. 
However, intuitively, regression models are robust if close inputs result in close outputs. 
Assume that inputs and outputs are standardized before training, such that all features are on an equal scale. 
The extension~$l_a$ of output feature~$a$ within the reachable set~$R_S$ quantifies robustness: the smaller~$l_a$ is, the more robust is the model. 
The extension is defined by the two most distant points~$u$ and~$v$ within~$R_S$ w.r.t. dimension~$a$:
$l_a = \left| \max_{u \in R_S}  u_a - \min_{v \in R_S} v_a \right|$.
For input features, the extension~$l_{\mathrm{in}}$ is equivalently defined on the input set. In the cube setting,~$l_{\mathrm{in}}$ is the same for all input features.
 
If we have $l_a \leq l_{\mathrm{in}}$ for all output features~$a$, the regression model maps close inputs to close outputs and we consider it as robust. 
We use this robustness definition to define a robust training function based on feature extension and a standard loss function~$L_{\mathrm{val}}$ (e.g. Huber loss): 
\begin{equation}
\begin{aligned}
	L_{\mathrm{rob}} &= L_{\mathrm{val}} + \mathrm{ReLU} \left( \max_{a} l_a - l_{\mathrm{in}} \right)
\end{aligned}
\label{eq:regression_loss}
\end{equation}
If $l_a$ is larger than~$l_{\mathrm{in}}$ the second term of~$L_{\mathrm{rob}}$ is positive, otherwise it is zero. 
We compare four different training modes: normal (training with Huber loss), retrain (warm-start with a normally trained model, and further training with $L_{\mathrm{rob}}$), robust (training with $L_{{rob}}$), and mixup (a training technique that convexly combines inputs, see \cite{zhang2018mixup}). 
Figure~\ref{fig:res_abalone_2_0_075} illustrates the training and robustness analysis, based on the abalone data set (2 hidden layers, first row) and the airfoil dataset (1 hidden layer, second row). 
\begin{figure}[ht]
	\centering
	\includegraphics[width=0.32\textwidth]{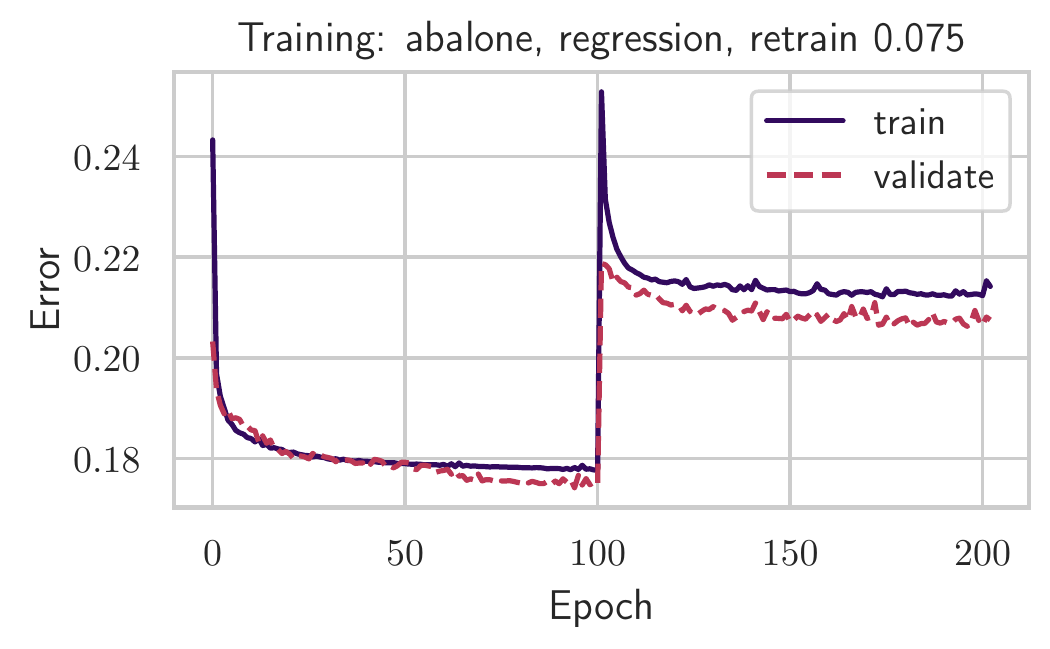}
	\includegraphics[width=0.32\textwidth]{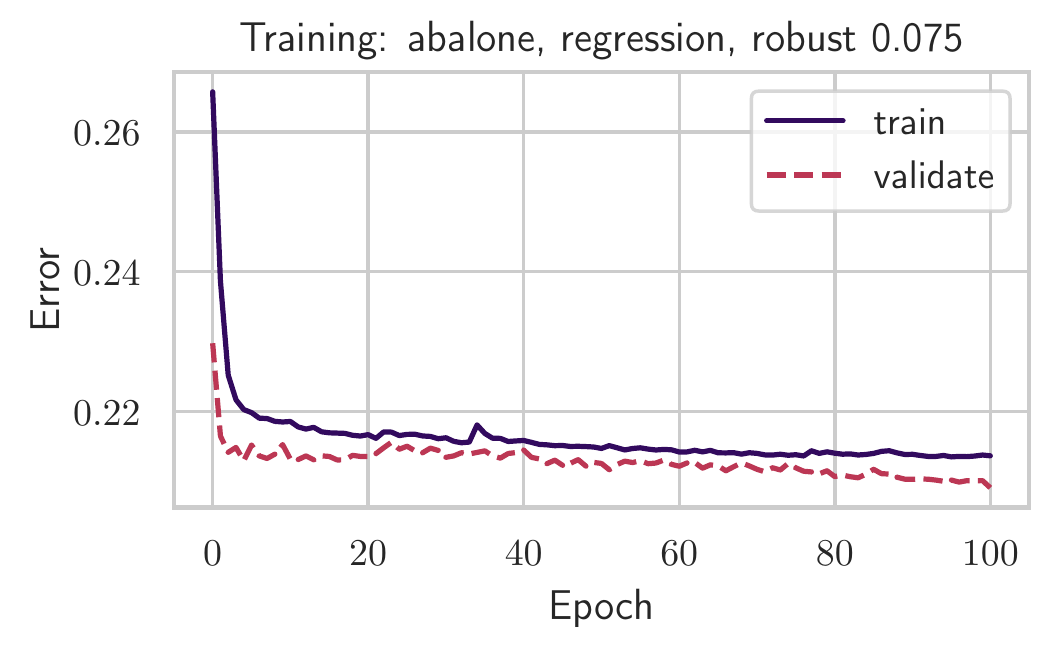}
	\includegraphics[width=0.32\textwidth]{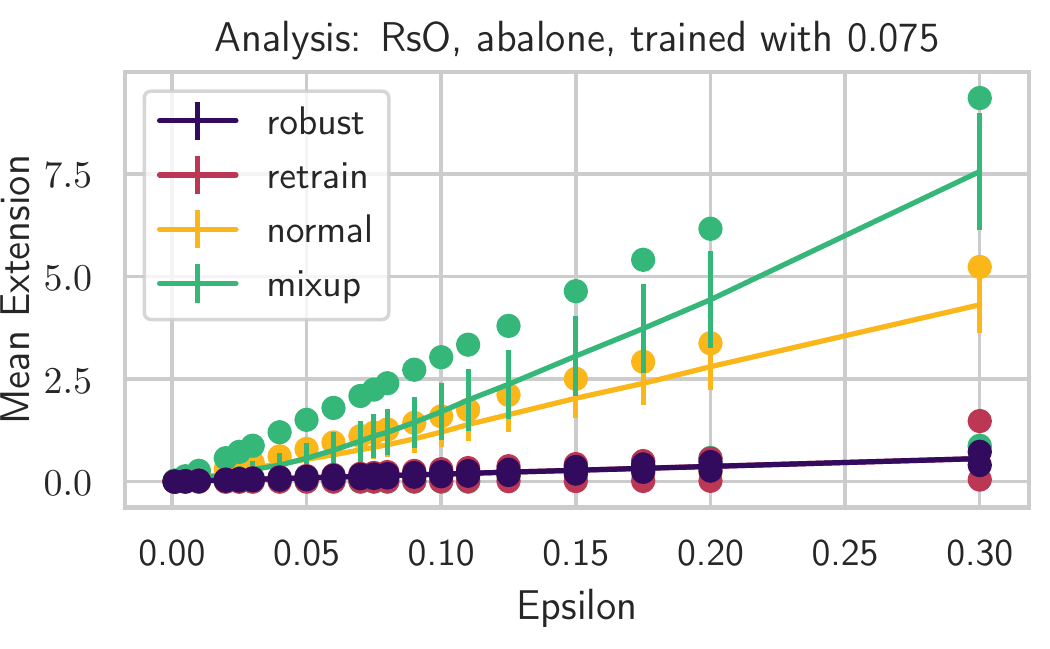}
	\includegraphics[width=0.32\textwidth]{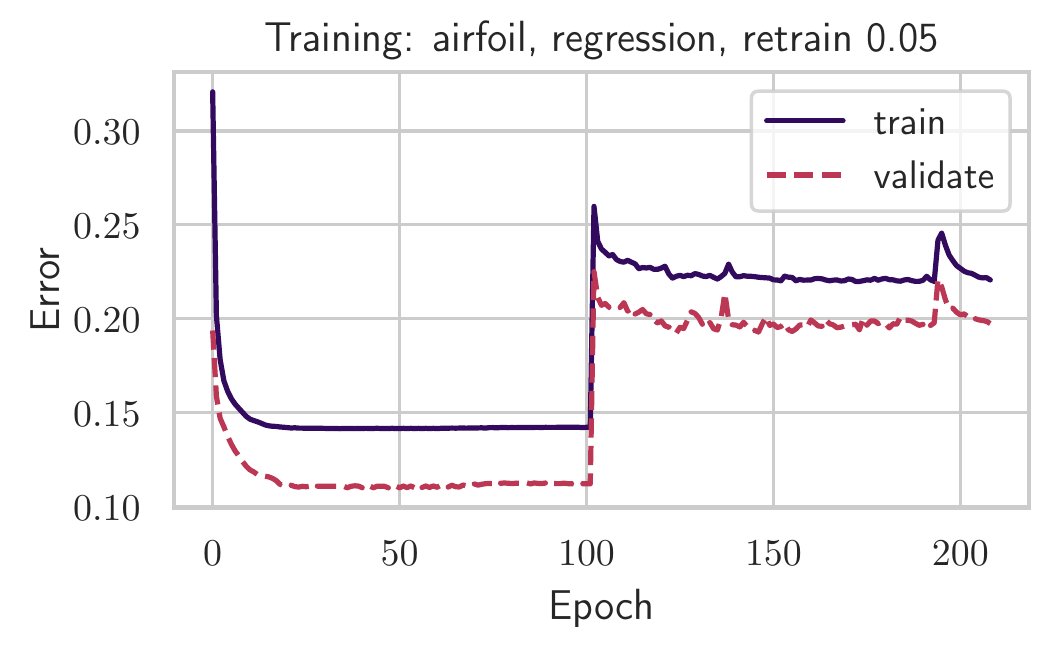}
	\includegraphics[width=0.32\textwidth]{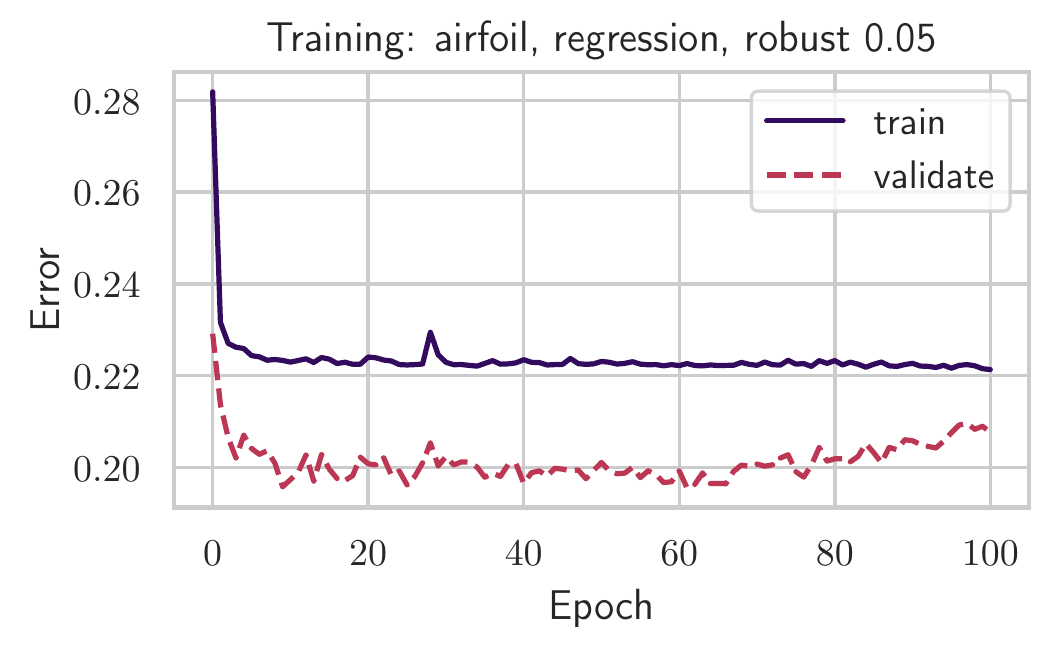}
	\includegraphics[width=0.32\textwidth]{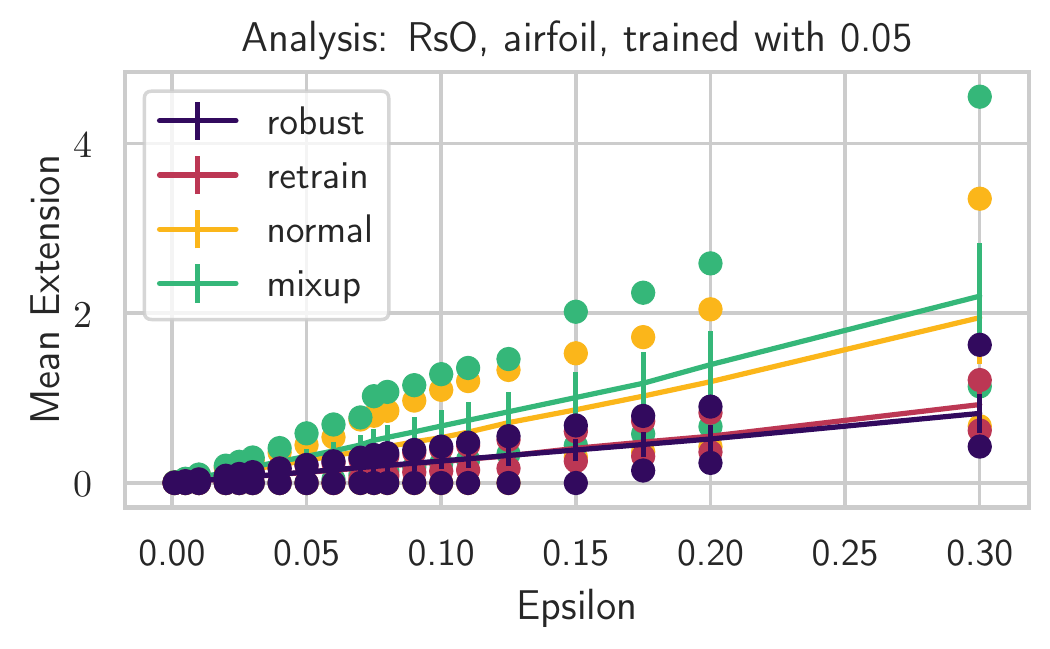}
	\caption{Robust training and robustness analysis of regression models. The smaller the mean extension the more robust is the model. (Error on the test set -- abalone data: normal $0.20$, retrain $0.24$ (start at epoch~$100$), robust $0.24$, mixup $0.20$, airfoil data: normal $0.12$, retrain $0.18$, robust $0.18$, mixup $0.11$).}
	\label{fig:res_abalone_2_0_075}
\end{figure}
While mixup seems to decrease the robustness of regression models, robust training and retraining results in smaller reachable sets and thus ensures that close inputs are mapped to close outputs. 
Thus, robust training and retraining both improve robustness properties without significantly reducing prediction accuracy.

\textbf{Explainability: Feature Ranking for Classifiers \& Regression Models.}
Reachable sets enable the importance of features to be quantified w.r.t.\ a model output. 
To quantify the influence of feature~$f_1$, we define a box-shaped input set with a large perturbation $\delta$ on~$f_1$, while the perturbation on the remaining features is small.
The size of the reachable set corresponding to~$\hat{Z}_{f_1}$ captures the variation in the predictions caused by varying~$f_1$ and thus quantifies the influence of~$f_1$. 
Since the exact size/volume of~$\hat{Z}_{f_1}$ is inefficient to compute~\cite{gover2010}, we approximate it using the interval hull.  
Here, we use the scaled version of the volume that considers the dimensionality~$d$ of the zonotope: 
$V \left(\mathrm{IH} (Z)\right) = \left( 2 \prod_i \delta g_i \right)^{\frac{1}{d}}$
where $\delta g = \sum_{i} |g_i|$.
The volume of the reachable set is approximated by the sum of all interval hull volumes. 
\begin{figure}[ht]
	\centering
	\includegraphics[width=0.32\textwidth]{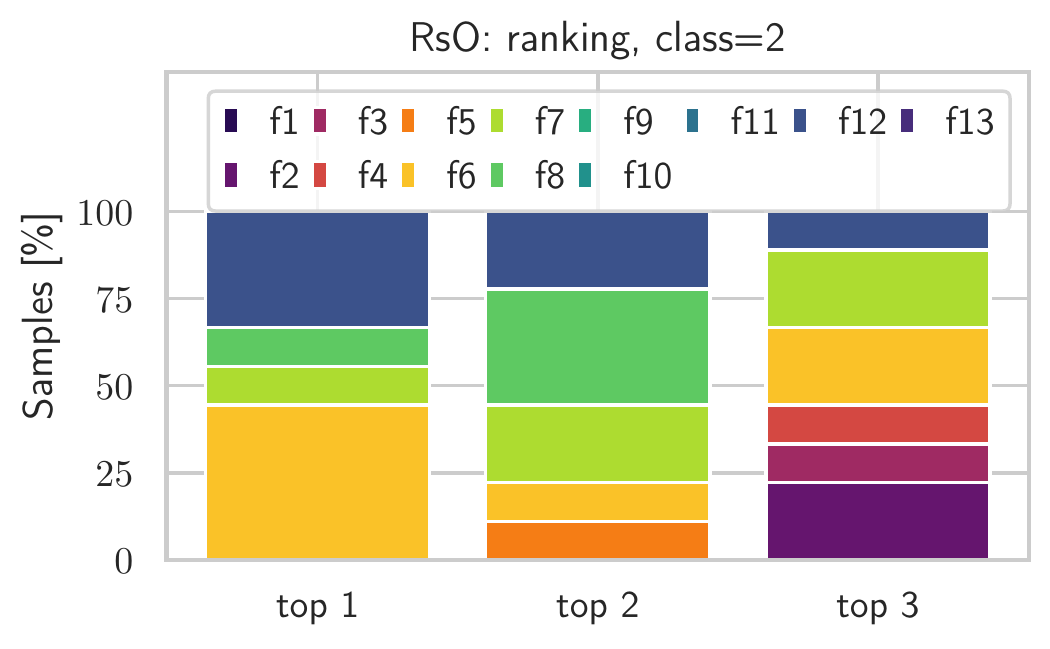}
	\includegraphics[width=0.32\textwidth]{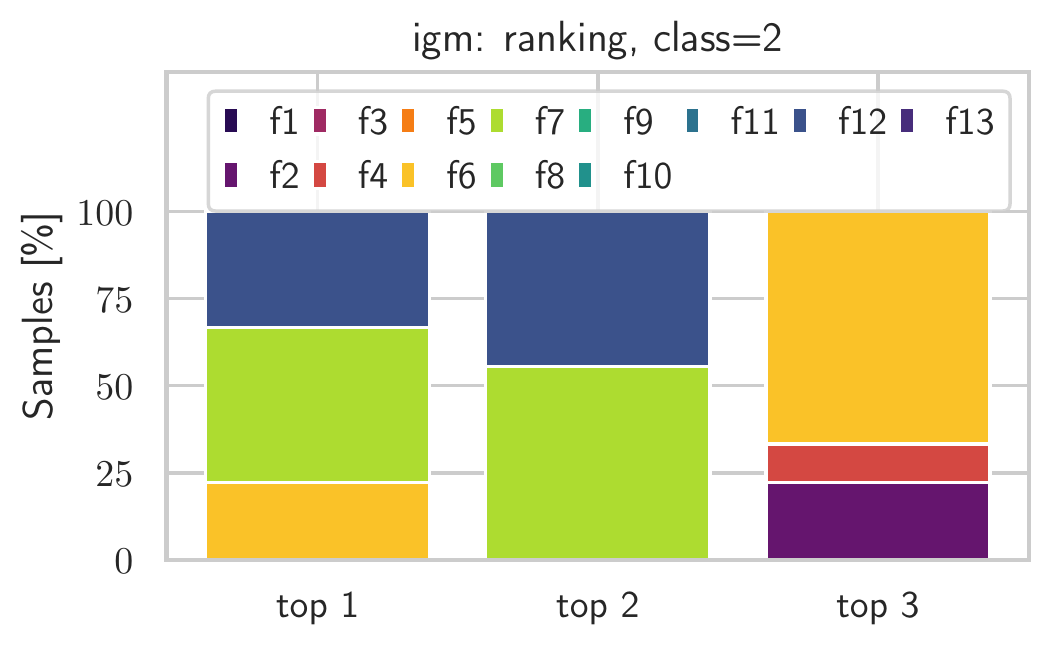}
	\includegraphics[width=0.32\textwidth]{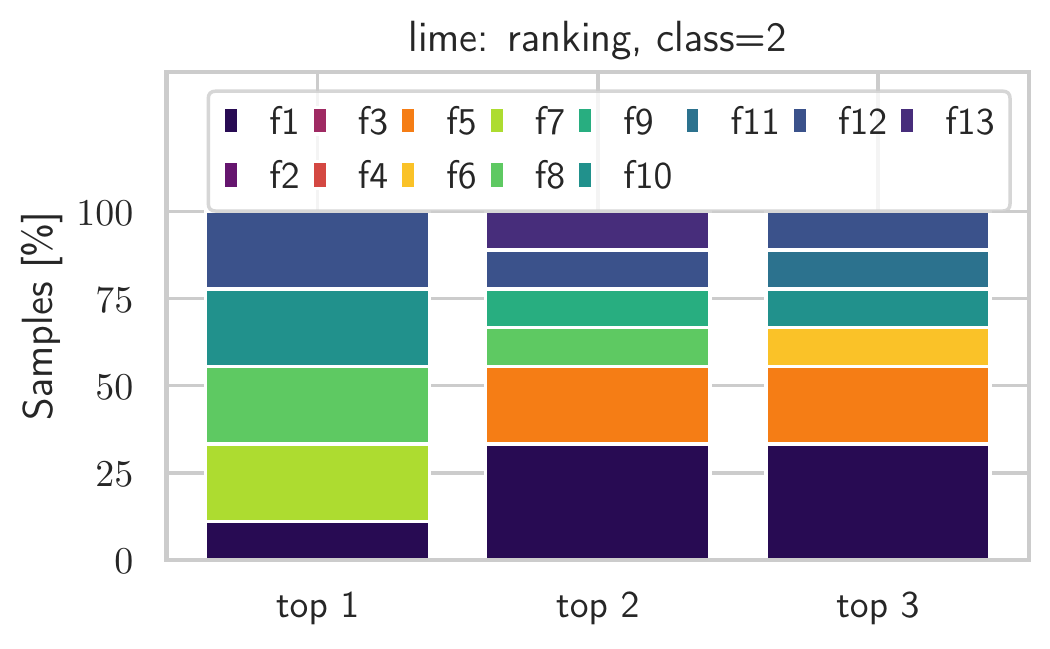}
	\caption{Ranking (top $3$ features) for samples (y-axis) of the wine data set (class 2, 13 features, 1 hidden layer, neural network accuracy $93.3~\%$) computed by RsO (left), igm (middle) and lime (right).}
	\label{fig:res_wine_explain}
\end{figure}
	Figure~\ref{fig:res_wine_explain} illustrates the three most important features for samples of the wine data set 
	computed by our RsO approach in comparison with two other approaches: the integrated gradients method (igm) \cite{sundararajan2017igm} and local interpretable model-agnostic explanations (lime) \cite{lime}. 
	RsO identifies four possibilities for the most important feature: $f12$ (blue, $\approx 30\%$ of samples), $f9$ (teal, $\approx 10\%$ of samples), $f7$ (bright green, $\approx 50\%$ of samples) and $f6$ (yellow, $\approx 20\%$ of samples). 
	Igm identifies three of these possibilities, while lime identifies five possibilities. 
	Overall, the rankings of RsO, igm and lime are of different complexity in terms of different features. The most (second-most/third-most) important feature identified by igm adopts 2-3 possibilities, by lime 5-6 possibilities and by RsO 4-6 possibilities. Consequently, the complexity of the feature ranking computed by RsO is between the one obtained by igm and lime.

\textbf{Reachable Set Approximation: Analysis of the Limits.}\label{sec:limits}
RsU and RsO approximate the reachable set of a ReLU network layer-by-layer. Within each layer, they compute a linear transformation (defined by the weights and biases) and approximate the outcome of applying ReLU by a set of convex subsets (zonotopes). The number of subsets required to approximate ReLU(Z) is the bottleneck of our approaches. Worst case, applying ReLU on $Z = (c \mid G)$, $c \in \mathbb{R}^D$, $G \in \mathbb{R}^{n \times D}$ results in $2^D$ subsets/zonotopes.
Considering a neural network with $K$ layers of $D_1, D_2, D_3, \dots D_K$ neurons, the reachable set approximation requires up to 
$2^{\sum_{k=1}^{K} D_k}$
subsets/zonotopes. 
Figure~\ref{fig:res_limits} illustrates that the run time of RsO linearly increases with the number of subsets and thus exponentially increases with the number of neurons in the worst case. Thus, the number of neurons limits the applicability of our approaches on large neural networks. 
\begin{wrapfigure}{r}{0.45\textwidth}
	\centering
	\resizebox{0.45\textwidth}{!}{
		\includegraphics[width=0.4\textwidth]{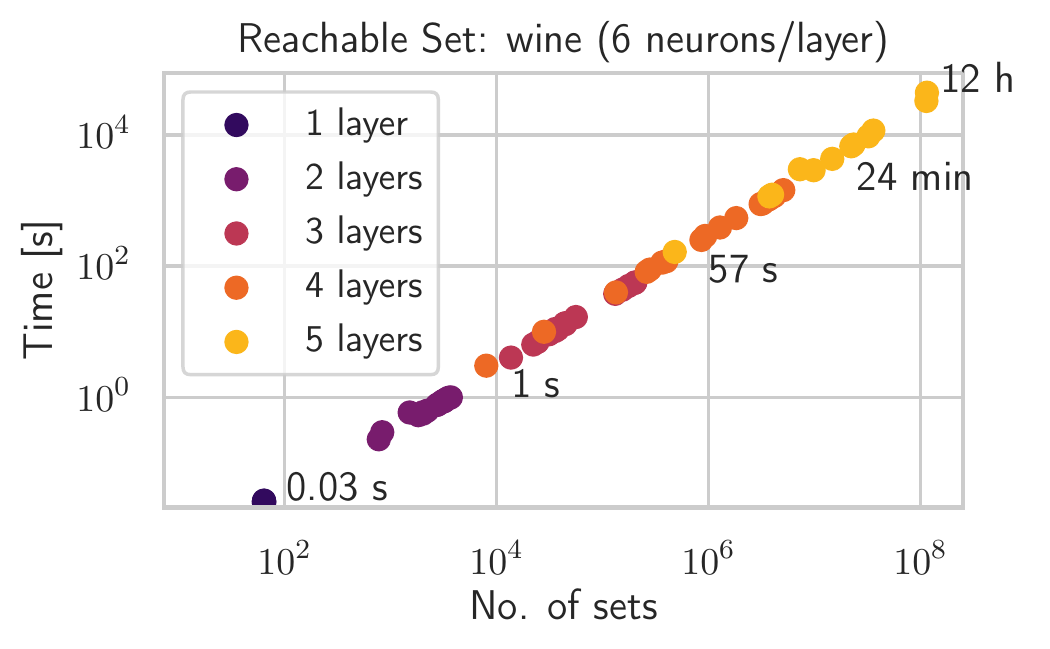}
	}
	\caption{
		Run time of RsO vs. worst case no. of subsets required to approximate the reachable set (wine data set, acc. 1,2 layers: $93\%$, acc. 3-5 layers: $97\%$).
	}
	\label{fig:res_limits}
\end{wrapfigure}
To improve this, we propose an extension, which restricts the amplification number and the total number of zonotopes (see \hyperlink{balancing}{Section Balancing approximation tightness and run time}), which is applicable to larger neural networks. Results on this extension on robustness verification and for the analysis of autoencoders are presented in the next section and in the appendix (see Section~\ref{sec:larger_nn}).

\textbf{Autoencoder Analysis.}
To illustrate the strength of our approach, we compare reachable sets obtained by RsO and RsU with a sampling-based set approximation. We approximate the reachable set of an autoencoder (three hidden layers, $60 \times 30 \times 60$ neurons) with respect to a cube shaped input set with~$\varepsilon=0.001$. 
RsO and RsU are restricted such that the maximum amplification of a zonotope is~$A=100$ and the overall number of zonotopes is less or equal to~$B=1000$. 
To compare with RsO and RsU we introduce a simple baseline based on sampling. 
This sampling approach chooses~$10^9$ points among the vertices of the cube shaped input set and computes the corresponding outputs. The set spanned by these~$10^9$ outputs is used to approximate the exact reachable set.

\begin{figure*}[ht!]
	\flushright
	\includegraphics[width=0.24\textwidth]{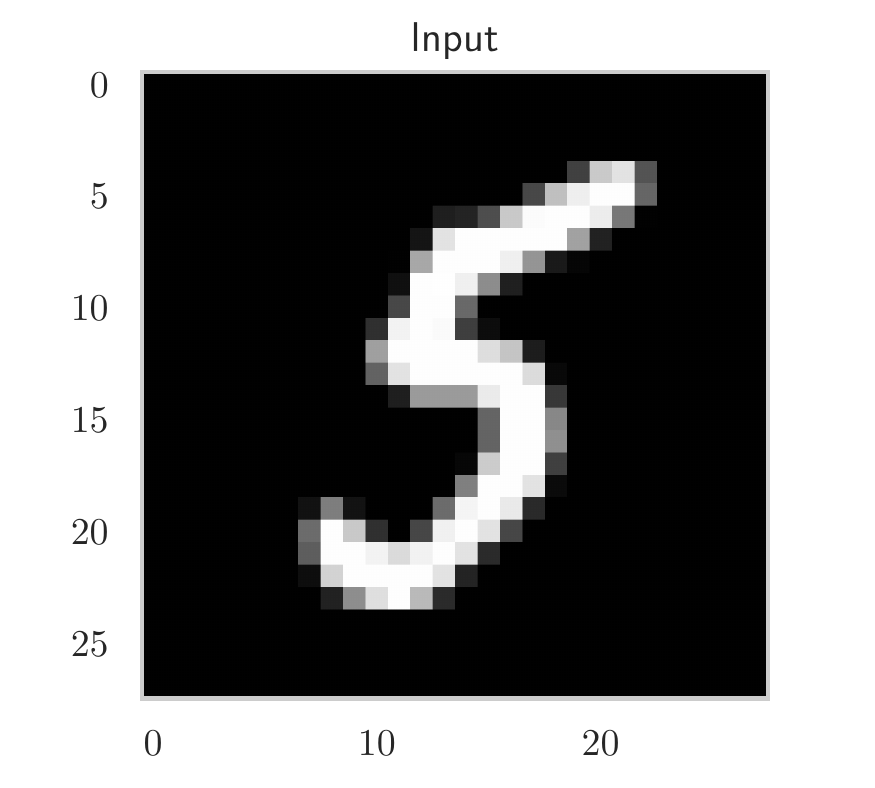}
	\includegraphics[width=0.24\textwidth]{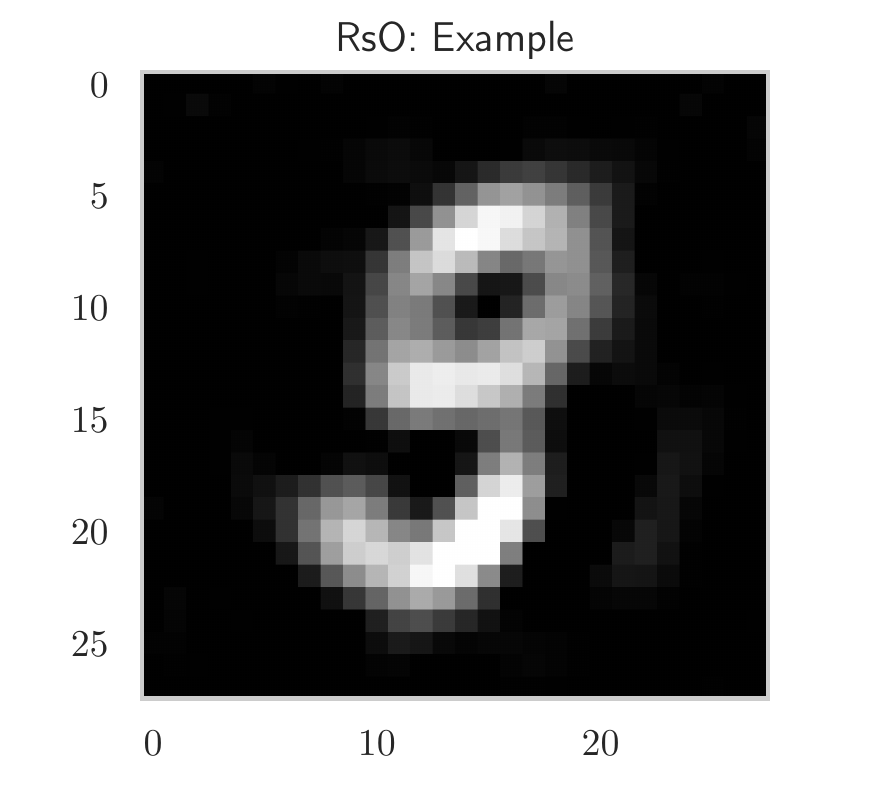}
	\includegraphics[width=0.24\textwidth]{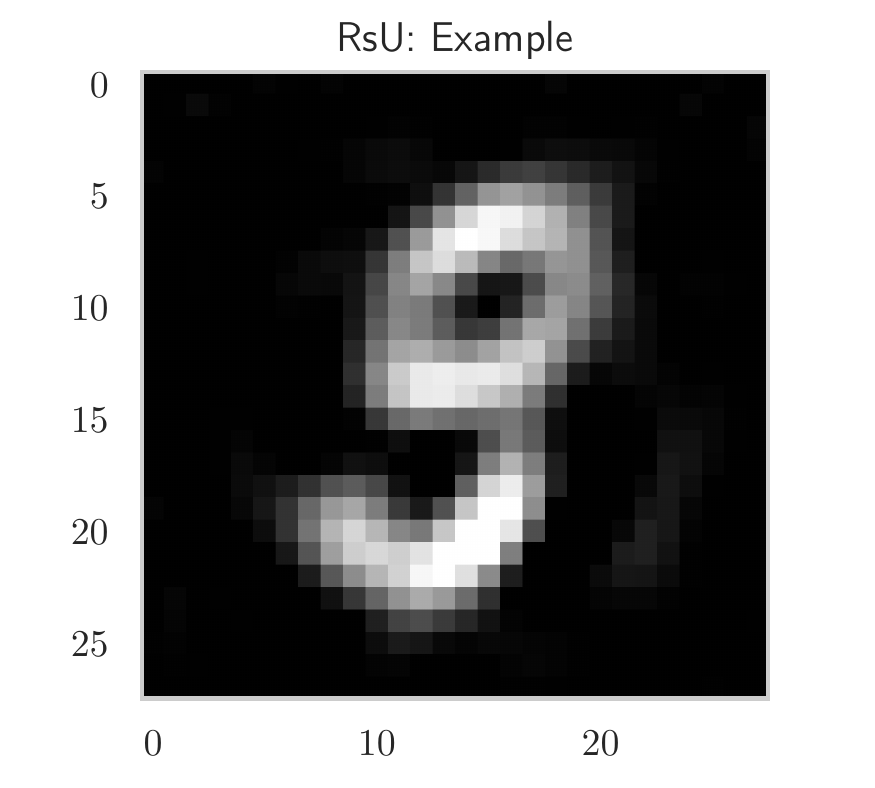}
	\includegraphics[width=0.24\textwidth]{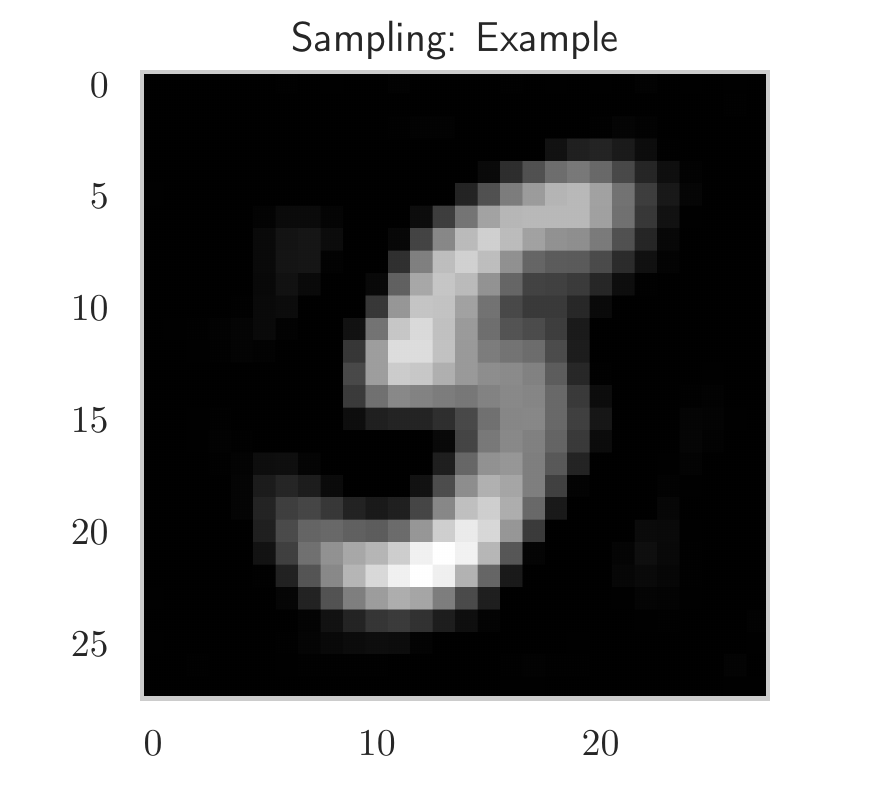}
	\includegraphics[width=0.235\textwidth]{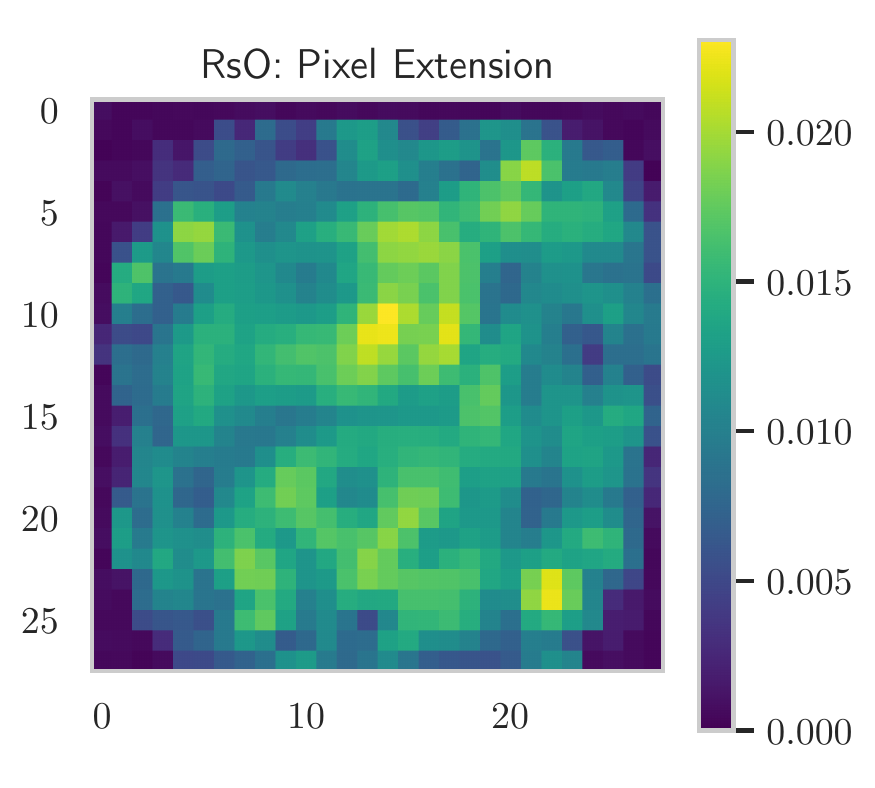}
	\includegraphics[width=0.235\textwidth]{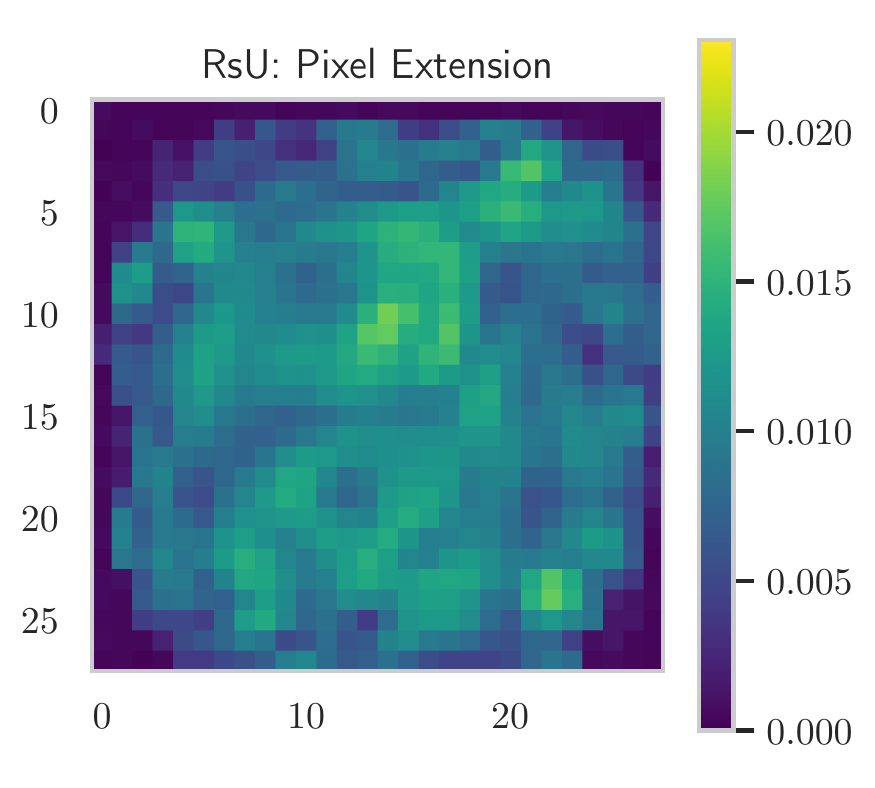}
	\includegraphics[width=0.235\textwidth]{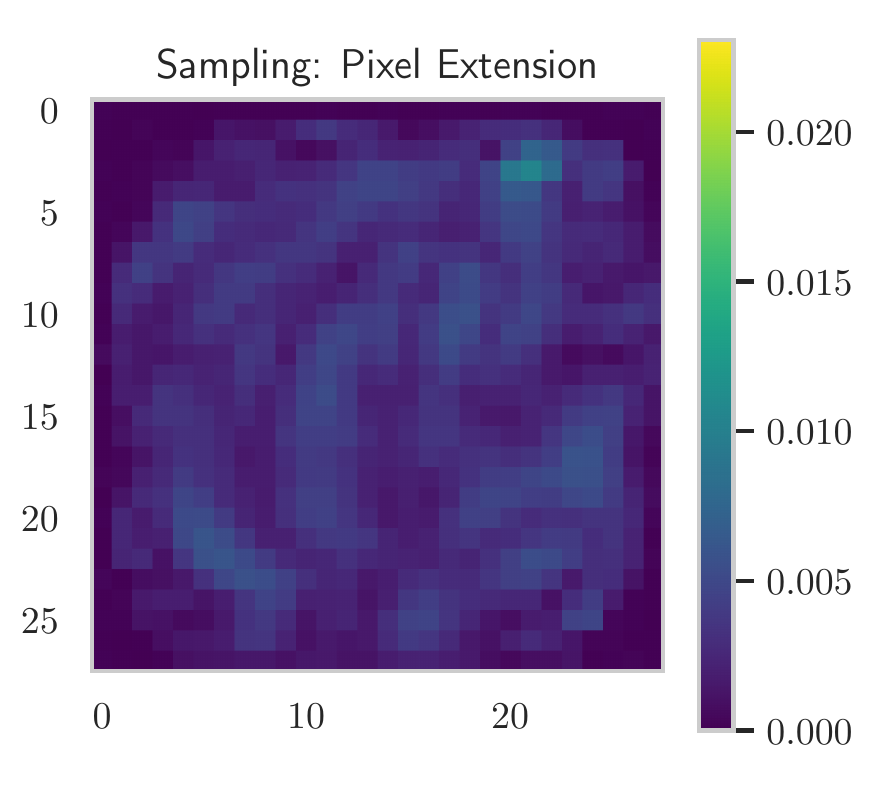}
	\caption{Analysis of an autoencoder (MNIST, cube, $\varepsilon=0.001$). First row: input image, output image drawn from the reachable set approximated by RsO (second), RsU (third) and sampling (fourth). Second row: extension/size of the pixel range corresponding to the reachable sets computed by RsO (left), RsU (middle) and sampling (right). The smaller the ranges computed by RsO and the larger the ranges computed by RsU or sampling the better is the performance.}
	\label{fig:autoencoder_mnist}
\end{figure*}

Since we consider autoencoder models, the reachable set consists of pictures from the same space as the input. 
To visualize the properties of the reachable sets computed by RsO, by RsU and by the sampling approach, we draw example pictures from the reachable sets. Furthermore, we compute the extension/size of the range of each pixel based on the reachable set under consideration (Figure~\ref{fig:autoencoder_mnist}).

Even though we restrict the number of convex subsets to~$1000$, RsO and RsU result in similar example pictures and similar extensions for each pixel (see Figure~\ref{fig:autoencoder_mnist}, second row and third row). 
This illustrates that our approximations are tight and close to the exact reachable set, \textit{since the exact reachable set is enclosed by the under- and over-approximation.} 
In comparison to RsU, the sampling approach results in pixel extensions that are about two times smaller/worse and example pictures that are too close to the image reconstructed from the original input. 
Thus, sampling $10^9$ instances from the input set and computing the corresponding outputs still leads to a dramatic underestimation of the exact reachable set. 
This shows that RsU outperforms the sampling approach, even if we restrict the overall number of zonotopes and the possible amplification. 
In conclusion, these results highlight the fact that computing an upper bound (RsO) and a lower bound (RsU) to the reachable set of neural networks provides more information on the mapping of networks than sampling.

\section{Conclusion}
\label{sec:conclusion}

We propose RsO and RsU as two efficient approaches for over- and under-approxi\-mating the reachable sets of ReLU networks. 
Approximated reachable sets are applicable to the analysis of neural network properties: 
we analyze and enhance the (non-)robustness properties of both classifiers and regression models. 
Our approach outperforms PGD attacks as well as state-of-the-art methods of verification for classifiers with respect to non-norm bound perturbations. 
Reachable sets provide more information than a binary robustness certificate. We use this information for class-specific verification, robustness quantification, robust training,  distinguishing between reliable and non-reliable predictions, ranking features according to their influence on a prediction and analyze autoencoders.

\section*{Acknowledgements}

This research was supported by BMW AG. 
We would like to thank Marten Lienen for help with the toolbox that was used to compute the exact reachable set.

\bibliographystyle{plain}
\bibliography{literatur}

\begin{thebibliography}{10}

\bibitem{bastani2016}
O.~Bastani, Y.~Ioannou, L.~Lampropoulos, D.~Vytiniotis, A.~V. Nori, and
  A.~Criminisi.
\newblock Measuring neural net robustness with constraints.
\newblock In {\em NeurIPS}, volume~29, 2016.

\bibitem{airfoil}
T.~F. Brooks, D.~S. Pope, and A.~M. Marcolini.
\newblock Airfoil self-noise and prediction.
\newblock {\em NASA Technical Reports}, 1989.

\bibitem{bunel2018}
R.~Bunel, I.~Turkaslan, P.~H. Torr, P.~Kohli, and M.~P. Kumar.
\newblock A unified view of piecewise linear neural network verification.
\newblock In {\em NeurIPS}, volume~31, page 4795–4804, 2018.

\bibitem{ucirepo}
D.~Dua and C.~Graff.
\newblock {UCI} machine learning repository.
\newblock In {\em University of California}, 2017.

\bibitem{ehlers2017}
R.~Ehlers.
\newblock Formal verification of piece-wise linear feed-forward neural
  networks.
\newblock In {\em Automated Technology for Verification and Analysis}, pages
  269--286, 2017.

\bibitem{iris}
R.~A. Fisher.
\newblock The use of multiple measurements in taxonomic problems.
\newblock {\em Annals of Eugenics}, 1936.

\bibitem{wine}
M.~Forina, R.~Leardi, C.~Armanino, and S.~Lanteri.
\newblock Parvus: An extendable package of programs for data exploration.
\newblock {\em Journal of Chemometrics}, 1990.

\bibitem{gehr2018}
T.~Gehr, M.~Mirman, D.~Drachsler-Cohen, P.~Tsankov, S.~Chaudhuri, and
  M.~Vechev.
\newblock Ai2: Safety and robustness certification of neural networks with
  abstract interpretation.
\newblock In {\em IEEE Symposium on Security and Privacy}, pages 3--18, 2018.

\bibitem{goodfellow2014}
I.~J. Goodfellow, J.~Shlens, and C.~Szegedy.
\newblock Explaining and harnessing adversarial examples.
\newblock {\em ICLR}, 2015.

\bibitem{gover2010}
E.~Gover and N.~Krikorian.
\newblock Determinants and the volumes of parallelotopes and zonotopes.
\newblock {\em Linear Algebra and its Applications}, 433:28--40, 2010.

\bibitem{gowal2018}
S.~Gowal, K.~Dvijotham, R.~Stanforth, R.~Bunel, C.~Qin, J.~Uesato,
  R.~Arandjelovic, R.~Mann, and P.~Kohli.
\newblock Scalable verified training for provably robust image classification.
\newblock In {\em ICCV}, pages 4841--4850, 2019.

\bibitem{housing}
D.~Harrison and Rubinfeld~D. L.
\newblock Hedonic prices and the demand for clean air.
\newblock {\em Journal of Environmental Economics and Management}, 5:81--102,
  1978.

\bibitem{hein2017}
M.~Hein and M.~Andriushchenko.
\newblock Formal guarantees on the robustness of a classifier against
  adversarial manipulation.
\newblock In {\em NeurIPS}, volume~30, 2017.

\bibitem{tissue}
J.~Jossinet.
\newblock Variability of impedivity in normal and pathological breast tissue.
\newblock {\em Medical and Biological Engineering and Computing}, 34:346--350,
  1996.

\bibitem{katz2017}
G.~Katz, C.~W. Barrett, D.~L. Dill, K.~Julian, and M.~J. Kochenderfer.
\newblock Reluplex: An efficient {SMT} solver for verifying deep neural
  networks.
\newblock {\em CAV}, 10426:97--117, 2017.

\bibitem{kuehn1998}
W.~K{\"u}hn.
\newblock Rigorously computed orbits of dynamical systems without the wrapping
  effect.
\newblock {\em Computing}, 61:47–67, 1998.

\bibitem{mnist}
Y.~LeCun, C.~Cortes, and C.~J. Burges.
\newblock Mnist handwritten digit database.
\newblock {\em Courant Institute, NYU}, 2010.

\bibitem{julia_toolbox}
C.~Liu, T.~Arnon, C.~Lazarus, C.~W. Barrett, and M.~J. Kochenderfer.
\newblock Algorithms for verifying deep neural networks.
\newblock {\em Foundations and Trends in Optimization}, 4:244--404, 2019.

\bibitem{pgd_attack}
A.~Madry, A.~Makelov, L.~Schmidt, D.~Tsipras, and A.~Vladu.
\newblock Towards deep learning models resistant to adversarial attacks.
\newblock {\em ICLR}, 2018.

\bibitem{mirman2018}
M.~Mirman, T.~Gehr, and M.~Vechev.
\newblock Differential abstract interpretation for provably robust neural
  networks.
\newblock In {\em ICML}, volume~80, pages 3578--3586, 2018.

\bibitem{abalone}
W.~J. Nash, T.~L. Sellers, S.~R. Talbot, A.~J. Cawthorn, and W.~B. Ford.
\newblock The population biology of abalone (haliotis species) in tasmania. i.
  blacklip abalone (h. rubra) from the north coast and islands of bass strait.
\newblock {\em Sea Fisheries Division, Technical Report}, 48, 1994.

\bibitem{raghunathan2018}
A.~Raghunathan, J.~Steinhardt, and P.~Liang.
\newblock Semidefinite relaxations for certifying robustness to adversarial
  examples.
\newblock In {\em NeurIPS}, volume~31, 2018.

\bibitem{lime}
M.~T. Ribeiro, S.~Singh, and C.~Guestrin.
\newblock "why should i trust you?": Explaining the predictions of any
  classifier.
\newblock In {\em SIGKDD}, page 1135–1144, 2016.

\bibitem{ruan2018}
W.~Ruan, X.~Huang, and M.~Kwiatkowska.
\newblock Reachability analysis of deep neural networks with provable
  guarantees.
\newblock In {\em IJCAI}, pages 2651--2659, 2018.

\bibitem{singh2019krelu}
G.~Singh, R.~Ganvir, M.~P\"uschel, and M.~Vechev.
\newblock Beyond the single neuron convex barrier for neural network
  certification.
\newblock {\em NeurIPS}, 32, 2019.

\bibitem{deepzono}
G.~Singh, T.~Gehr, M.~Mirman, M.~P\"{u}schel, and M.~Vechev.
\newblock Fast and effective robustness certification.
\newblock In {\em NeurIPS}, volume~31, 2018.

\bibitem{deeppoly}
G.~Singh, T.~Gehr, M.~P\"{u}schel, and M.~Vechev.
\newblock An abstract domain for certifying neural networks.
\newblock {\em Proceedings of the ACM Programming Languages}, 3, 2019.

\bibitem{refinezono}
G.~Singh, T.~Gehr, M.~P\"{u}schel, and M.~Vechev.
\newblock Boosting robustness certification of neural networks.
\newblock {\em ICLR}, 2019.

\bibitem{steinhardt2017}
J.~Steinhardt, P.~W. Koh, and P.~Liang.
\newblock Certified defenses for data poisoning attacks.
\newblock In {\em NeurIPS}, volume~30, page 3520–3532, 2017.

\bibitem{cancer}
N.~Street, W.~Wolberg, and O.~L. Mangasarian.
\newblock Nuclear feature extraction for breast tumor diagnosis.
\newblock In {\em Biomedical Image Processing and Biomedical Visualization},
  volume 1905, pages 861--870, 1999.

\bibitem{sundararajan2017igm}
M.~Sundararajan, A.~Taly, and Q.~Yan.
\newblock Axiomatic attribution for deep networks.
\newblock {\em ICML}, 79, 2017.

\bibitem{szegedy2014}
C.~Szegedy, W.~Zaremba, I.~Sutskever, J.~Bruna, D.~Erhan, I.~Goodfellow, and
  R.~Fergus.
\newblock Intriguing properties of neural networks.
\newblock {\em ICLR}, 2014.

\bibitem{tjeng2019}
V.~Tjeng, K.~Y. Xiao, and R.~Tedrake.
\newblock Evaluating robustness of neural networks with mixed integer
  programming.
\newblock {\em ICLR}, 2019.

\bibitem{wong2017}
E.~Wong and J.~Z. Kolter.
\newblock Provable defenses against adversarial examples via the convex outer
  adversarial polytope.
\newblock {\em ICML}, 80:5283--5292, 2018.

\bibitem{exactreach}
W.~Xiang, H.{-}D. Tran, and T.~Johnson.
\newblock Reachable set computation and safety verification for neural networks
  with relu activations.
\newblock {\em CoRR}, 2017.

\bibitem{fashionmnist}
H.~Xiao, K.~Rasul, and R.~Vollgraf.
\newblock Fashion-mnist: a novel image dataset for benchmarking machine
  learning algorithms.
\newblock {\em Zalando SE}, 2017.

\bibitem{zhang2018mixup}
H.~Zhang, M.~Cisse, Y.~N. Dauphin, and D.~Lopez-Paz.
\newblock mixup: Beyond empirical risk minimization.
\newblock {\em ICLR}, 2018.

\end{thebibliography}

\newpage
\section{Appendix}
\label{sec:appendix}

\subsection{Pseudocode}
\label{sec:pseudocode}

Algorithm~\ref{algo:rso} and~\ref{algo:rsu} show how we under-/over-approximate the outcome of applying ReLU on a zonotope, while Algorithm~\ref{algo:propoagate_nn} and~\ref{algo:propoagate_nn_limit} show how the reachable set of a neural network is approximated with and without limitations on the number of used subsets.

\begin{algorithm}
	\KwIn{Zonotope $Z = (c \mid G)$, Maximum number MaxAmp of subsets used to approximate one zonotope }
	\KwOut{Set of zonotopes $RS = \{\hat{Z}_n\}_n$ that over-approximates ReLU(Z)}
	Compute index sets $R_n$, $R$ (see Equation~\ref{eq:index_sets})\;
	Project $Z$: $\forall i, \forall d \in R_n: c_d = 0$ and $G[i,d] = 0$\;
	Compute quadrants with $S_k \subseteq Z$: $\{R_k\}_k = \mathcal{P} (R) =$ power set of $R$\;
	Initialize $RS = \{\}$\;
	\If{$|\mathcal{P}(R)| >$ MaxAmp}{
		Compute interval hull $\mathrm{IH}\left(Z\right) := \left[c - \sum_{i} |g_i|, c + \sum_{i} |g_i| \right]$\;
		Restrict $\mathrm{IH}$ to its positive parts \;
		$Z_{\mathrm{IH}} = (c | G_{\mathrm{IH}})$ with $G_{\mathrm{IH}} = \mathrm{diag} (\sum_{i} |g_i|)$ \;
		$RS = \{ \mathrm{Z_{\mathrm{IH}}} \}$ \;
	}
	\Else{
		\For{$R_k \in \mathcal{P}(R)$}{
			Overapproximate $S_k$ by $\hat{Z}_k$ (see Equation~\ref{eq:over_zono})\;
			$RS = RS \cup \{\hat{Z}_k\}$
		}
	}
	\Return{$RS$}\;
	\caption{{\sc RsO} over-approximates applying ReLU on a zonotope}
	\label{algo:rso}
\end{algorithm}

\begin{algorithm}
	\KwIn{Zonotope $Z = (c \mid G)$, Maximum number MaxAmp of subsets used to approximate one zonotope }
	\KwOut{Set of zonotopes $RS = \{\hat{Z}_n\}_n$ that under-approximates ReLU(Z)}
	Compute index sets $R_n$, $R$ (see Equation~\ref{eq:index_sets})\;
	Project $Z$: $\forall i, \forall d \in R_n: c_d = 0$ and $G[i,d] = 0$\;
	Compute quadrants with $S_k \subseteq Z$: $\{R_k\}_k = \mathcal{P} (R) =$ power set of $R$\;
	Initialize $RS = \{\}$\;
	\For{$R_k \in \mathcal{P}(R)$}{
		Underapproximate $S_k$ by $\hat{Z}_k$ (see Equation~\ref{eq:under_opt})\;
		$RS = RS \cup \{\hat{Z}_k\}$ \;
		\If{$|RS| >$ MaxAmp}{
			break \;
		}
	}
	\Return{$RS$}\;
	\caption{{\sc RsU} under-approximates applying ReLU on a zonotope}
	\label{algo:rsu}
\end{algorithm}

\begin{algorithm}
	\KwIn{Zonotope $Z^0 = (c^0 \mid G^0)$, approximation method (RsO or RsU)}
	\KwOut{Set of zonotopes $RS = \{Z_n\}_n$ that approximates the reachable set}
	Initialize set of zonotopes $RS = \{ Z^0\}_n$ \\
	\For{$k \gets 1$ \textbf{to} $K$ \tcp{iterate over layers}}{ 
		$RS' = \{\}$ \;
		$RS'' = \{\}$ \;
		\For{$Z \in RS$}{
			Linear transformation: $Z' = \mathrm{lintrans}(Z)$ (see Equation \ref{eq:lintrans}) \;
			$RS' = RS' \cup \{Z'\}$ \;
		}
		\If{over approximate} {
			\For{$Z \in RS'$}{
				Apply ReLU activation function: $RS'' = RS'' \cup \mathrm{RsO}(Z, \infty)$  \;
			}
		}
		\If{under approximate} {
			\For{$Z \in RS'$}{
				Apply ReLU activation function: $RS'' = RS'' \cup \mathrm{RsU}(Z, \infty)$ \;
			}
		}
		$RS = RS''$ \;
	}
	\Return{$RS$}\;
\caption{{\sc PropZ} propagates zonotope through ReLU network}
\label{algo:propoagate_nn}
\end{algorithm}

\begin{algorithm}
	\KwIn{Zonotope $Z^0 = (c^0 \mid G^0)$, approximation method (RsO or RsU), Maximum number of zonotopes MaxZono, Maximum amplification MaxAmp}
	\KwOut{Set of zonotopes $RS = \{Z_n\}_n$ that approximates the reachable set}
	Initialize set of zonotopes $RS = \{ Z^0\}_n$ \\
	\For{$k \gets 1$ \textbf{to} $K$ \tcp{iterate over layers}}{ 
		$RS' = \{\}$ \;
		$RS'' = \{\}$
		\For{$Z \in RS$}{
			Linear transformation: $Z' = \mathrm{lintrans}(Z)$ (see Equation \ref{eq:lintrans}) \;
			$RS' = RS' \cup \{Z'\}$ \;
		}
		\If{over approximate} {
			\For{$Z \in RS'$}{
				Apply ReLU activation function: $RS'' = RS'' \cup \mathrm{RsO}(Z, \mathrm{MaxAmp})$  \;
			}
		}
		\If{under approximate} {
			\For{$Z \in RS'$}{
				Apply ReLU activation function: $RS'' = RS'' \cup \mathrm{RsU}(Z, \mathrm{MaxAmp})$ \;
			}
		}
		$RS = RS''$ \;
		\If{$|RS| \geq $ MaxZono}{
			Finde smallest zonotopes $\in RS$ \;
			Remove smallest zonotopes from $RS$ \;
			\If{over approximate}{
				Union smallest zonotopes over approximativly by interval hull\;
				Add union to $RS$\;
			}
		}
	}
	\Return{$RS$}\;
	\caption{{\sc PropZLimit} propagates zonotope through ReLU network (limited no. subsets)}
	\label{algo:propoagate_nn_limit}
\end{algorithm}

\newpage
\subsection{Extension of RsO and RsU for Large(r) Neural Networks}
\label{sec:larger_nn}

The subsection ``Reachable Set Approximation: Analysis of the Limit’’ (page \pageref{sec:limits}) shows that the number of zonotopes required to approximate the reachable set might increase exponentially with the number of neurons of the neural network in the worst case. Thus, we propose an extension (see page \pageref{sec:balancing}: "Balancing approximation tightness and run-time") that allows to restrict the number of total subsets (max. zono.) and the amplification (max. amp.). The maximum amplification is the maximum number of subsets used to approximate ReLU(Z) w.r.t. the zonotope Z subjected to ReLU, while the maximum number of zonotopes is the maximum number of zonotopes w.r.t. to the whole neural network that is used to approximate the reachable set. 
This restriction allows to use RsO for robustness verification of larger neural networks. 
To illustrate how these restrictions affect the tightness of our approximations, we compare the performance of RsO for different max. zono. and max. amp. values on a neural network with 5 hidden layers of $30$ neurons on the FashionMNIST data set (see Figure~\ref{fig:res_extension}). 
Without limitations, RsO might require up to $2^{150} \approx 1.43 \cdot 10^{45}$ subsets. 

\begin{figure}[ht]
	\centering
	\includegraphics[width=0.49\textwidth]{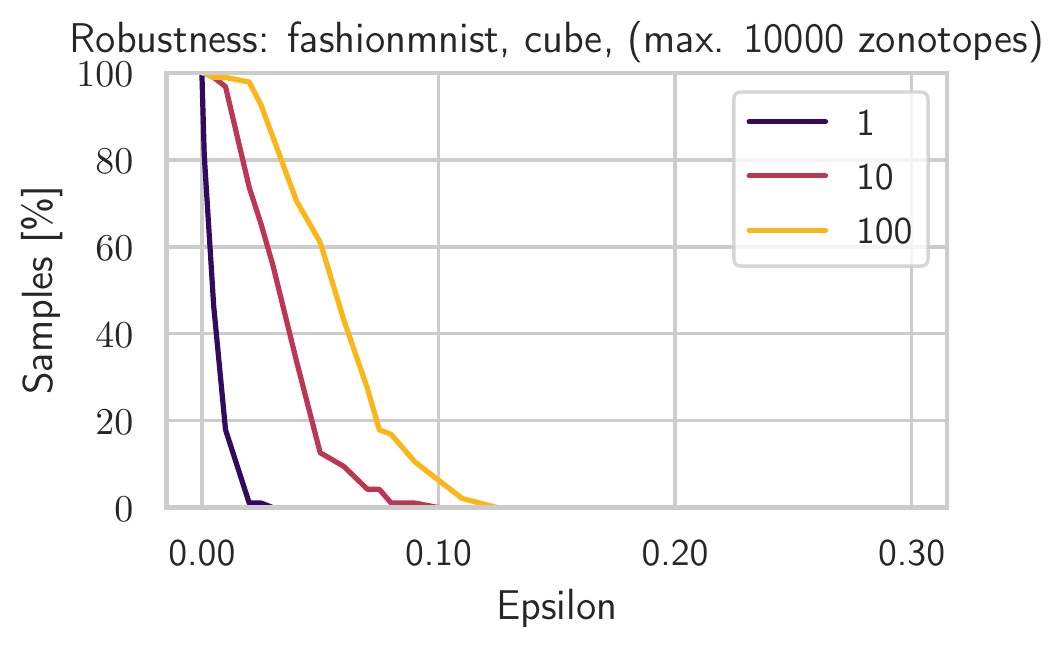}
	\includegraphics[width=0.49\textwidth]{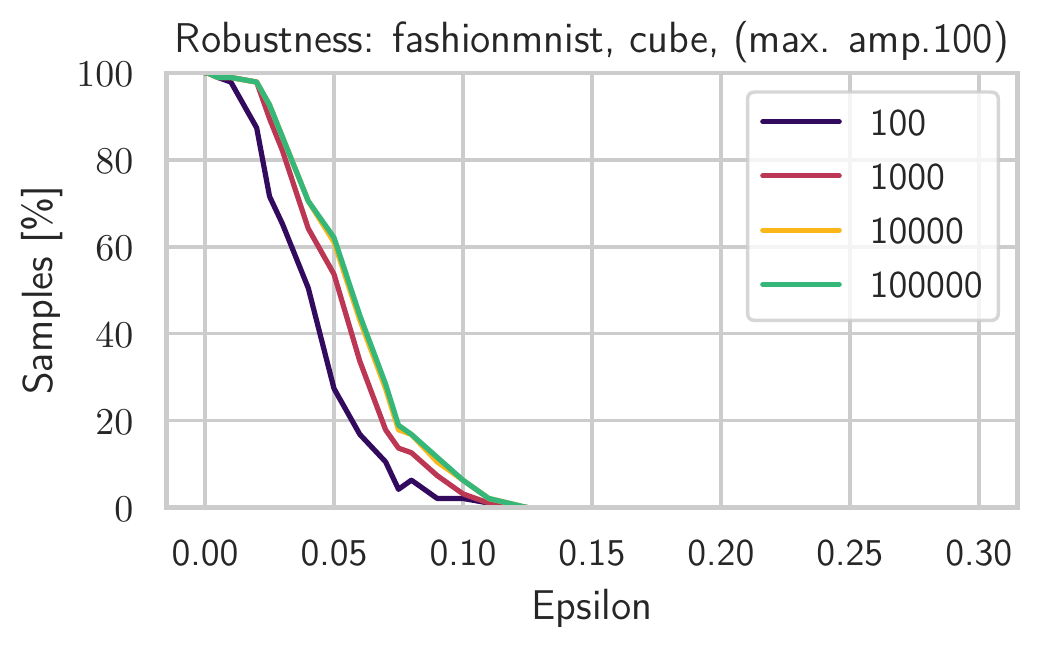}
	\caption{Performance of RsO on robustness verification using fashionmnist (classification acc. $95\%$) with different max. amp. and fixed max. zono. (left), with fixed max. amp. and different max. zono. (right).}
	\label{fig:res_extension}
\end{figure}

Figure~\ref{fig:res_extension} illustrates that the number of robustness certificates increases with the maximum amplification (left plot). Furthermore, the number of robustness certificates increases with max. zono. up to $10,000$, but choosing larger max. zono. does not result in a further increase of robustness certificates (right plot). 
Thus, to obtain tight approximations the max. amp. should be chosen as large as possible and feasible, while the max. zono. should be chosen as small as possible but as large as necessary to obtain the maximum performance.

\end{document}